\def\Figref#1{Figure~\ref{#1}}
\def\secref#1{Section~\ref{#1}}
\def\Apxref#1{Appendix~\ref{#1}}
\def\eqref#1{equation~(\ref{#1})}
\def\Eqref#1{Eq.~(\ref{#1})}
\def\Propref#1{Proposition~\ref{#1}}
\def\Defref#1{Definition~\ref{#1}}
\def\Tableref#1{Table~\ref{#1}}
\def\Lemref#1{Lemma~\ref{#1}}
\def\1{\bm{1}}
\DeclareMathAlphabet{\mathsfit}{\encodingdefault}{\sfdefault}{m}{sl}
\SetMathAlphabet{\mathsfit}{bold}{\encodingdefault}{\sfdefault}{bx}{n}
\def\gA{{\mathcal{A}}}
\def\gB{{\mathcal{B}}}
\def\gE{{\mathcal{E}}}
\def\gG{{\mathcal{G}}}
\def\gH{{\mathcal{H}}}
\def\gL{{\mathcal{L}}}
\def\gR{{\mathcal{R}}}
\def\gS{{\mathcal{S}}}
\def\gT{{\mathcal{T}}}
\def\gX{{\mathcal{X}}}
\def\gZ{{\mathcal{Z}}}
\newcommand{\E}{\mathbb{E}}
\newcommand{\R}{\mathbb{R}}
\DeclareMathOperator*{\argmin}{arg\,min}
\newtheorem{lemma}{Lemma}
\newtheorem{definition}{Definition}
\newtheorem{prop}{Proposition}
\providecommand{\customgenericname}{}
\newcommand{\newcustomtheorem}[2]{%
  \newenvironment{#1}[1]
  {%
   \renewcommand\customgenericname{#2}%
   \renewcommand\theinnercustomgeneric{##1}%
   \innercustomgeneric
  }
  {\endinnercustomgeneric}
}
\tikzset{>=latex}
\tikzstyle{plate caption} = [caption, node distance=0, inner sep=0pt,
\newcommand{\mdp}{GBMDP}
\title{Learning Domain Invariant Representations in Goal-conditioned Block MDPs}
\author{%
  Beining Han \\
  IIIS, Tsinghua University \\
  \texttt{bouldinghan@gmail.com} \\
  \And
  Chongyi Zheng \\
  Carnegie Mellon University \\
  \texttt{chongyiz@andrew.cmu.edu} \\
  \AND
  Harris Chan ~~~~~~ Keiran Paster ~~~~~~ Michael R. Zhang ~~~~~~ Jimmy Ba \\
  University of Toronto \& Vector Institute \\
    \texttt{\{hchan, keirp, michael, jba\}@cs.toronto.edu} \\

}
\begin{document}

\maketitle

\begin{abstract}
Deep Reinforcement Learning (RL) is successful in solving many complex Markov Decision Processes (MDPs) problems. However, agents often face unanticipated environmental changes after deployment in the real world. These changes are often spurious and unrelated to the underlying problem, such as background shifts for visual input agents. Unfortunately, deep RL policies are usually sensitive to these changes and fail to act robustly against them. This resembles the problem of domain generalization in supervised learning. In this work, we study this problem for goal-conditioned RL agents. We propose a theoretical framework in the Block MDP setting that characterizes the generalizability of goal-conditioned policies to new environments. Under this framework, we develop a practical method \textit{PA-SkewFit} that enhances domain generalization. The empirical evaluation shows that our goal-conditioned RL agent can perform well in various unseen test environments, improving by 50\% over baselines.
\end{abstract}

\section{Introduction} \label{sec:intro}

Deep Reinforcement Learning (RL) has achieved remarkable success in solving high-dimensional Markov Decision Processes (MDPs) problems, e.g., Alpha Zero \cite{alphagozero2017} for Go, DQN \cite{dqn2015} for Atari games and SAC \cite{haarnoja2018sac}  for locomotion control. However, current RL algorithms requires massive amounts of trial and error to learn \cite{alphagozero2017, dqn2015, haarnoja2018sac}. They also tend to overfit to specific environments and often fail to generalize beyond the environment they were trained on \cite{packergao}. Unfortunately, this characteristic limits the applicability of RL algorithms for many real world applications. Deployed RL agents, e.g. robots in the field, will often face environment changes in their input such as different backgrounds, lighting conditions or object shapes \cite{julian2020efficient}. Many of these changes are often spurious and unrelated to the underlying task, e.g. control. However, RL agents trained without experiencing these changes are sensitive to the changes and often perform poorly in practice \cite{julian2020efficient, zhang2020inblock, zhang2020inrepr}.

In our work, we seek to tackle changing, diverse problems with goal-conditioned RL agents. Goal-conditioned Reinforcement Learning is a popular research topic as its formulation and method is practical for many robot learning problems \cite{andry2017her, eysenbach2020clearn}. In goal-conditioned MDPs, the agent has to achieve a desired goal state $g$ which is sampled from a prior distribution. The agent should be able to achieve not only the training goals but also new test-time goals. Moreover, in practice, goal-conditioned RL agents often receive high-dimensional inputs for both observations and goals \cite{paster2020glamour, pere2018unsupervisedgoal}. Thus, it is important to ensure that the behaviour of goal-conditioned RL agents is invariant to any irrelevant environmental changes in the input at test time. 
Previous work \cite{zhang2020inblock} tries to address these problems via model bisimulation metric \cite{ferns2011bisimulation}. 
These methods aim to acquire a minimal representation which is invariant to irrelevant environment factors. However, as goal-conditioned MDPs are a family of MDPs indexed by the goals, it is inefficient for these methods to acquire the model bisimulation representation for every possible goal, especially in high-dimensional continuous goal spaces (such as images).

In our work, we instead choose to optimize a surrogate objective to learn the invariant policy. Our main contributions are:
\begin{enumerate}[leftmargin=0.2in, itemsep=-0.5pt, topsep=-0.5pt, partopsep=-1.5pt]
    \item We formulate the Goal-conditioned Block MDPs (\mdp s) to study domain generalization in the goal-conditioned reinforcement learning setting (Section \ref{sec:problem}), and propose a general theory characterizing how well a policy generalizes to unseen environments (Section \ref{sec:dgtheory}).
    \item We propose a theoretically-motivated algorithm based on optimizing a surrogate objective, \textit{perfect alignment}, with \textit{aligned data} (Section \ref{sec:alignsample}). We then describe a practical implementation based on  Skew-Fit \cite{pong2020skew} to achieve the objective (Section \ref{sec:alignsampleskewfit}).
    \item Empirically, our experiments for a sawyer arm robot simulation with visual observations and goals demonstrates that our proposed method achieves state-of-the-art performance compared to data augmentation and bisimulation baselines at generalizing to unseen test environments in goal-conditioned tasks (Section \ref{sec:experiment}). 
\end{enumerate}


\section{Problem Formulation} \label{sec:problem}
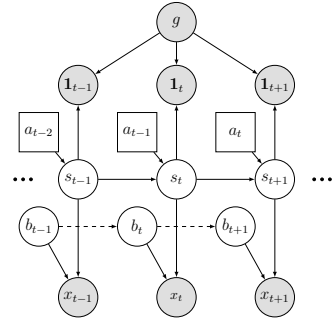
\begin{wrapfigure}{r}{0.3\textwidth} 
    \vspace{-12pt}
    \resizebox{!}{0.3 \textwidth} {
        \begin{tikzpicture}
        \node [circle,draw=black,fill=white,inner sep=0pt,minimum size=1.0cm] (s_{t - 1}) at (0,0) {\large $s_{t - 1}$};
        \node [circle,draw=black,fill=white,inner sep=0pt,minimum size=1.0cm] (b_{t - 1}) at (-1.0,-1.2) {\large $b_{t - 1}$};
        \node [rectangle,draw=black,fill=white,inner sep=0pt,minimum size=1.0cm] (a_{t - 2}) at (-1.0,1.2) {\large $a_{t - 2}$};
        \node [obs,minimum size=1.0cm] (x_{t - 1}) at (0,-3.0) {\large $x_{t - 1}$};
        \node [obs,minimum size=1.0cm] (r_{t - 1}) at (0,2.4) {\large $\mathbf{1}_{t - 1}$};
    
        \node [circle,draw=black,fill=white,inner sep=0pt,minimum size=1.0cm] (s_t) at (2.5,0) {\large $s_t$};
        \node [circle,draw=black,fill=white,inner sep=0pt,minimum size=1.0cm] (b_t) at (1.5,-1.2) {\large $b_t$};
        \node [rectangle,draw=black,fill=white,inner sep=0pt,minimum size=1.0cm] (a_{t - 1}) at (1.5,1.2) {\large $a_{t - 1}$};
        \node [obs,minimum size=1.0cm] (x_t) at (2.5,-3.0) {$x_t$};
        \node [obs,minimum size=1.0cm] (r_t) at (2.5,2.4) {\large $\mathbf{1}_t$};
        \node [obs,minimum size=1.0cm] (g) at (2.5,4) {\large $g$};
    
        \node [circle,draw=black,fill=white,inner sep=0pt,minimum size=1.0cm] (s_{t + 1}) at (5,0) {\large $s_{t + 1}$};
        \node [circle,draw=black,fill=white,inner sep=0pt,minimum size=1.0cm] (b_{t + 1}) at (4,-1.2) {\large $b_{t + 1}$};
        \node [rectangle,draw=black,fill=white,inner sep=0pt,minimum size=1.0cm] (a_t) at (4,1.2) {\large $a_t$};
        \node [obs,minimum size=1.0cm] (x_{t + 1}) at (5,-3.0) {\large $x_{t + 1}$};
        \node [obs,minimum size=1.0cm] (r_{t + 1}) at (5,2.4) {\large $\mathbf{1}_{t+1}$};
        
        \path [draw,->] (a_{t - 2}) edge (s_{t - 1});
        \path [draw,->] (s_{t - 1}) edge (x_{t - 1});
        \path [draw,->] (b_{t - 1}) edge (x_{t - 1});
        \path [draw,->] (s_{t - 1}) edge (r_{t - 1});
        
        \path [draw,->] (a_{t - 1}) edge (s_t);
        \path [draw,->] (s_t) edge (x_t);
        \path [draw,->] (b_t) edge (x_t);
        \path [draw,->] (s_t) edge (r_t);
        
        \path [draw,->] (a_t) edge (s_{t + 1});
        \path [draw,->] (s_{t + 1}) edge (x_{t + 1});
        \path [draw,->] (b_{t + 1}) edge (x_{t + 1});
        \path [draw,->] (s_{t + 1}) edge (r_{t + 1});
        
        \path [draw,->] (s_{t - 1}) edge (s_t);
        \path [dashed,->] (b_{t - 1}) edge (b_t);
        
        \path [draw,->] (s_t) edge (s_{t + 1});
        \path [dashed,->] (b_t) edge (b_{t + 1});
        
        \path [draw,->] (g) edge (r_{t - 1});
        \path [draw,->] (g) edge (r_t);
        \path [draw,->] (g) edge (r_{t + 1});
        
        \node[mark size=1pt,color=black] at (-1.2,0) {\pgfuseplotmark{*}};
        \node[mark size=1pt,color=black] at (-1.4,0) {\pgfuseplotmark{*}};
        \node[mark size=1pt,color=black] at (-1.6,0) {\pgfuseplotmark{*}};
        
        \node[mark size=1pt,color=black] at (6.0,0) {\pgfuseplotmark{*}};
        \node[mark size=1pt,color=black] at (6.2,0) {\pgfuseplotmark{*}};
        \node[mark size=1pt,color=black] at (6.4,0) {\pgfuseplotmark{*}};
        \end{tikzpicture}
    }
    \caption{Graphical model for Goal-conditioned Block MDPs (\mdp s) setting. The agent takes in the goal $g$ and observation $x_t$, which is produced by the domain invariant state $s_t$ and environmental state $b_t$, and acts with action $a_t$. Note that $b_t$ may have temporal dependence indicated by the dashed edge. } 
    \label{fig:gbmdp}
    \vspace{-12pt}
    \vskip -0.2in
\end{wrapfigure}

In this section, we formulate the domain invariant learning problem as solving Goal-conditioned Block MDPs (\mdp s). This extends previous work on learning invariances \cite{zhang2020inblock, du2019block} to the goal-conditioned setting \cite{kaelbling1993learning,schaul2015universal,andry2017her}. 

We consider a family of Goal-conditioned Block MDP environments $M^{\gE} = \{(\gS, \gA, \gX^e, \gT^e, \gG, \gamma)|  e \in \gE\}$ where $e$ stands for the environment index. Each environment consists of shared state space $\gS$, shared action space $\gA$, observation space $\gX^e$, transition dynamic $\gT^e$, shared goal space $\gG \subset \gS$ and the discount factor $\gamma$.\

Moreover, we assume that $M^{\gE}$ follows the generalized Block structure \cite{zhang2020inblock}.
The observation $x^e \in \gX^e$ is determined by state $s \in \gS$ and the environmental factor $b^e \in \gB^e$, i.e., $x^e(s, b^e)$ (\Figref{fig:gbmdp}). For brevity, we use $x^e_t(s)$ to denote the observation for domain $e$ at state $s$ and step $t$. We may also omit $t$ as $x^e(s)$ if we do not emphasize on the step $t$ or the exact environmental factor $b^e_t$. The transition function is thus consists of state transition $p(s_{t+1}|s_t, a_t)$ (also $p(s_0)$), environmental factor transition $q^e(b^e_{t+1}|b^e_t)$. In our work, we assume the state transition is nearly deterministic, i.e., $\forall s, a,$ entropy $\gH(p(s_{t+1}|s_t, a_t)), \gH(p(s_0)) \ll 1$, which is quite common in most RL benchmarks and applications \cite{dqn2015, greg2016openai, vitchyr2018multiworld}. Most importantly, $\gX^{\gE} = \cup_{e \in \gE} \gX^e$ satisfies the disjoint property \cite{du2019block}, i.e., each observation $x \in \gX^{\gE}$ uniquely determines its underlying state $s$. Thus, the observation space $\gX^{\gE}$ can be partitioned into disjoint blocks $\gX(s), s \in \gS$. This assumption prevents the partial observation problem.

The objective function in \mdp{} is to learn a goal-conditioned policy $\pi(a|x^e, g)$ that maximizes the discounted state density function $J(\pi)$ \cite{eysenbach2020clearn} across all domains $e \in \gE$. In our theoretical analysis, we do not assume the exact form of $g$ to the policy. One can regard $\pi(\cdot|x^e, g)$ as a group of RL policies indexed by the goal state $g$.
\begin{align} \label{equ:mdpobj}
    J(\pi) = \E_{e \sim \gE, g \sim \gG, \pi}\left[(1 - \gamma) \sum_{t=0}^{\infty} \gamma^t p_{\pi}^e(s_t = g|g) \right] = \E_{e \sim \gE}[J^e(\pi)]
\end{align}
$p_{\pi}^e(s_t=g|g)$ denotes the probability of achieving goal $g$ under policy $\pi(\cdot|x^e, g)$ at step $t$ in domain $e$. Besides, $e \sim \gE$ and $g \sim \gG$ refers to uniform samples from each set. As $p_{\pi}^e$ is defined over state space, it may differs among environments since policy $\pi$ takes $x^{e}$ as input. Fortunately, in a \mdp, there exist optimal policies $\pi_G(\cdot|x^e, g)$ which are invariant over all environments, i.e., $\pi_G(a|x^e(s), g) = \pi_G(a|x^{e'}(s), g), \forall a \in \gA, s \in \gS, e, e' \in \gE$. 

During training, the agent has access to training environments $\{e_i\}_{i=1}^N = \gE_{\text{train}} \subset \gE$ with their environment indices. However, we do not assume that $\gE_{\text{train}}$ is i.i.d sampled from $\gE$. Thus, we want the goal-conditioned RL agent to acquire the ability to neglect the spurious and unrelated environmental factor $b^e$ and capture the underlying invariant state information. This setup is adopted in many recent works such as in \cite{zhang2020inblock} and in domain generalization \cite{koh2020wilds, arjovsky2019irm} for supervised learning. 

\section{Method} \label{sec:method}

In this section, we propose a novel learning algorithm to solve \mdp s. First, we propose a general theory to characterize how well a policy $\pi$ generalizes to unseen test environments after training on $\gE_{\text{train}}$. Then, we introduce \emph{perfect alignment} as a surrogate objective for learning. This objective is supported by the generalization theory. Finally, we propose a practical method to acquire perfect alignment. 

\subsection{Domain Generalization Theory for \mdp} \label{sec:dgtheory}
In a seminal work, Ben-David et al. \cite{ben2010datheory} shows it is possible to bound the error of a classifier trained on a source domain on a target domain with a different data distribution. Follow-up work extends the theory to the domain generalization setting \cite{sicilia2021dgtheory, albuquerque2019dgtheory}. In \mdp, we can also derive similar theory to characterize the generalization from training environments $\gE_{\text{train}}$ to target test environment $t$. The theory relies on the Total Variation Distance $D_\text{TV}$ \cite{wiki:Total_variation_distance_of_probability_measures} of two policies $\pi_1, \pi_2$ with input $(x^e, g)$, which is defined as follows.
\begin{align*}
    D_\text{TV}(\pi_1(\cdot|x^e, g) \parallel \pi_2(\cdot|x^e, g)) = \sup_{A' \in \sigma(\gA)} |\pi_1(A'|x^e, g) - \pi_2(A'|x^e, g)|
\end{align*}
In the following statements, we denote $\rho(x, g)$ as some joint distributions of goals and observations that $g \sim \gG$ and $x$ is determined by $\rho(x|g)$. Additionally, we use $\rho_{\pi}^e(x^e|g)$ to denote the discounted occupancy measure of $x^e$ in environment $e$ under policy $\pi(\cdot|x^e, g)$ and refer $\rho_{\pi}^e(x^e)$ as the marginal distribution. Furthermore, we denote $\epsilon^{\rho(x,g)}(\pi_1 \parallel \pi_2)$ as the average $D_{\text{TV}}$ between $\pi_1$ and $\pi_2$, i.e., $\epsilon^{\rho(x, g)}(\pi_1 \parallel \pi_2) = \E_{\rho(x, g)}[D_\text{TV}(\pi_1(\cdot|x, g) \parallel \pi_2(\cdot|x, g))]$. This quantity is crucial in our theory as it can characterize the performance gap between two policies (see \Apxref{apx:proof}).

Then, similar to the famous $\gH \Delta \gH$-divergence \cite{ben2010datheory, sicilia2021dgtheory} in domain adaptation theory, we define $\Pi \Delta \Pi$-divergence of two joint distributions $\rho(x, g)$ and $\rho(x, g)'$ in terms of the policy class $\Pi$:
\begin{align*}
    d_{\Pi \Delta \Pi}(\rho(x, g), \rho(x, g)') = \sup_{\pi, \pi' \in \Pi}|\epsilon^{\rho(x, g)}(\pi \parallel \pi') - \epsilon^{\rho(x, g)'}(\pi \parallel \pi')|
\end{align*}
On one hand, $d_{\Pi \Delta \Pi}$ is a distance metric which reflects the distance between two distributions w.r.t function class $\Pi$. On the other hand, if we fix these two distributions, it also reveals the quality of the function class $\Pi$, i.e., smaller $d_{\Pi \Delta \Pi}$ means more invariance to the distribution change. Finally, we state the following Proposition in which $\pi_G$ is some optimal and invariant policy.
\begin{prop}[Informal] \label{prop:gbdg}
For any $\pi \in \Pi$, we consider the occupancy measure $\{\rho_{\pi}^{e_i}(x^{e_i}, g) \}_{i=1}^N$ for training environments and $\rho_{\pi_G}^t(x^t, g)$ for the target environment. For simplicity, we use $\epsilon^{e_i}$ as the abbreviation of $\epsilon^{\rho_{\pi}^{e_i}(x^{e_i}, g)}$, $\epsilon^t$ as $\epsilon^{\rho_{\pi_G}^t(x^t, g)}$ and $\delta =\max_{e_i, e_i' \in \gE_{\text{train}}} d_{\Pi \Delta \Pi}(\rho_{\pi}^{e_i}(x^{e_i}, g), \rho_{\pi}^{e_i'}(x^{e_i'}, g))$. Let
\begin{align*}
    \lambda = \frac{1}{N} \sum_{i=1}^N \epsilon^{e_i}(\pi^* \parallel \pi_G) + \epsilon^{t}(\pi^* \parallel \pi_G), ~~~~ \pi^* = \argmin_{\pi' \in \Pi} \sum_{i=1}^N \epsilon^{e_i}(\pi' \parallel \pi_G) 
\end{align*}
Then, we have
\begin{align}
     J^t(\pi_G) - J^t(\pi)\leq & \frac{1}{N} \sum_{i=1}^N  \epsilon^{e_i}(\pi \parallel \pi_G) + \lambda + \delta + \min_{\rho(x, g) \in B} d_{\Pi \Delta \Pi}(\rho(x, g), \rho_{\pi_G}^t(x^t, g)) \label{equ:gbdg}
\end{align}
where $B$ is a characteristic set of joint distributions determined by $\gE_{\text{train}}$ and policy class $\Pi$.
\end{prop}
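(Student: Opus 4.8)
The plan is to reduce the performance gap to an average policy discrepancy and then run a multi-source domain-adaptation argument that chains the training environments, the convex hull of their occupancy measures, and the target together, using only the triangle inequality for $\epsilon$ and the defining inequality of $d_{\Pi \Delta \Pi}$.

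\textbf{Step 1 (performance gap to policy discrepancy).} First I would invoke the simulation-style lemma advertised before the statement (the one deferred to \Apxref{apx:proof}) to obtain
\begin{align*}
    J^t(\pi_G) - J^t(\pi) \leq \epsilon^t(\pi \parallel \pi_G),
\end{align*}
so the gap in the goal-reaching objective of environment $t$ is controlled by the average $D_\text{TV}$ disagreement between $\pi$ and $\pi_G$ measured under the optimal occupancy $\rho_{\pi_G}^t(x^t, g)$. This is where the structural assumptions are spent: the near-deterministic dynamics keep occupancy errors from compounding across the horizon, and the disjoint block property makes $\pi_G$'s action distribution a well-defined function of $s$, so a per-step coupling along the $\pi_G$-trajectory telescopes into $\epsilon^t$. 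I expect this reduction to be the main obstacle, since everything afterward is purely distribution-level bookkeeping.

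\textbf{Step 2 (bound the target discrepancy).} It then remains to bound $\epsilon^t(\pi \parallel \pi_G)$, and here $B$ should be taken as (essentially) the convex hull of the training occupancy measures $\{\rho_{\pi}^{e_i}(x^{e_i}, g)\}_{i=1}^N$. Fix an arbitrary $\rho^{*} = \sum_i \alpha_i \rho_{\pi}^{e_i} \in B$ with $\sum_i \alpha_i = 1$. Using the triangle inequality for $\epsilon^t$ (inherited from $D_\text{TV}$ being a metric) to insert $\pi^*$, then the defining inequality of $d_{\Pi \Delta \Pi}$ applied to the admissible pair $(\pi, \pi^*)$ to transfer the $\pi \parallel \pi^*$ term from the target onto $\rho^{*}$, and finally linearity of $\rho \mapsto \epsilon^{\rho}$ (an expectation), I would obtain
\begin{align*}
    \epsilon^t(\pi \parallel \pi_G) \leq \sum_{i=1}^N \alpha_i\, \epsilon^{e_i}(\pi \parallel \pi^*) + d_{\Pi \Delta \Pi}(\rho^{*}, \rho_{\pi_G}^t) + \epsilon^t(\pi^* \parallel \pi_G).
\end{align*}

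\textbf{Step 3 (reweight and collect).} The weighted source term is converted to the uniform average at the cost of $\delta$: for every $i,j$ the definition of $\delta$ gives $\epsilon^{e_i}(\pi \parallel \pi^*) \leq \epsilon^{e_j}(\pi \parallel \pi^*) + \delta$, and averaging over $j$ while using $\sum_i \alpha_i = 1$ yields $\sum_i \alpha_i\, \epsilon^{e_i}(\pi \parallel \pi^*) \leq \frac{1}{N}\sum_j \epsilon^{e_j}(\pi \parallel \pi^*) + \delta$. One further triangle inequality splits $\epsilon^{e_j}(\pi \parallel \pi^*) \leq \epsilon^{e_j}(\pi \parallel \pi_G) + \epsilon^{e_j}(\pi^* \parallel \pi_G)$, after which the terms $\frac{1}{N}\sum_j \epsilon^{e_j}(\pi^* \parallel \pi_G) + \epsilon^t(\pi^* \parallel \pi_G)$ assemble exactly into $\lambda$. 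Minimizing the remaining free divergence over $\rho^{*} \in B$ and composing with Step 1 produces \eqref{equ:gbdg}. The only side conditions to verify are that $\epsilon$ genuinely inherits the triangle inequality and linearity (both immediate since $D_\text{TV}$ is a metric and $\epsilon^{\rho}$ is an expectation against $\rho$) and that $\pi^*$, as the empirical minimizer over $\Pi$, always lies in $\Pi$ so that each invocation of $d_{\Pi \Delta \Pi}$ is legitimate.
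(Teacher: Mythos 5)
Your proposal is correct, and it reaches the stated bound by a genuinely different decomposition of the domain-adaptation step. The paper first proves a pairwise source--target lemma (Lemma 2 in its appendix): for each training environment $e_i$ it inserts a pairwise minimizer $\pi^*_{e_i,t}$ via the triangle inequality, getting $\epsilon^t(\pi \parallel \pi_G) \leq \epsilon^{e_i}(\pi \parallel \pi_G) + d_{\Pi\Delta\Pi}(\rho^{e_i}_\pi, \rho^t_{\pi_G}) + \lambda_{e_i,t}$; it then averages these bounds over $i$, shows $\sum_i \alpha_i \lambda_{e_i,t} \leq \lambda_\alpha \leq \lambda$, and splits each $d_{\Pi\Delta\Pi}(\rho^{e_i}_\pi, \rho^t_{\pi_G})$ through an intermediate distribution $\tilde\rho$ in the characteristic set, with $\delta$ entering as the bound $d_{\Pi\Delta\Pi}(\rho^{e_i}_\pi, \tilde\rho) \leq \delta$. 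You instead insert the single minimizer $\pi^*$ once at the target, transfer $\epsilon^t(\pi \parallel \pi^*)$ onto a convex combination $\rho^* = \sum_i \alpha_i \rho^{e_i}_\pi$ in one application of $d_{\Pi\Delta\Pi}$, and exploit linearity of $\rho \mapsto \epsilon^{\rho}$ --- a step the paper never uses --- so that your $\delta$ enters through reweighting the $\alpha$-average to the uniform average rather than through distances to $\tilde\rho$; all individual steps (triangle inequality, admissibility of the pair $(\pi,\pi^*)$, linearity) check out. Your route is more elementary, avoiding the pairwise minimizers and the chain of $\lambda$ comparisons, but it buys a formally weaker last term: your $B$ is only the convex hull of $\{\rho^{e_i}_\pi\}_{i=1}^N$, whereas the paper's characteristic set $B(\delta, \gE_{\text{train}}|\Pi)$ contains that hull, so the paper's $\min_{\rho \in B} d_{\Pi\Delta\Pi}(\rho, \rho^t_{\pi_G})$ can be smaller, and the larger set is also what the paper reuses in its later analysis of the $(t)$ term. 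Two minor caveats: your Step 1 invokes rather than proves the performance-gap reduction (the paper's Lemma 1, a telescoping argument that also carries a $\frac{2\gamma}{1-\gamma}$ factor which the informal statement suppresses), and your attribution of that lemma to near-deterministic dynamics is off --- the telescoping bound holds for arbitrary stochastic transitions, with no determinism needed.
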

The formal statement and the proof are shown in \Apxref{apx:gbdg_proof}. Generally speaking, the first term of the right hand side in \Eqref{equ:gbdg} quantifies the performance of $\pi$ in the $N$ training environments. $\lambda$ quantifies the optimality of the policy class $\Pi$ over all environments. $\delta$ reflects how the policy class $\Pi$ can reflect the difference among $\{\rho^{e_i}_\pi(x^{e_i}, g), e_i \in \gE_{\text{train}} \}$, which should be small if the policy class is invariant. The last term characterizes the distance between training environment and target environment and will be small if the training environments are diversely distributed.

Many works on domain generalization of supervised learning \cite{ben2010datheory, liu2019transferable, sicilia2021dgtheory, albuquerque2019dgtheory, akuzawa2019advaccconstraint} spend much effort in discussing the trade-offs among different terms similar to the ones in \Eqref{equ:gbdg}, e.g., minimizing $\delta $ may increase $\lambda$ \cite{akuzawa2019advaccconstraint}, and in developing sophisticated techniques to optimize the bound, e.g. distribution matching \cite{louizos2016fairvae, li2018condalign, jin2020alignrestore} or adversarial learning \cite{liu2019transferable}. 

Different from their perspectives, in \mdp s, we propose a simple but effective criteria to minimize the bound. From now on, we only consider the policy class $\Pi = \Pi_{\Phi} = \{w (\Phi(x), g), \forall w\}$. Usually, $\Phi$ will be referred as an encoder which maps $x \in \gX^{\gE}$ to some latent representation $z = \Phi(x)$. We will also use the notation $z(s) = \Phi(x(s))$ if we do not emphasize on the specific environment.
\begin{definition}[\textbf{Perfect Alignment}] \label{def:pa}
An encoder is called a \emph{perfect alignment} encoder $\Phi$ w.r.t environment set $E$ if $\forall e, e' \in E$ and $\forall s, s' \in \gS$, $\Phi(x^e(s)) = \Phi(x^{e'}(s'))$ if and only if $s = s'$.
\end{definition}
As illustrated in \Figref{fig:tsne}, an encoder is in perfect alignment if it maps two observations of the same underlying state $s$ to the same latent encoding $z(s)$ while also preventing meaningless embedding, i.e., mapping observations of different states to the same $z$. We believe perfect alignment plays an important role in domain generalization for goal-conditioned RL agents. Specifically, it can minimize the bound of \Eqref{equ:gbdg} as follows.
\begin{prop}[Informal] \label{prop:padg}
If the encoder $\Phi$ is a perfect alignment over $\gE_{\text{train}}$, then
\begin{align} \label{equ:padg}
    J^t(\pi_G) - J^t(\pi) \leq \underbrace{\frac{1}{N} \sum_{i=1}^N \epsilon^{e_i}(\pi \parallel \pi_G)}_{(E)} +  \underbrace{\epsilon^t(\pi^* \parallel \pi_G) + d_{\Pi_{\Phi}\Delta\Pi_{\Phi}}(\tilde{\rho}(x, g), \rho^t_{\pi_G}(x^t, g))}_{(t)}
\end{align}
where $\tilde{\rho}(x, g)$ and $\pi^*$ are defined in \Propref{prop:gbdg} (also \Apxref{apx:proof}). 
\end{prop}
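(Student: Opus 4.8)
The plan is to specialize the general bound \Eqref{equ:gbdg} of \Propref{prop:gbdg} to the restricted policy class $\Pi_\Phi$ and show that perfect alignment over $\gE_{\text{train}}$ forces three simplifications: the training part of $\lambda$ vanishes, the inter-environment divergence $\delta$ vanishes, and the minimum over the characteristic set $B$ is controlled by a single distinguished distribution $\tilde{\rho}(x,g)$. Since the remaining terms of \Eqref{equ:padg} already appear in \Eqref{equ:gbdg} and all terms are nonnegative, establishing these three reductions immediately yields the sharper bound.

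The technical core I would establish first is an invariance lemma for occupancy measures: for any $\pi\in\Pi_\Phi$, the pushforward of $\rho_\pi^{e_i}(x^{e_i},g)$ under $\Phi$ to a law $\tilde{\rho}(z,g)$ on representations $z=\Phi(x)$ is the same for every training environment $e_i\in\gE_{\text{train}}$. The argument is that a head $w(\Phi(x),g)\in\Pi_\Phi$ reads only $z$ and $g$; by \Defref{def:pa} the encoding $\Phi(x^e(s))=z(s)$ is a function of the underlying state $s$ alone, independent of the environment $e$ and its factor $b^e$. Hence the action law at every step equals $w(z(s_t),g)$ with no dependence on $e$, and since the state transition $p(s_{t+1}|s_t,a_t)$ and the initial law $p(s_0)$ are shared, the entire state-trajectory distribution---and therefore the discounted state occupancy $\rho_\pi^e(s,g)$---is identical across training environments. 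Pushing this common state occupancy through the environment-independent map $s\mapsto z(s)$ produces the common law $\tilde{\rho}(z,g)$.

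Granting the lemma, the remaining reductions are short. First, $\delta=0$: for any $\pi_1,\pi_2\in\Pi_\Phi$ the quantity $\epsilon^{e_i}(\pi_1\parallel\pi_2)=\E_{\rho_\pi^{e_i}(z,g)}[D_\text{TV}(\pi_1\parallel\pi_2)]$ depends on $e_i$ only through its $(z,g)$-pushforward, which the lemma shows is the common $\tilde{\rho}$, so each $d_{\Pi_\Phi\Delta\Pi_\Phi}(\rho_\pi^{e_i},\rho_\pi^{e_i'})=0$. Second, because $\pi_G$ is invariant, the ``only if'' direction of \Defref{def:pa} lets us define a head $w_G$ by $w_G(z(s),g)=\pi_G(\cdot|x^{e_i}(s),g)$ without conflict, so the policy $w_G\in\Pi_\Phi$ agrees with $\pi_G$ on every training observation; thus the minimizer satisfies $\frac1N\sum_i\epsilon^{e_i}(\pi^*\parallel\pi_G)=0$ and $\lambda$ collapses to its target term $\epsilon^t(\pi^*\parallel\pi_G)$. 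Crucially this target term need not vanish: since alignment is only assumed over $\gE_{\text{train}}$, the encoder may send target observations $x^t(s)$ to representations different from $z(s)$, so $\pi^*$ and $\pi_G$ may disagree on $e=t$. Finally, $\tilde{\rho}\in B$, so $\min_{\rho(x,g)\in B}d_{\Pi_\Phi\Delta\Pi_\Phi}(\rho(x,g),\rho^t_{\pi_G}(x^t,g))\le d_{\Pi_\Phi\Delta\Pi_\Phi}(\tilde{\rho}(x,g),\rho^t_{\pi_G}(x^t,g))$. Substituting $\delta=0$, $\lambda=\epsilon^t(\pi^*\parallel\pi_G)$, and this last inequality into \Eqref{equ:gbdg} gives \Eqref{equ:padg}.

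The main obstacle is the invariance lemma, and specifically making rigorous that although the raw observation occupancies $\rho_\pi^{e_i}(x^{e_i},g)$ genuinely differ across environments (they are supported on disjoint blocks $\gX^{e_i}$ with distinct environmental factors $b^{e_i}$), their images under $\Phi$ coincide. This requires using both directions of \Defref{def:pa}: the ``if'' direction (same state, same $z$) to collapse the environment dependence, and the ``only if'' direction (distinct states, distinct $z$) to guarantee that $w_G$ is well defined and that the pushforward loses no state information. A secondary point to check is that $\Pi_\Phi$ is rich enough to contain the head $w_G$, which is what makes the training part of $\lambda$ exactly zero.
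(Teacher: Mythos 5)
Your proof is correct for the statement as written (exact perfect alignment, unrestricted heads $\Pi_\Phi$), and its skeleton matches the paper's: specialize \Propref{prop:gbdg} to $\Pi_\Phi$, then show that the inter-environment divergence $\delta$ and the training half of $\lambda$ both vanish, via (i) invariance of the $\Phi$-pushforward occupancy across training environments and (ii) a head $w_G$ realizing $\pi_G$ on all training observations, well-defined by the \emph{only if} direction of \Defref{def:pa}. Where you diverge is in what is actually proved: the paper's appendix proof (\Propref{prop:padg_f}) establishes a quantitative version for $(\eta,\psi)$-perfect alignment and $L$-Lipschitz heads $\Pi^{\gE_{\text{train}}}_{\Phi,L}$, bounding $\delta \leq (2+\frac{\gamma}{1-\gamma})\eta L$ and the training part of $\lambda$ by $\eta L$, and recovers the informal statement only as the limit $\eta L \to 0$. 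That route needs two ingredients your argument avoids entirely: \Lemref{lemma:occupancy}, a TRPO-style perturbation bound showing occupancy measures move by at most $\frac{\gamma}{1-\gamma}$ times the policies' TV gap (needed because under approximate alignment the state occupancies across environments no longer coincide exactly), and a smoothness assumption on $\pi_G$ (the $u = L\psi$ condition) so that the constructed head stays inside the Lipschitz class. Your exact-case argument is more elementary and genuinely sufficient for the informal proposition; what the paper's version buys is robustness, i.e., a bound that degrades gracefully when the learned encoder is only approximately aligned, which is the regime the experiments are actually in. Two small points to tighten: your final step is an equality rather than an inequality, since $\tilde{\rho}$ is by definition the minimizer over $B$; and with $\delta = 0$ you should confirm that $B(0,\gE_{\text{train}}|\Pi_\Phi)$ is nonempty, which follows because each $\rho^{e_i}_\pi$ itself lies in it (their pairwise $d_{\Pi_\Phi\Delta\Pi_\Phi}$ distances are zero by your invariance lemma).
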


In \Apxref{apx:padg_proof}, we formally prove \Propref{prop:padg} when $\Phi$ is a $(\eta, \psi)$-perfect alignment, i.e., $\Phi$ is only near perfect alignment. The proof shows that the generalization error bound is minimized on the R.H.S of \Eqref{equ:padg} when $\Phi$ asymptotically becomes an exact perfect alignment encoder. Therefore, in our following method, we aim to learn a perfect alignment encoder via aligned sampling (\secref{sec:alignsample}).

For the remaining terms in the R.H.S of \Eqref{equ:padg}, we find it hard to quantify them task agnostically, as similar difficulties also exist in the domain generalization theory of supervised learning \cite{sicilia2021dgtheory}. Fortunately, we can derive upper bounds for the remaining terms under certain assumptions and we observe that these upper bounds are significantly reduced via our method in the experiments (\secref{sec:experiment}). The $(E)$ term represents how well the learnt policy $\pi$ approximates the optimal invariant policy on the training environments and is reduced to almost zero via RL (\Tableref{tab:comparative_evaluation}). For the $(t)$ term, we show that an upper bound of $(t)$ is proportion to the invariant quality of $\Phi$ on the target environment. Moreover, we find that learning a perfect alignment encoder over $\gE_{\text{train}}$ empirically improves the invariant quality over other unseen environments ($t$) (\Figref{fig:ablation}). Thus, this $(t)$ term upperbound is reduced by learning perfect alignment. Please refer to \Apxref{apx:remain_discuss} for more details.

Based on the theory we derived in this subsection, we adopt perfect alignment as the heuristic to address \mdp s in our work. In the following subsections, we propose a practical method to acquire a perfect alignment encoder over the training environments.

\subsection{Learning Domain Invariant via Aligned Sampling} \label{sec:alignsample}
First, we discuss about the \emph{if} condition on perfect alignment encoder $\Phi$, i.e., $\forall s, \Phi(x^e(s)) = \Phi(x^{e'}(s))$. The proposed method is based on \emph{aligned sampling}. In contrast, most RL algorithms use observation-dependent sampling from the environment, e.g., $\epsilon$-greedy or Gaussian distribution policies \cite{haarnoja2018sac, fujimoto2018td3, pong2020skew, dqn2015}. However, with observation-dependent sampling, occupancy measures  $\rho_{\pi}^e(s), \forall e \in \gE_{\text{train}}$ will be different. Thus, simply aligning the latent representation of these observations will fail to produce a perfect alignment encoder $\Phi$.

Thus, we propose a novel strategy for data collection called \emph{aligned sampling}. First, we randomly select a trajectory (e.g., from replay buffer etc.), denoted as $\{x_0^e, a_0, x_1^e, a_1, \ldots, x^e_T\}$ from environment $e$. The set of corresponding states along this trajectory are denoted as $\{s^e_t(a_{0:t})\}_{t=0}^T$. Second, we take the same action sequence $a_{0:T}$ in another domain $e'$ to get another trajectory $\{x_0^{e'}, a_0, x_1^{e'}, a_1, \ldots, x_T^{e'}\}$ (so as $\{s^{e'}_t(a_{0:t})\}_{t=0}^T$). We refer to the data collected by aligned sampling from all training environments as \emph{aligned data}. These aligned observations $\{x^{e_i}_t(a_{0:t})\}, \forall e_i \in \gE_{\text{train}}$ are stored in an aligned buffer $\gR_{\text{align}}$ corresponding to the aligned action sequence $a_{0:t}$.

Under the definition of \mdp, we have $\forall t \in [0:T], s \in \gS$, $\rho(s^e_t(a_{0:t})) = \rho(s^{e'}_t(a_{0:t}))$, i.e., the same state distribution. Therefore, we can use MMD loss \cite{gretton2008mmd} to match distribution of $\Phi(x^e(s))$ for the aligned data. More specifically, in each iteration, we sample a mini-batch of $B$ aligned observations of every training environment $e_i \in \gE_{\text{train}}$ from $\gR_{\text{align}}$, i.e., $\gB_{\text{align}} = \{x^{e_i}(s^{e_i}_t(a_{0:t}^b)), \forall e_i \in \gE_{\text{train}}\}_{b=1}^B$. Then we use the following loss as a computationally efficient approximation of the MMD metric \cite{zhao2015fastmmd, louizos2016fairvae}.
\begin{align*} 
    L^{\text{MMD}}(\Phi) =  \E_{e, e' \sim \gE_{\text{train}}, \gB_{\text{align}} \sim \gR_{\text{align}}} [ \parallel \frac{1}{B}\sum_{b=1}^B \psi(\Phi(x^e(s^e_t(a_{0:t}^b))) - \frac{1}{B} \sum_{b=1}^B \psi(\Phi(x^{e'}(s^{e'}_t(a_{0:t}^b)))) \parallel_2^2 ]
\end{align*}
where $\psi$ is a random expansion function. 

\begin{figure}
    \centering
    \subfigure[Illustration of Aligned Sampling]{ 
    \label{fig:illus_align}
    \includegraphics[width=2.625in, height=1.4765625in]{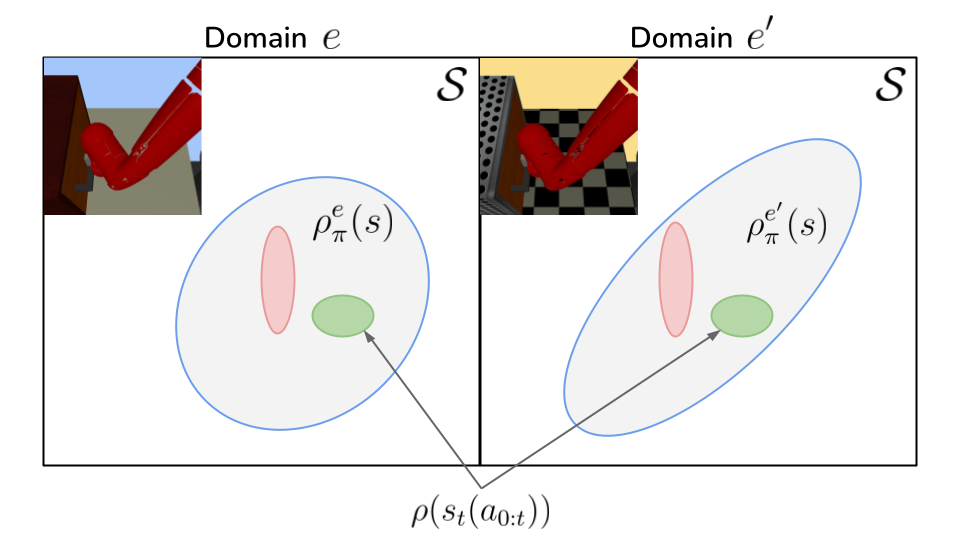} 
    }
    \subfigure[Overall structure]{ 
    \label{fig:vae_struct}
    \includegraphics[width=2.625in, height=1.4765625in]{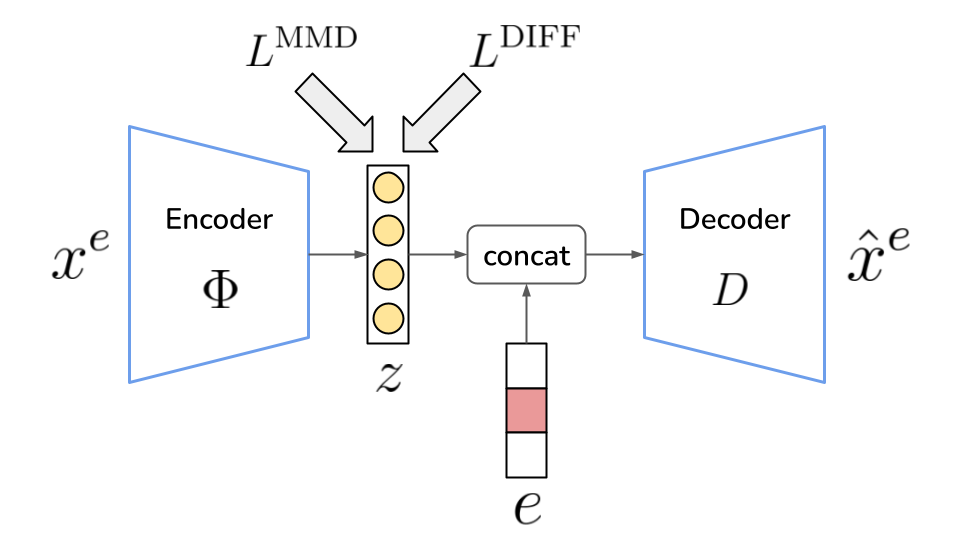}
    }
    \caption{\textbf{(a)}: Illustration of Aligned Sampling. Square represents the whole state space $\gS$, gray area represents the distribution $\rho_{\pi}^e(s)$ in two different environments. Small colored areas are the aligned state distribution generated by aligned sampling in \secref{sec:alignsample}. \textbf{(b)}: Overall VAE structure in our PA-SF. Encoder maps $x^e$ to the latent embedding $z$ and decoder $D$ reconstructs the observations with $z$ and index $e$. $L^{\text{MMD}}$ and $L^{\text{DIFF}}$ denote the two losses in \secref{sec:alignsample}.}
    \vskip -0.2in
\end{figure}

In \Figref{fig:illus_align}, we illustrate the intuition of the above approach. When the transition is nearly deterministic, the entropy for $\rho(s^e_t(a_{0:t}))$ is much smaller, i.e.,  $\gH(\rho(s^e_t(a_{0:t}))) \ll \gH(\rho^e_{\pi}(s_t))$. Thus, $\rho(s^e_t(a_{0:t}))$ can be regarded as small patches in $\gS$. We use the MMD loss $L^{\text{MMD}}$ to match the latent representation $\{\Phi(x^e(s)),  s \sim \rho(s^e_t(a_{0:t})) \}, \forall e \in \gE_{\text{train}}$ together. As a consequence, we should achieve an encoder $\Phi$ that is more aligned. We discuss the theoretical property of $L^{\text{MMD}}$ in detail in \Apxref{apx:mmd}.

However, simply minimizing $L^{\text{MMD}}$ may violate the \emph{only if} condition for perfect alignment. For example, a trivial solution for $L^{\text{MMD}} = 0$ is mapping all observations to some constant latent. To ensure that $\Phi(x^e(s)) = \Phi(x^{e'}(s'))$ \emph{only if} $s = s'$, we additionally use the difference loss $L^{\text{DIFF}}$ as follows.
\begin{align*}
    L^{\text{DIFF}}(\Phi) = - \E_{e \sim \gE_{\text{train}}, x^e, \tilde{x}^e \in \gR^e} \parallel \Phi(x^e) - \Phi(\tilde{x}^e) \parallel_2^2
\end{align*}
where $\gR^e$ refers to the replay buffer of environment $e$. Clearly, minimizing $L^{\text{DIFF}}$ encourages dispersed latent representations over all states $s \in \gS$.

We refer to the combination $\alpha_{\text{MMD}} L^{\text{MMD}} + \alpha_{\text{DIFF}} L^{\text{DIFF}}$ as our perfect alignment loss $L^{\text{PA}}$. Note that $L^{\text{PA}}$ resembles contrastive learning \cite{chen2020simple, laskin2020curl}. Namely, observations of aligned data from $\gR_{\text{align}}$ are positive pairs while observations sampled randomly from a big replay buffer are negative pairs. We match the latent embedding of positive pairs via the MMD loss while separating negative pairs via the difference loss. As discussed in \secref{sec:dgtheory}, we believe this latent representation will improve generalization to unseen target environments.

\subsection{Perfect Alignment for Skew-Fit}
\label{sec:alignsampleskewfit}

In \secref{sec:experiment}, we will train goal-conditioned RL agents with perfect alignment encoder using the Skew-Fit algorithm \cite{pong2020skew}. Skew-Fit is typically designed for visual-input agents which learn a goal-conditioned policy via purely self-supervised learning.

First, Skew-Fit trains a $\beta$-VAE with observations collected online to acquire a compact and meaningful latent representation for each state, i.e., $z(s)$ from the image observations $x(s)$. Then, Skew-Fit optimizes a SAC \cite{haarnoja2018sac} agent in the goal-conditioned setting over the latent embedding of the image observation and goal, $\pi(a|z,g)$. The reward function is the negative of $l_2$ distance between the two latent representation $z(s)$ and $z(g)$, i.e., $r(s,g) = -\parallel z(s) - z(g) \parallel_2$. Furthermore, to improve sample efficiency, Skew-Fit proposes skewed sampling for goal-conditioned exploration.

In our algorithm, \emph{Perfect Alignment for Skew-Fit} (PA-SF), the encoder $\Phi$ is optimized via both $\beta$-VAE losses  as \cite{pong2020skew, nair2018rig} and $L^{\text{PA}}$ loss to ensure meaningful and perfectly aligned latent representation.
\begin{align} \label{equ:pasfobj}
    L(\Phi, D) = L^{\text{RECON}}(x^e, \hat{x^e}) + \beta D_\text{KL}(q_{\Phi}(z|x^e) \parallel p(z)) + \alpha_{\text{MMD}} L^{\text{MMD}} + \alpha_{\text{DIFF}} L^{\text{DIFF}}
\end{align}
In addition, we use both aligned sampling and observation-dependent sampling. Aligned sampling provides aligned data but hurts sample-efficiency while observation-dependent sampling is exploration-efficient but fails to ensure alignment. In practice, we find that collecting a small portion ($15\%$ of all data collected) of aligned data in $\gR_{\text{align}}$ is enough for perfect alignment via $L^{\text{PA}}$.

Additionally, inspired by \cite{louizos2016fairvae}, we also change the $\beta$-VAE structure to what is shown in \Figref{fig:vae_struct}, since in \mdp~data are collected from $N$ training environments and thus, the identity Gaussian distribution is no longer a proper fit for prior. The encoder $\Phi$ maps $x^e(s)$ to some latent representation $z(s)$ while the decoder $D$ takes both $z(s)$ and the environment index $e$ as input to reconstruct $\hat{x^e}(s)$. Note that by using both $L^{\text{PA}}$ and $L^{\text{RECON}}$, we require static environmental factors in $\gE_{\text{train}}$ (unnecessary for testing environments) for a stable optimization. In future work, we will address the limit from $\beta$-VAE by training two latent representations simultaneously to stabilize the optimization for generality.

\section{Experiments} \label{sec:experiment}

In this section, we conduct experiments to evaluate our PA-SF algorithms. The experiments are based on multiworld \cite{vitchyr2018multiworld}. Our empirical analysis tries to answer the following questions: (1) How well does PA-SF perform in solving \mdp{} problems? (2) How does each component proposed in \secref{sec:method} contribute to the performance? 

\subsection{Comparative Evaluation} \label{sec:compare_eval}

In this subsection, we aim to answer the question (1) by comparing our proposed PA-SF method with vanilla Skew-Fit and several other baselines that attempt to acquire invariant policies for RL agents. 

\paragraph{Baselines} Current methods for obtaining robust policies can be characterized into two categories: (1) data augmentation and (2) model bisimulation.

\begin{enumerate}[leftmargin=0.2in, itemsep=-0.5pt, topsep=-0pt, partopsep=-3pt]
    \item \textbf{Data Augmentation}. Recent work \cite{stone2021distractingbench} tries to use data augmentation to prevent the RL agents from distractions. We implement the most widely accepted data augmentation methods RAD \cite{laskin2020rad} upon Skew-Fit (Skew-Fit + RAD) as a baseline. Note that our PA-SF method does not use any data augmentation and is parallel with this kind of techniques.
    \item \textbf{Model Bisimulation} \cite{ferns2011bisimulation}. These methods utilize bisimulation metrics to learn a minimal but sufficient representation which will neglect irrelevant features of Block MDPs. We include MISA \cite{zhang2020inblock} and DBC \cite{zhang2020inrepr} in our comparison as they are the most successful implementations for high-dimensional tasks. Moreover, in the goal-conditioned setting, we use an oracle state-goal distance $-\parallel s - g \parallel_2$ as rewards for these two algorithms in \mdp. In contrast, our PA-SF method does not have such information.
\end{enumerate}

\begin{table}
  \caption{Evaluation of PA-SF and baselines on four control tasks. We report the mean and one standard deviation on each task (lower metric is better).}
  \label{tab:comparative_evaluation}
  \centering
  {\setlength{\tabcolsep}{0.32em} 
  \begin{tabular}{llllll}
    \toprule
    & \multirow{3}{*}{Algorithm} & Reach & Door & Push & Pickup \\
    \cmidrule(r){3-6}
    & & Hand distance & Angle difference & Puck distance & Object distance \\
    & & (35K) & (150K) & (400K) & (280K) \\
    \midrule
    \multirow{5}{*}{Test Avg} & Skew-Fit & $0.111 \pm 0.010$ & $0.194 \pm 0.018$ & $0.086 \pm 0.004$ & $0.037 \pm 0.006$ \\
    & Skew-Fit + RAD & $0.105 \pm 0.010$ & $0.162 \pm 0.030$ & $0.082 \pm 0.008$ & $0.040 \pm 0.004$ \\
    & MISA & $0.239 \pm 0.0142$ & $0.255 \pm 0.027$ & $0.099 \pm 0.006$ & $0.043 \pm 0.004$ \\
    & DBC & $0.185 \pm 0.037$ & $0.320 \pm 0.033$ & $0.095 \pm 0.006$ & $0.045 \pm 0.002$ \\
    & PA-SF(\textbf{Ours}) & $\mathbf{0.076} \pm 0.005$ & $\mathbf{0.106} \pm 0.015 $& $\mathbf{0.069} \pm 0.005$ & $\mathbf{0.028} \pm 0.004$ \\
    \midrule
    \multirow{2}{*}{Train Avg}  & PA-SF(\textbf{Ours}) & $0.067 \pm 0.005$ & $0.058 \pm 0.074$ & $0.060 \pm 0.005$ & $0.020 \pm 0.008$ \\
    & Oracle Skew-Fit & $0.055 \pm 0.010$ & $0.057 \pm 0.012$& $0.054 \pm 0.006$ & $0.020 \pm 0.006$ \\
    \bottomrule
  \end{tabular}}
\end{table}

\begin{figure}[t]
  \centering
  \includegraphics[width=\linewidth]{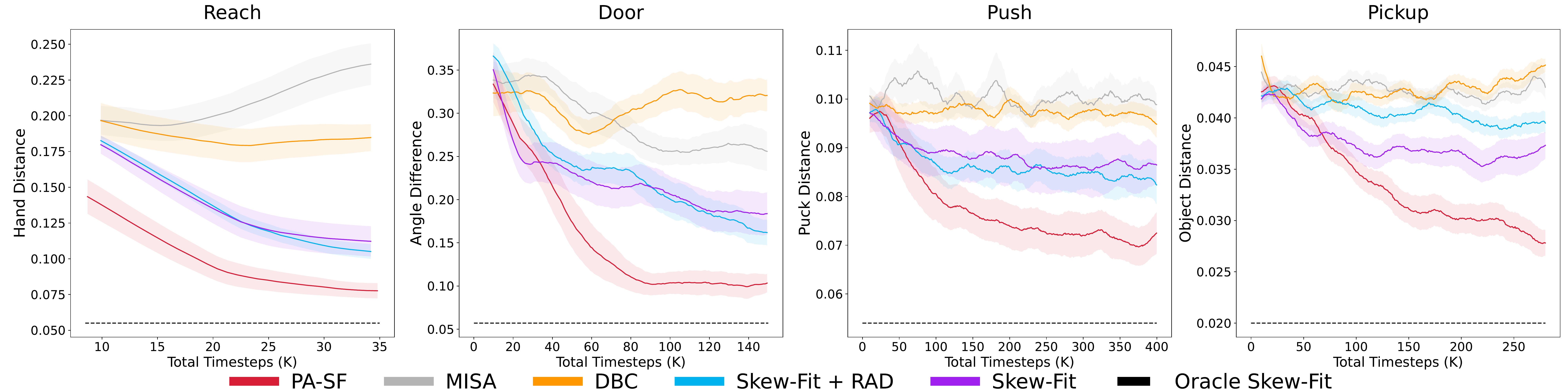}
  \caption{Learning curve of all algorithms on average across test environments for each task. All curves show the mean and one standard deviation (a half for Pickup to show clearly) of 7 seeds.}
  \label{fig:learning_curve}
  \vskip -0.2in
\end{figure}

\paragraph{Environments} We evaluate PA-SF and all baselines on a set of \mdp{} tasks based on multiworld benchmark \cite{vitchyr2018multiworld}, which is widely used to evaluate the performance of visual input goal-conditioned algorithms. We use the following four basic tasks \cite{nair2018rig, pong2020skew}: \textit{Reach}, \textit{Door}, \textit{Pickup} and \textit{Push}. In \mdp, we create different environments with various backgrounds, desk surfaces, and object appearances. During testing, we also create environments with unseen video backgrounds to mimic environmental factor transitions $q^e(b_{t+1}^e|b_{t}^e)$. This makes policy generalization more challenging. 
Please refer to \Apxref{apx:imp} for a full description of our experiment setup and implementation details of the baselines and our algorithm.

\paragraph{Results} In \Tableref{tab:comparative_evaluation}, we show the final average performance of each algorithm on \emph{unseen} test environments $\gE_{\text{test}}$. The corresponding learning curves are shown in \Figref{fig:learning_curve}. This metric shows the generalizability of each RL agent. All these agents are allowed to collect data from $\gE_{\text{train}}$ ($N=3$) with static environmental factors. Our PA-SF achieves SOTA performance on all tasks. On testing environments, we achieve a relative reduction around 40\% to 65\% of the corresponding metrics over vanilla Skew-Fit w.r.t the optimal metric possible (Oracle Skew-Fit). Oracle Skew-Fit refers to the performance of a Skew-Fit algorithm trained directly on the single environment (and \emph{not} simultaneously on all $\gE_{\text{train}}$).

Other invariant policy learning methods perform sluggishly on all tasks. For DBC and MISA, we hypothesize that they struggle for goal-conditioned problems since the model bisimulation metric is defined for a single MDP. In \mdp s, this means acquiring a set of encoders $\Phi_g$ that achieves model bisimulation for every possible $g$ and is thus inefficient for learning. By design, our method is not susceptible to this issue as the perfect alignment is a universal invariant representation for all goals. Data augmentation via RAD provides marginal improvement over the vanilla Skew-Fit. Nevertheless, we believe developing adequate data augmentation techniques for GBMDPs is an important research problem and is orthogonal with our method.

Additionally, we also show the performance of PA-SF on the training environments in \Tableref{tab:comparative_evaluation}. PA-SF is still comparable and as sample-efficient as Skew-Fit in the training environments. This supports the claim that the $(E)$ term in the R.H.S of \Eqref{equ:padg} is reduced to almost zero via RL training in practice.
\subsection{Design Evaluation} \label{sec:ablate}

In this subsection, we conduct comprehensive analysis on the design of  PA-SF to interpret how well it carries out the theoretical framework discussed in \secref{sec:dgtheory} and \secref{sec:alignsample}.

To begin with, we show the learning curves in \Figref{fig:ablation} of different ablations of PA-SF in the \textit{Door} environment during both training and testing. To understand the roles of $L^{\text{DIFF}}$ and $L^{\text{MMD}}$, PA-SF (w/o D) excludes $L^{\text{DIFF}}$ and PA-SF (w/o MD) excludes both losses\footnote{A single $L^{\text{DIFF}}$ is not useful here.}.  Noticing that PA-SF (w/o MD) is equivalent to the Skew-Fit algorithm with our proposed VAE structure (\Figref{fig:vae_struct}).  We also add PA-SF (w/o AS) which excludes aligned sampling.

\begin{figure}[h]
  \centering
  \includegraphics[width=\linewidth]{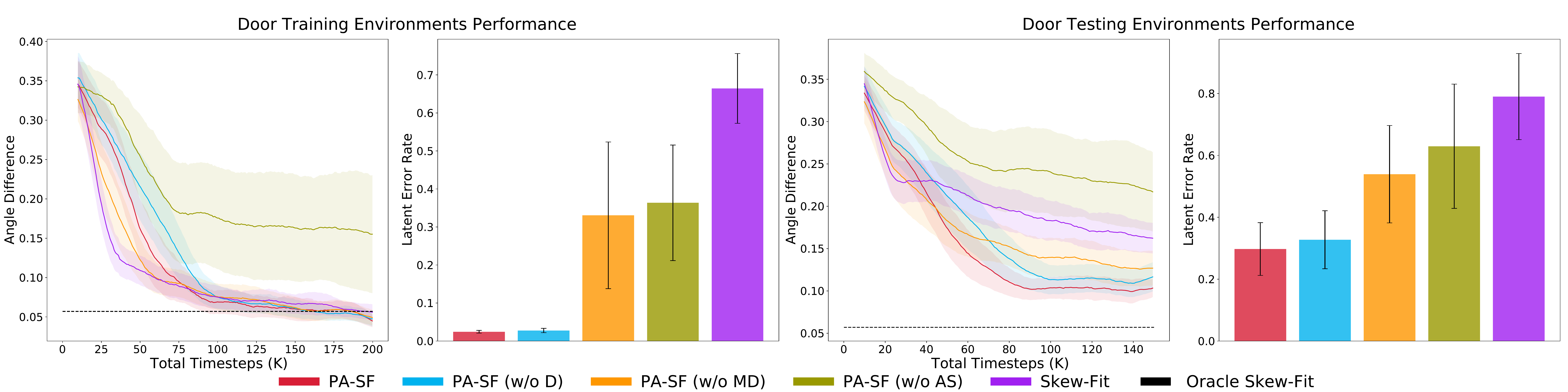}
  \caption{Ablation of PA-SF and visualization of the latent representation via LER metric. All curves represent the mean and one standard deviation across 7 seeds.}
  \label{fig:ablation}
  \vskip -0.2in
\end{figure}

Additionally, we also quantify the quality of the latent representation $\Phi(x^e(s))$ in \Figref{fig:ablation} via the metric \emph{Latent Error Rate} (LER). LER is defined as the average over environment set $E \in \{\gE_{\text{train}}, \gE_{\text{test}}\}$ as follows:
\begin{align*}
    \text{Err}(\Phi) = \E_{e \sim E, s \sim \gS} \left[ \frac{\parallel \Phi(x^e(s)) - \Phi(x^{e_0}(s)) \parallel_2}{\parallel \Phi(x^e(s)) \parallel_2} \right]
\end{align*}

In general, the smaller $\text{Err}(\Phi)$ is, the closer the encoder $\Phi$ is to perfect alignment over the environments $E$. We first focus on the discussion about training performance.
\begin{enumerate}[leftmargin=0.2in, itemsep=-0.5pt, topsep=-0.5pt, partopsep=-1.5pt]
    \item $\Phi$ achieves the \emph{if} condition of perfect alignment over $\gE_{\text{train}}$ via $L^{\text{MMD}}$ as the LER value of PA-SF and PA-SF (w/o D) is almost $0$. While without MMD loss, PA-SF (w/o MD) and Skew-Fit struggle with large LER value despite achieving good training performance. Furthermore, the comparison between PA-SF and PA-SF (w/o AS) demonstrates the importance of using aligned data in the MMD loss (Otherwise, the matching is inherently erroneous). 
    \item The \emph{only if} condition, i.e., $\Phi(x^e(s)) = \Phi(x^{e'}(s'))$ only if $s=s'$, is also achieved empirically by visualizing the reconstruction of the VAE (\Figref{fig:additional_recon} in \Apxref{apx:additional}) and we believe this is satisfied by both the difference loss and the reconstruction loss. Under the \emph{only if} condition, the SAC \cite{haarnoja2018sac} trained on the latent space achieves the optimal performance. In contrast, PA-SF (w/o AS) fails to learn well on the training environments as its latent representation is mixed over different states.
\end{enumerate}

Second, we focus on the generalization performance on target domains $t$, i.e., term $(t)$ in \Eqref{equ:padg}. We observe the following:

\begin{enumerate}[leftmargin=0.2in, itemsep=-0.5pt, topsep=-0.5pt, partopsep=-1.5pt]
    \item As shown by the learning curve of test environments, the target domain performance of different ablations match that of the LER metric: SkewFit, PA-SF (w/o AS) > PA-SF (w/o MD) > PA-SF (w/o D) > PA-SF. During training, these ablations have almost the same performance, except PA-SF (w/o AS). This indicates that the increased test performance indeed comes from the improved representation quality of the encoder $\Phi$, i.e., more aligned. This supports our claim at the end of \secref{sec:dgtheory} and the upper bound analysis on the $(t)$ term in \Apxref{apx:remain_discuss}, that the increased invariant property of $\Phi$ produces better domain generalization performance.
    \item In test environment ablations, the LER is reduced significantly on methods with $L^{\text{MMD}}$. This supports our claim that a perfect alignment encoder on training environments also improves the encoder's invariant property on unseen environments. In addition, by encouraging dispersed latent representation, the difference loss $L^{\text{DIFF}}$ also plays a role in reducing LER during testing. This supports the necessity of both losses for generalization.
\end{enumerate}

We observe the similar results in other tasks as well (\Apxref{apx:additional}). Here, we also visualize the latent space by t-SNE plot to illustrate the perfect alignment on task \textit{Push}. Dots in training environments are matched perfectly and the corresponding test environment dot is approximately near as expected.

\begin{figure}
  \centering
  \includegraphics[scale=0.35]{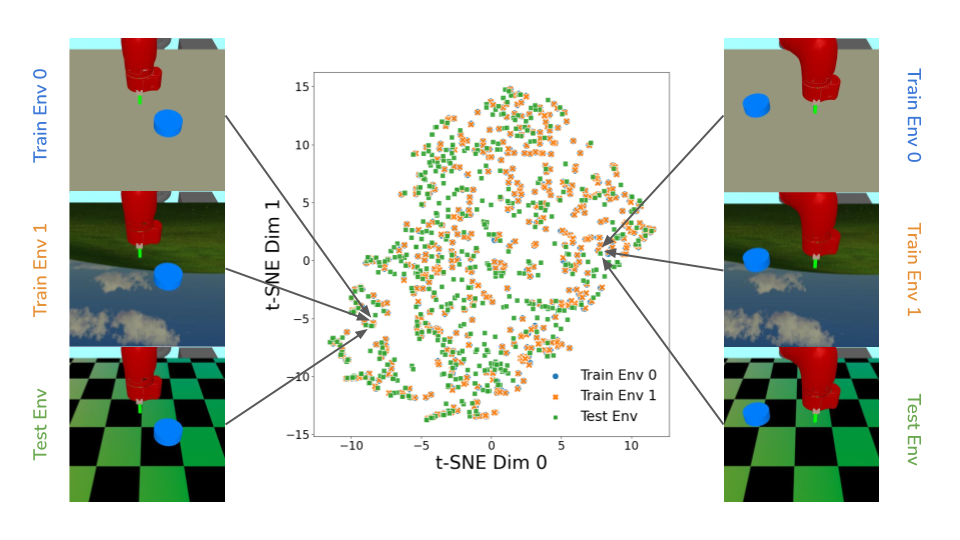}
  \caption{t-SNE visualization of the latent space $\Phi(x^e)$ trained with PA-SF for three environments: 2 training and 1 testing of \textit{Push} as well as instances visualization.}
  \label{fig:tsne}
  \vskip -0.2in
\end{figure}

\section{Related Work} \label{sec:related}
\textbf{Goal-conditioned RL:} Goal-conditioned RL \cite{kaelbling1993learning,schaul2015universal} removes the need for complicated reward shaping by only rewarding agents for reaching a desired set of goal states. 
RIG \cite{nair2018rig} is the seminal work for visual-input, reward-free learning in goal-conditioned MDPs. Skew-Fit \cite{pong2020skew} improves over RIG \cite{nair2018rig} in training efficiency by ensuring the behavioural goals used to explore are diverse and have wide state coverage. However, Skew-Fit has its own limitation in understanding the semantic meaning of the goal-conditioned task. To acquire more meaningful goal's and observation's latent representation, several approaches apply inductive biases or seek human feedback. ROLL \cite{wang2020roll} applies object extraction methods under strong assumptions, while WSC \cite{lee2020weakly} uses weak binary labeled data as the reward function. 
Others explore the same goal-conditioned RL problem via hindsight experience replay \cite{andry2017her, ren2019exploration, ghosh2019isl}, unsupervised reward learning \cite{pere2018unsupervisedgoal}, inverse dynamics models \cite{paster2020glamour}, C-learning \cite{eysenbach2020clearn}, goal generation \cite{florensa2018goalgan, nair2019hierforsignt, pitis2020maximumentro}, goal-conditioned forward models \cite{nair2020goalmodel}, and hierarchical RL \cite{li2021slow, nachum2018nor, zhang2020world, hou2020hac}. 
Our study focus on learning goal-conditioned policies that is invariant of spurious environmental factors. We aim to learn a policy that can generalize to visual goals in unseen test environments.

\textbf{Learning Invariants in RL:} Robustness to domain shifts is crucial for real-world applications of RL. \cite{zhang2020inblock, zhang2020inrepr, gelada2019deepmdp} implement the model-bisimulation framework \cite{ferns2011bisimulation} to acquire a minimal but sufficient representation for solving the MDP problem. However, model-bisimulation for high-dimension problems typically requires domain-invariant and dense rewards. These assumptions do not hold in \mdp s. Contrastive Metric Embeddings (CME) \cite{agarwal2021contrastivemdp} instead uses $\pi^*$-bisimulation metric but it also requires extra information of the optimal policy. Another line of work tries to address these issues via self-supervised learning. \cite{stone2021distractingbench} tests multiple data augmentation methods including RAD \cite{laskin2020rad} and DrQ \cite{kostrikov2020image} to boost the robustness of the representation as well as the policy. Our work can also apply data augmentation in practice. However, we find that RAD is not very helpful in the Skew-Fit framework. Additionally, \cite{hansen2020selfdeploy, bodnar2020geometric} use self-supervised correction during real-world adaptation like sim2real transfer but these methods are incompatible for domain generalization.
\section{Conclusion}

In this paper, we study the problem of learning invariant policies in Goal-conditioned RL agents. The problem is formulated as a \mdp{}, which is an extension of Goal-conditioned MDPs and Block MDPs where we want the agent's policy to generalize to unseen test environments after training on several training environments.

As supported by the generalization bound for \mdp, we propose a simple but effective heuristic, i.e., perfect alignment which we can minimize the bound asymptotically and benefit the generalization. To learn a perfect alignment encoder, we propose a practical method based on aligned sampling. The method resembles contrastive learning: matching latent representation of aligned data via MMD loss and dispersing the whole latent representations via the DIFF loss. Finally, we propose a practical implementation Perfect Alignment for Skew-Fit (PA-SF) by adding the perfect alignment loss to Skew-Fit and changing the VAE structure to handle \mdp s.

The empirical evaluation shows that our method is the SOTA algorithm and achieves a remarkable increase in test environments' performance over other methods. We also compare our algorithm with several ablations and analyze the representation quantitatively. The results support our claims in the theoretical analysis that perfect alignment criteria is effective and that we can effectively optimize the criteria with our proposed method.
We believe the perfect alignment criteria will enable applications in diverse problem settings and offers interesting directions for future work, such as extensions to other goal-conditioned learning frameworks \cite{eysenbach2020clearn, paster2020glamour}. 


\section*{Acknowledgements}
We are grateful for the feedback from anonymous reviewers. Resources used in preparing this research were provided, in part, by the Province of Ontario, the Government of Canada through CIFAR, and companies sponsoring the Vector Institute <\href{http://www.vectorinstitute.ai/partners}{http://www.vectorinstitute.ai/partners}>.



\bibliographystyle{unsrtnat}  
\bibliography{neurips_2021}  

\clearpage

\appendix

\section{Notation} \label{apx:notation}

\renewcommand*{\arraystretch}{1.22}{ 
{\setlength{\tabcolsep}{0.25em} 
\begin{longtable}{ll}
    \caption{Description for symbols} 
    \label{tab:symbol_description} \\
    
    \toprule
    Symbol & Description \\
    \midrule 
    \endfirsthead
    
    \multicolumn{2}{c}{\tablename\ \thetable{} -- continued from previous page} \\
    \toprule
    Symbol & Description \\
    \midrule
    \endhead
    
    \toprule
    \multicolumn{2}{c}{{Continued on next page}} \\ 
    \midrule
    \endfoot
    
    \bottomrule
    \endlastfoot
    
    $M^{\gE}$ & A family of Goal-conditioned Block MDPs \\
    $e$ & Environment index \\
    $\gS$ & Shared state space among environments $e \in \gE$ \\
    $\gA$ & Shared action space among environments $e \in \gE$ \\
    $\gX^e$ & Specific observation space for environment $e$ \\
    $\gT^e$ & Specific transition dynamic for environment $e$ \\
    $\gG$ & Shared goal space among environments $e \in \gE$ \\
    $\gamma$ & Shared discount factor among environments $e \in \gE$ \\
    \multirow{2}{*}{$b^e$} & Environmental factor for environment $e$ \\
    & (e.g. background) \\
    \multirow{2}{*}{$\gB^e$} & Specific environmental factor space for environment $e$ \\
    & (i.e. video backgrounds are allowed) \\
    \multirow{2}{*}{$x^e_t = x^e(s_t, b^e_t)$} & Observation determined by state $s$ and environmental \\
    & factor $b^e$ for environment $e$ at timestep $t$ \\
    $p(s_{t + 1} | s_t, a_t)$ & State transition shared among environments \\
    $q^e(b^e_{t + 1} | b^e_t)$ & Environmental factor transition for environment $e$ \\
    $\gX^{\gE} = \cup_{e \in \gE} \gX^e$ & Joint set of observation spaces \\
    $\pi(a | x^e, g)$ & Goal-conditioned policy shared among environments \\
    $J(\pi)$ & Objective function for policy $\pi$ \\
    $J^e(\pi)$ & Objective function for policy $\pi$ in environment $e$ \\
    \multirow{2}{*}{$p_{\pi}^e(s_t=g|g)$} & Probability of achieving goal $g$ under policy $\pi(\cdot|x^e, g)$ \\
    & at timestep $t$ in environment $e$ \\
    \multirow{2}{*}{$\pi_G(\cdot|x^e, g)$} & Optimal policies which are invariant over all \\
    & environments \\
    $\{e_i\}_{i=1}^N = \gE_{\text{train}}$ & Training environments \\
    $\rho(x, g) = \rho(g) \rho(x | g)$ & Joint distributions of goals and observations \\
    \multirow{2}{*}{$\rho_{\pi}^e(x^e|g)$} & Occupancy measure of $x^e$ in environment $e$ under \\
    & policy $\pi(\cdot|x^e, g)$ \\
    $\rho_{\pi}^e(x^e)$ & Marginal distribution of $\rho_{\pi}^e(x^e, g)$ over goals \\
    $\epsilon^{\rho(x,g)}(\pi_1 \parallel \pi_2)$ & Averaged Total Variation between policy $\pi_1$ and $\pi_2$ \\
    $\Pi$ & Policy class (i.e. space for all possible policies) \\
    \multirow{2}{*}{$d_{\Pi \Delta \Pi}(\rho(x, g), \rho(x, g)')$} & $\Pi \Delta \Pi$-divergence of two joint distributions $\rho(x, g)$ \\
    & and $\rho(x, g)'$ in terms of the policy class $\Pi$ \\
    \multirow{3}{*}{$\epsilon^{e_i}(\pi_1 \parallel \pi_2) = \epsilon^{\rho_{\pi}^{e_i}(x^{e_i}, g)}(\pi_1 \parallel \pi_2)$} & Total Variation between policy $\pi_1$ and $\pi_2$ averaged \\
    & over joint occupancy measure under policy $\pi$ in \\
    & training environment $e_i$ \\
    \multirow{3}{*}{$\epsilon^t(\pi_1 \parallel \pi_2) = \epsilon^{\rho_{\pi_G}^t(x^t, g)}(\pi_1 \parallel \pi_2)$} & Total Variation between policy $\pi_1$ and $\pi_2$ averaged \\
    & over joint occupancy measure under policy $\pi_G$ in \\
    & testing environment $t$ \\
    \multirow{3}{*}{$\pi^*$} & The closest policy for training environments in policy \\
    & class $\Pi$ w.r.t.optimal invariant policy $\pi_G$ measured \\
    & by averaged Total Variation \\
    \multirow{3}{*}{$\delta$} & Maximum $\Pi \Delta \Pi$-divergence between occupancy \\ 
    & measure for two different training environments under \\
    & given policy $\pi$ \\
    \multirow{2}{*}{$\lambda$} & Performance of $\pi^*$ in both training and testing \\ 
    & environments in terms of average TV distance \\
    \multirow{2}{*}{$B$} & Characteristic set of joint distributions determined by \\
    & $\gE_{\text{train}}$ and policy class $\Pi$ \\
    $\Phi$ & Observation encoder \\
    $z^e(s) = \Phi(x^e(s))$ & Latent representation of observation $x^e$ with state $s$ \\
    $\Pi_{\Phi} = \{w \circ (\Phi(x), g)\}, \forall w\}$ & Policy class induced by encoder $\Phi$ with any function $w$ \\
    \multirow{3}{*}{$\tilde{\rho}(x, g)$} & The closest occupancy measure in characteristic set $B$ \\
    & w.r.t. occupancy measure in testing environment under \\
    & $\Pi \Delta \Pi$-divergence \\
    $\{s^e_t(a_{0:t})\}_{t=0}^T$ & Set of states along trajectory $\{x_0^e, a_0, x_1^e, a_1, \ldots, x^e_T\}$ \\
    \multirow{2}{*}{$\{x^{e_i}_t(a_{0:t})\}$} & Aligned observations in environment $e_i$ for action \\
    & sequence $\{a_0, \ldots, a_{t - 1}\}$ (\texttt{numpy} style indexing)\\
    $\gR_{\text{align}}$ & Replay buffer for aligned transitions \\
    \multirow{2}{*}{$\gB_{\text{align}} = \{x^{e_i}(s^{e_i}_t(a_{0:t}^b)), \forall e_i \in \gE_{\text{train}}\}_{b=1}^B$} & Batch of aligned observations from all the training \\
    & environments \\
    $L^{\text{MMD}}(\Phi)$ & MMD loss for encoder $\Phi$ \\
    $\psi(z)$ & Random expansion function for latent representation $z$ \\
    $L^{\text{DIFF}}(\Phi)$ & Difference loss for encoder $\Phi$ \\
    $\gR^e$ & Replay buffer for transitions from environment $e$ \\
    $L^{\text{PA}}$ & Perfect alignement loss \\
    $L^{\text{RECON}}$ & Reconstruction loss \\
    $\beta$ & KL divergence coefficient \\
    $\alpha_{\text{MMD}}$ & MMD loss coefficient \\
    $\alpha_{\text{DIFF}}$ & Difference loss coefficient \\
    $\text{Err}(\Phi)$ & Latent error rate for encoder $\Phi$ \\
\end{longtable}
}
}

\clearpage

\section{Algorithm} \label{apx:algo}

The main difference between the PA-SF and Skew-Fit are (i) separate replay buffer for each training environments $\gR = \{ \gR^e, e \in \gE_{\text{train}} \}$, (ii) an additional aligned buffer for the aligned data $\gR_{\text{align}} = \{ \gR_{\text{align}}^e, e \in \gE_{\text{train}} \}$ and a corresponding aligned sampling procedure, (iii) VAE training uses \Eqref{equ:pasfobj} with mini-batches from both replay buffer and aligned buffer. The overall algorithm is described in Algorithm \ref{algo:general} and implementation details are listed in \Apxref{apx:imp}.

\begin{algorithm}[h]
    \begin{algorithmic}[1]
        \REQUIRE $\beta$-VAE decoder, encoder $q_{\phi}$, goal-conditioned policy $\pi_{\theta}$, goal-conditioned value function $Q_w$, skew parameter $\alpha$, VAE training schedule, training environments $\mathcal{E}_{\text{train}}$, replay buffer $\gR = \{ \gR^e, e \in \gE_{\text{train}} \}$, aligned buffer $\gR_{\text{align}} = \{ \gR_{\text{align}}^e, e \in \gE_{\text{train}} \}$, coefficients in \Eqref{equ:pasfobj}.
        \FOR{$m = 0, \ldots, M - 1$ episodes}
            \FOR[Exploration Rollout]{$e = 0, \dots, N - 1$ training environments}
                \STATE Sample goal observation $x^e(g) \sim p_{\text{skewed}}^{e, m}$ and encode as $z^e(g) = q_{\phi}(x^e(g))$.
                \STATE Sample initial observation $x_0^e$ from the environment $e$.
                \FOR{$t = 0, \ldots, H - 1$ steps}
                    \STATE Get action $a_t \sim \pi_{\theta}(q_{\phi}(x_t^e), g)$.
                    \STATE Get next state $x_{t+1}^e \sim p(\cdot \mid x_t^e, a_t)$.
                    \STATE Store $(x_t^e, a_t, x_{t+1}^e, x^e(g))$ into replay buffer $\mathcal{R}^e$.
                \ENDFOR
            \ENDFOR
            \STATE Sample action sequences $\{a_{0:T}\}$ by executing the policy \\ in a random training environment. \COMMENT{Aligned Sampling}
            \FOR{$e = 0, \dots, N - 1$ training environments}
                \STATE Sample initial state $x^e_0 (s_0^e)$ from the environment $e$.
                \STATE Rollout action sequence $\{a_{0:T}\}$ to get $\{x^e_t(s^e_t(a_{0:t})) \}_{t=0}^T$.
            \ENDFOR
            \STATE Store $\{x^e_t(s^e_t(a_{0:t})) \}_{t=0}^T$ in aligned buffer $\gR_{\text{align}}^e$ indexed by $a_{0:t}$ for $e \in \gE_{\text{train}}$.
            \FOR[Policy Gradient]{$i = 0, \dots, I - 1$ training iterations}
                \STATE Sample transition $(x^e_{t'}, a_{t'}, x^e_{t' + 1}, z^e(g)) \sim \mathcal{R}^e$ for all $e \in \gE_{\text{train}}$.
                \STATE Encode $z^e_{t'} = q_\phi(x^e_{t'}), z^e_{t' + 1} = q_\phi(x^e_{t' + 1})$.
                \STATE (Probability $0.5$) replace $z^e(g)$ with $q_{\phi}(x'(g))$ where $x'(g) \sim p_{\text{skewed}}^{e, m}$.
                \STATE Compute new reward $r = -||z_{t' + 1}^e - z^e(g)||_2$.
                \STATE Update $\pi_{\theta}$ and $Q_{w}$ via SAC on $(z_{t'}^e, a_{t'}, z_{t' + 1}^e, z^e(g), r)$.
            \ENDFOR
            \FOR[Hindsight Relabeling]{$t=0,...,H -1$ steps}
                \FOR{$j = 0, ..., J - 1$ steps}
                    \STATE Sample future state $x_{h_j}^e$, $t < h_j \leq H-1$ for all $e \in \gE_{\text{train}}$.
                    \STATE Store $(x_t^e, a_t, x_{t+1}^e, q_\phi \left( x_{h_j}^e) \right)$ into $\mathcal{R}^e$.
                \ENDFOR
            \ENDFOR
            \STATE Construct skewed replay buffer distribution $p_{\text{skewed}}^{e, m + 1}$ using data \\ from $\mathcal{R}^e$ for all $e \in \mathcal{E}_{\text{train}}$. \COMMENT{Skewing Replay Buffers}
            \STATE Fine-tune $\beta$-VAE on $ \{ x^{e} \}_{b=1}^B \sim p_{\text{skewed}}^{e, m + 1}$ and $\{ x^{e}(s^{e}_t(a^b_{0:t})) \}_{b=1}^B \sim \mathcal{R}_{\text{aligned}}^e$ \\ for all $e \in \gE_{\text{train}}$ according to the VAE training schedule and via \Eqref{equ:pasfobj}. \COMMENT{VAE Training}
        \ENDFOR
    \end{algorithmic}
    \caption{Perfect Alignment for Skew-Fit (PA-SF).}
    \label{algo:general}
\end{algorithm}

\clearpage

\newtheorem{remark}{Remark}

\section{Proofs and Discussions} \label{apx:proof}

In this section, we provide detailed proofs and statements omitted in the main text. In addition, we also discuss the assumptions we make in the analysis in detail.

\subsection{Illustration of Different MDP Problems}

Here, we illustrate different graphical models of related MDPs including Block MDPs (Figure \ref{fig:blockmdp}), Goal-conditioned MDPs (Figure \ref{fig:goalmdp}), and ours Goal-conditioned Block MDPs (GBMDP) (Figure \ref{fig:gbmdp}). We use the indicator funtion in the Goal-conditioned and GBMDP settings to emphasize that the reward is sparse. In practice, the goal $g$ may only be indirectly observed as $x^e(g)$, such as future state in pixel space for a particular domain. 

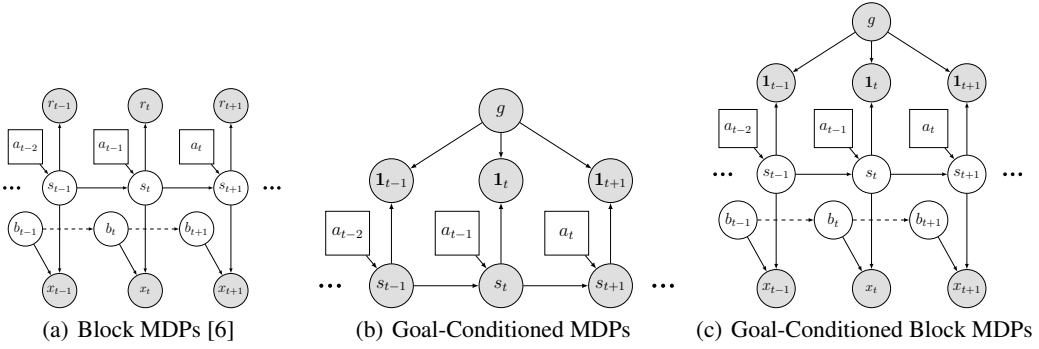
\begin{figure}[h]
    \subfigure[Block MDPs \cite{zhang2020inblock}]{ 
    \label{fig:blockmdp} 
    \resizebox{!}{1.15in}{
        \begin{tikzpicture}
            \node [circle,draw=black,fill=white,inner sep=0pt,minimum size=1.0cm] (s_{t - 1}) at (0,0) {\large $s_{t - 1}$};
            \node [circle,draw=black,fill=white,inner sep=0pt,minimum size=1.0cm] (b_{t - 1}) at (-1.0,-1.2) {\large $b_{t - 1}$};
            \node [rectangle,draw=black,fill=white,inner sep=0pt,minimum size=1.0cm] (a_{t - 2}) at (-1.0,1.2) {\large $a_{t - 2}$};
            \node [obs,minimum size=1.0cm] (x_{t - 1}) at (0,-3.0) {\large $x_{t - 1}$};
            \node [obs,minimum size=1.0cm] (r_{t - 1}) at (0,2.4) {\large $r_{t - 1}$};
        
            \node [circle,draw=black,fill=white,inner sep=0pt,minimum size=1.0cm] (s_t) at (2.5,0) {\large $s_t$};
            \node [circle,draw=black,fill=white,inner sep=0pt,minimum size=1.0cm] (b_t) at (1.5,-1.2) {\large $b_t$};
            \node [rectangle,draw=black,fill=white,inner sep=0pt,minimum size=1.0cm] (a_{t - 1}) at (1.5,1.2) {\large $a_{t - 1}$};
            \node [obs,minimum size=1.0cm] (x_t) at (2.5,-3.0) {$x_t$};
            \node [obs,minimum size=1.0cm] (r_{t}) at (2.5,2.4) {\large $r_{t}$};
        
            \node [circle,draw=black,fill=white,inner sep=0pt,minimum size=1.0cm] (s_{t + 1}) at (5,0) {\large $s_{t + 1}$};
            \node [circle,draw=black,fill=white,inner sep=0pt,minimum size=1.0cm] (b_{t + 1}) at (4,-1.2) {\large $b_{t + 1}$};
            \node [rectangle,draw=black,fill=white,inner sep=0pt,minimum size=1.0cm] (a_t) at (4,1.2) {\large $a_t$};
            \node [obs,minimum size=1.0cm] (x_{t + 1}) at (5,-3.0) {\large $x_{t + 1}$};
            \node [obs,minimum size=1.0cm] (r_{t + 1}) at (5,2.4) {\large $r_{t + 1}$};
            
            \path [draw,->] (a_{t - 2}) edge (s_{t - 1});
            \path [draw,->] (s_{t - 1}) edge (x_{t - 1});
            \path [draw,->] (b_{t - 1}) edge (x_{t - 1});
            \path [draw,->] (s_{t - 1}) edge (r_{t - 1});
            
            \path [draw,->] (a_{t - 1}) edge (s_t);
            \path [draw,->] (s_t) edge (x_t);
            \path [draw,->] (b_t) edge (x_t);
            \path [draw,->] (s_t) edge (r_t);
            
            \path [draw,->] (a_t) edge (s_{t + 1});
            \path [draw,->] (s_{t + 1}) edge (x_{t + 1});
            \path [draw,->] (b_{t + 1}) edge (x_{t + 1});
            \path [draw,->] (s_{t + 1}) edge (r_{t + 1});
            
            \path [draw,->] (s_{t - 1}) edge (s_t);
            \path [dashed,->] (b_{t - 1}) edge (b_t);
            
            \path [draw,->] (s_t) edge (s_{t + 1});
            \path [dashed,->] (b_t) edge (b_{t + 1});
            
            \node[mark size=1pt,color=black] at (-1.2,0) {\pgfuseplotmark{*}};
            \node[mark size=1pt,color=black] at (-1.4,0) {\pgfuseplotmark{*}};
            \node[mark size=1pt,color=black] at (-1.6,0) {\pgfuseplotmark{*}};
            
            \node[mark size=1pt,color=black] at (6.0,0) {\pgfuseplotmark{*}};
            \node[mark size=1pt,color=black] at (6.2,0) {\pgfuseplotmark{*}};
            \node[mark size=1pt,color=black] at (6.4,0) {\pgfuseplotmark{*}};
        \end{tikzpicture}
        }
    }
    \subfigure[Goal-Conditioned MDPs]{ 
        \label{fig:goalmdp} 
        \resizebox{!}{1.15in}{
        \begin{tikzpicture}
            \node [obs,minimum size=1.0cm] (s_{t - 1}) at (0,0) {\large $s_{t - 1}$};
            \node [rectangle,draw=black,fill=white,inner sep=0pt,minimum size=1.0cm] (a_{t - 2}) at (-1.0,1.2) {\large $a_{t - 2}$};
            \node [obs,minimum size=1.0cm] (r_{t - 1}) at (0,2.4) {\large $\mathbf{1}_{t - 1}$};
        
            \node [obs,minimum size=1.0cm] (s_t) at (2.5,0) {\large $s_t$};
            \node [rectangle,draw=black,fill=white,inner sep=0pt,minimum size=1.0cm] (a_{t - 1}) at (1.5,1.2) {\large $a_{t - 1}$};
            \node [obs,minimum size=1.0cm] (r_t) at (2.5,2.4) {\large $\mathbf{1}_{t}$};
            \node [obs,minimum size=1.0cm] (g) at (2.5,4) {\large $g$};
        
            \node [obs,minimum size=1.0cm] (s_{t + 1}) at (5,0) {\large $s_{t + 1}$};
            \node [rectangle,draw=black,fill=white,inner sep=0pt,minimum size=1.0cm] (a_t) at (4,1.2) {\large $a_t$};
            \node [obs,minimum size=1.0cm] (r_{t + 1}) at (5,2.4) {\large $\mathbf{1}_{t + 1}$};
            
            \path [draw,->] (a_{t - 2}) edge (s_{t - 1});
            \path [draw,->] (s_{t - 1}) edge (r_{t - 1});
            
            \path [draw,->] (a_{t - 1}) edge (s_t);
            \path [draw,->] (s_t) edge (r_t);
            
            \path [draw,->] (a_t) edge (s_{t + 1});
            \path [draw,->] (s_{t + 1}) edge (r_{t + 1});
            
            \path [draw,->] (s_{t - 1}) edge (s_t);
            
            \path [draw,->] (s_t) edge (s_{t + 1});
            
            \path [draw,->] (g) edge (r_{t - 1});
            \path [draw,->] (g) edge (r_t);
            \path [draw,->] (g) edge (r_{t + 1});
            
            \node[mark size=1pt,color=black] at (-1.2,0) {\pgfuseplotmark{*}};
            \node[mark size=1pt,color=black] at (-1.4,0) {\pgfuseplotmark{*}};
            \node[mark size=1pt,color=black] at (-1.6,0) {\pgfuseplotmark{*}};
            
            \node[mark size=1pt,color=black] at (6.0,0) {\pgfuseplotmark{*}};
            \node[mark size=1pt,color=black] at (6.2,0) {\pgfuseplotmark{*}};
            \node[mark size=1pt,color=black] at (6.4,0) {\pgfuseplotmark{*}};
        \end{tikzpicture}
        }
    }
    \subfigure[Goal-Conditioned Block MDPs]{ 
        \label{fig:gbmdp} 
        \resizebox{!}{1.6in}{
        \begin{tikzpicture}
            \node [circle,draw=black,fill=white,inner sep=0pt,minimum size=1.0cm] (s_{t - 1}) at (0,0) {\large $s_{t - 1}$};
            \node [circle,draw=black,fill=white,inner sep=0pt,minimum size=1.0cm] (b_{t - 1}) at (-1.0,-1.2) {\large $b_{t - 1}$};
            \node [rectangle,draw=black,fill=white,inner sep=0pt,minimum size=1.0cm] (a_{t - 2}) at (-1.0,1.2) {\large $a_{t - 2}$};
            \node [obs,minimum size=1.0cm] (x_{t - 1}) at (0,-3.0) {\large $x_{t - 1}$};
            \node [obs,minimum size=1.0cm] (r_{t - 1}) at (0,2.4) {\large $\mathbf{1}_{t - 1}$};
        
            \node [circle,draw=black,fill=white,inner sep=0pt,minimum size=1.0cm] (s_t) at (2.5,0) {\large $s_t$};
            \node [circle,draw=black,fill=white,inner sep=0pt,minimum size=1.0cm] (b_t) at (1.5,-1.2) {\large $b_t$};
            \node [rectangle,draw=black,fill=white,inner sep=0pt,minimum size=1.0cm] (a_{t - 1}) at (1.5,1.2) {\large $a_{t - 1}$};
            \node [obs,minimum size=1.0cm] (x_t) at (2.5,-3.0) {$x_t$};
            \node [obs,minimum size=1.0cm] (r_t) at (2.5,2.4) {\large $\mathbf{1}_t$};
            \node [obs,minimum size=1.0cm] (g) at (2.5,4) {\large $g$};
        
            \node [circle,draw=black,fill=white,inner sep=0pt,minimum size=1.0cm] (s_{t + 1}) at (5,0) {\large $s_{t + 1}$};
            \node [circle,draw=black,fill=white,inner sep=0pt,minimum size=1.0cm] (b_{t + 1}) at (4,-1.2) {\large $b_{t + 1}$};
            \node [rectangle,draw=black,fill=white,inner sep=0pt,minimum size=1.0cm] (a_t) at (4,1.2) {\large $a_t$};
            \node [obs,minimum size=1.0cm] (x_{t + 1}) at (5,-3.0) {\large $x_{t + 1}$};
            \node [obs,minimum size=1.0cm] (r_{t + 1}) at (5,2.4) {\large $\mathbf{1}_{t+1}$};
            
            \path [draw,->] (a_{t - 2}) edge (s_{t - 1});
            \path [draw,->] (s_{t - 1}) edge (x_{t - 1});
            \path [draw,->] (b_{t - 1}) edge (x_{t - 1});
            \path [draw,->] (s_{t - 1}) edge (r_{t - 1});
            
            \path [draw,->] (a_{t - 1}) edge (s_t);
            \path [draw,->] (s_t) edge (x_t);
            \path [draw,->] (b_t) edge (x_t);
            \path [draw,->] (s_t) edge (r_t);
            
            \path [draw,->] (a_t) edge (s_{t + 1});
            \path [draw,->] (s_{t + 1}) edge (x_{t + 1});
            \path [draw,->] (b_{t + 1}) edge (x_{t + 1});
            \path [draw,->] (s_{t + 1}) edge (r_{t + 1});
            
            \path [draw,->] (s_{t - 1}) edge (s_t);
            \path [dashed,->] (b_{t - 1}) edge (b_t);
            
            \path [draw,->] (s_t) edge (s_{t + 1});
            \path [dashed,->] (b_t) edge (b_{t + 1});
            
            \path [draw,->] (g) edge (r_{t - 1});
            \path [draw,->] (g) edge (r_t);
            \path [draw,->] (g) edge (r_{t + 1});
            
            \node[mark size=1pt,color=black] at (-1.2,0) {\pgfuseplotmark{*}};
            \node[mark size=1pt,color=black] at (-1.4,0) {\pgfuseplotmark{*}};
            \node[mark size=1pt,color=black] at (-1.6,0) {\pgfuseplotmark{*}};
            
            \node[mark size=1pt,color=black] at (6.0,0) {\pgfuseplotmark{*}};
            \node[mark size=1pt,color=black] at (6.2,0) {\pgfuseplotmark{*}};
            \node[mark size=1pt,color=black] at (6.4,0) {\pgfuseplotmark{*}};
        \end{tikzpicture}
        }
    }
    \caption{Comparison of graphical models of \textbf{(a)} a Block MDP \cite{du2019block,zhang2020inblock}, \textbf{(b)} a Goal-Conditioned MDP \cite{kaelbling1993learning,schaul2015universal,andry2017her}, and \textbf{(c)} our proposed Goal-Conditioned Block MDP. The agent takes in the goal $g$ and observation $x_t$, which is produced by the domain invariant state $s_t$ and environmental state $b_t$, and acts with action $a_t$. $\mathbf{1}_t$ denotes the indicator function on whether the inputs are the same state. Note that $b_t$ may have temporal dependence indicated by the dashed edge. } 
\end{figure}

\subsection{Proof of \Propref{prop:gbdg}} \label{apx:gbdg_proof}

Recall that \Propref{prop:gbdg} bounds the generalization performance by 4 terms: (1) average training environments' performance, (2) optimality of the policy class, (3) $d_{\Pi \Delta \Pi}$ over all training environments, and (4) the discrepancy measure between training environments and the target environment.

We begin our analysis by proving the following two Lemmas. For simplicity, we denote $p_{\pi}^{\Delta,e}(s|g)$ as the \textit{discounted state density} as follows.
\begin{align*}
    p_{\pi}^{\Delta, e}(s|g) = (1 - \gamma) \sum_{t=0}^{\infty} \gamma^t p^e_{\pi}(s_t=s|g)
\end{align*}
where $p^e_{\pi}(s_t=s|g)$ is the probability of state $s$ under goal-conditioned policy $\pi(\cdot|x^e, g)$ at step $t$ in domain $e$ (marginalized over the initial state $s_0 \sim p(s_0)$, previous actions $a_i \sim \pi(a_i | x^e_i, g), i = 0, \dots, t - 1$, and previous states $s_i \sim p(s_i | s_{i - 1}, a_{i - 1}), i = 0, \dots, t - 1$).

\begin{lemma} \label{lemma:obj_bound}
$\forall e \in \gE_{\text{all}}$, let $\rho_{\pi}^e(x^e, g)$ denote joint distributions of $g \sim \gG$ and $x^e$ under policy $\pi(\cdot |x^e, g)$, then $\forall \pi_1, \pi_2$, we have
\begin{align*}
    |J^e(\pi_1) - J^e(\pi_2)| \leq \frac{2 \gamma}{1 - \gamma} \E_{\rho_{\pi_1}^e(x^e, g)} \left[D_{TV}(\pi_1(\cdot|x^e, g) \parallel \pi_2(\cdot|x^e, g)) \right]
\end{align*}
\end{lemma}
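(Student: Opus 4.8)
The plan is to recognize Lemma~\ref{lemma:obj_bound} as a ``performance difference'' bound and to reduce it to controlling how far apart the discounted state occupancies of $\pi_1$ and $\pi_2$ can drift when the two policies disagree only by $D_{TV}$ at each visited observation. First I would rewrite the objective per goal: since the reward is the indicator of reaching $g$, for fixed $g$ we have $J^e_g(\pi) := (1-\gamma)\sum_{t}\gamma^t p^e_\pi(s_t=g\mid g) = p^{\Delta,e}_\pi(s=g\mid g)$ and $J^e(\pi)=\E_{g}[J^e_g(\pi)]$; by the triangle inequality it then suffices to bound $|J^e_g(\pi_1)-J^e_g(\pi_2)|$ for each $g$ and take $\E_g$ at the end.

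The key structural observation is that, although $x^e_t=x^e(s_t,b^e_t)$ is not a function of the state alone, the pair $\tilde s_t := (s_t,b^e_t)$ \emph{is} Markov: its kernel factorizes as $p(s_{t+1}\mid s_t,a_t)\,q^e(b^e_{t+1}\mid b^e_t)$, the action is drawn from $\pi(\cdot\mid x^e(\tilde s_t),g)$ which depends on $\tilde s_t$ only through the observation, and the initial law of $\tilde s_0$ is policy-independent. I would run the whole argument on this augmented chain. Writing $d^{(i)}_t$ for the law of $\tilde s_t$ under $\pi_i$ (so $d^{(1)}_0=d^{(2)}_0$), the induced transition operators $T^{\pi_i}_t$ are genuine stochastic kernels and $\{s=g\}$ is $\tilde s$-measurable. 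The crux is the one-step estimate from the splitting $d^{(1)}_{t+1}-d^{(2)}_{t+1}= T^{\pi_2}_t(d^{(1)}_t-d^{(2)}_t)+(T^{\pi_1}_t-T^{\pi_2}_t)d^{(1)}_t$: the first term is non-expansive in total variation because $T^{\pi_2}_t$ is stochastic, while the data-processing inequality (total variation cannot increase under a common Markov kernel) bounds the second by $\E_{\tilde s\sim d^{(1)}_t}[D_{TV}(\pi_1(\cdot\mid x^e(\tilde s),g)\,\|\,\pi_2(\cdot\mid x^e(\tilde s),g))]$. Inducting on $t$ from $d^{(1)}_0=d^{(2)}_0$ yields $D_{TV}(d^{(1)}_t,d^{(2)}_t)\le \sum_{k=0}^{t-1}\E_{\tilde s\sim d^{(1)}_k}[D_{TV}(\pi_1\|\pi_2)]$.

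From here it is bookkeeping. Let $\mu^{(i)}=(1-\gamma)\sum_t\gamma^t d^{(i)}_t$ be the discounted occupancy, a probability measure, so $J^e_g(\pi_i)=\mu^{(i)}(\{s=g\})$ and hence $|J^e_g(\pi_1)-J^e_g(\pi_2)|\le D_{TV}(\mu^{(1)},\mu^{(2)})\le(1-\gamma)\sum_t\gamma^t D_{TV}(d^{(1)}_t,d^{(2)}_t)$ by convexity of $D_{TV}$. Substituting the inductive bound and swapping the order of the two sums turns $\sum_t\gamma^t\sum_{k<t}$ into $\tfrac{\gamma}{1-\gamma}\sum_k\gamma^k$, and recognizing $(1-\gamma)\sum_k\gamma^k d^{(1)}_k=\mu^{(1)}$ collapses everything to $\tfrac{\gamma}{1-\gamma}\,\E_{\tilde s\sim\mu^{(1)}}[D_{TV}(\pi_1\|\pi_2)]$. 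Finally, because $\rho^e_{\pi_1}(x^e,g)$ is exactly the pushforward of $\mu^{(1)}$ (together with the law of $g$) under the map $x^e(\cdot)$ and the integrand depends on $\tilde s$ only through $x^e$, taking $\E_g$ identifies this expectation with $\E_{\rho^e_{\pi_1}(x^e,g)}[D_{TV}(\pi_1\|\pi_2)]$, giving the claim with room to spare, since I obtain the sharper constant $\tfrac{\gamma}{1-\gamma}\le\tfrac{2\gamma}{1-\gamma}$.

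I anticipate the main obstacle is the second paragraph: correctly justifying that the $b^e$-augmented process restores a clean Markov/stochastic-kernel structure so that the data-processing inequality applies, rather than naively treating the policy disagreement as if it lived on $\gS$ alone. The telescoping and geometric-sum steps are routine once that structure is in place, and the factor-of-two slack in the statement means no constant has to be tracked tightly.
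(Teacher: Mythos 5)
Your proof is correct, and it takes a genuinely different route from the paper's. The paper fixes a goal $g$ and telescopes over hybrid policies $\{\pi_1 < t,\ \pi_2 \geq t\}$ that execute $\pi_1$ for the first $t$ steps and $\pi_2$ thereafter, bounds each switching term by $2\,\E_{p^e_{\pi_1}}[D_{\text{TV}}(\pi_1 \parallel \pi_2)]$, and sums the discounted series to obtain $\tfrac{2\gamma}{1-\gamma}$. You instead run a recursion on the total-variation distance between the time-$t$ marginals of the two policies (non-expansiveness of the common kernel for one term of your splitting, data processing for the cross term) and then exchange the two geometric sums. Unrolling your recursion is itself a telescoping argument, but what is telescoped differs: the paper switches which policy generates the trajectory, while you perturb the state marginals; your route passes through $D_{\text{TV}}$ of the discounted occupancy measures, which the paper's proof of this lemma never forms, and is in fact closer in spirit to the paper's own \Lemref{lemma:occupancy}, proved there via resolvent identities following Schulman et al. Your approach buys two things. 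First, a sharper constant: the data-processing step costs only $D_{\text{TV}}$, not $2D_{\text{TV}}$ (the signed measure $\pi_1 - \pi_2$ has zero total mass, or equivalently, push the joint laws $d^{(1)}_t(\tilde s)\pi_i(a\mid \tilde s)$ through the common transition kernel), so you end with $\tfrac{\gamma}{1-\gamma} \leq \tfrac{2\gamma}{1-\gamma}$. Second, more care with the environmental factor: by taking $(s_t, b^e_t)$ as the Markov state you justify the kernel structure that the paper's notation $x^e(s_t)$ silently elides, since $b^e_t$ may have its own dynamics that the paper marginalizes without comment. One caveat applies equally to both proofs: your step $|J^e_g(\pi_1) - J^e_g(\pi_2)| \leq D_{\text{TV}}(\mu^{(1)}, \mu^{(2)})$ requires interpreting $p^e_\pi(s_t = g \mid g)$ as the probability of an event $\{s_t = g\}$ (effectively a countable state space), which is exactly as informal as the paper's step of dropping the factor $p^e_{\pi_2}(s_T = s \mid s_t, a, g) \leq 1$; neither is valid for densities on a continuous state space, so you introduce no gap beyond what the paper already accepts.
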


\begin{proof}
By the definition of $J^e(\pi)$, we have
\begin{align*}
    |J^e(\pi_1) -J^e(\pi_2)| & = |\E_{g \sim \gG}[p_{\pi_1}^{\Delta, e}(g|g) - p_{\pi_2}^{\Delta, e}(g|g)]| \\
    & \leq \E_{g \sim \gG}[|p_{\pi_1}^{\Delta, e}(g|g) - p_{\pi_2}^{\Delta, e}(g|g)|]
\end{align*}
Thus, it suffices to prove $\forall g \in \gG, $
\begin{align*}
    |p_{\pi_1}^{\Delta, e}(g|g) - p_{\pi_2}^{\Delta, e}(g|g)| \leq \frac{2 \gamma}{1 - \gamma} \E_{\rho_{\pi_1}^e(x^e|g)} \left[D_{TV}(\pi_1(\cdot|x^e, g) \parallel \pi_2(\cdot|x^e, g)) \right]
\end{align*}
First, we consider $|p^e_{\pi_1}(s_T=s|g) - p^e_{\pi_2}(s_T=s|g)|$ for some fixed step $T$. Denote $\{\pi_1 < t, \pi_2 \geq t \}$ as another policy which imitates policy $\pi_1$ for first $t$ steps and then imitates $\pi_2$ for the rest. By the telescoping operation, we have $\forall s \in \gS, g \in \gG$
\begin{align*}
|p^e_{\pi_1}(s_T=s|g) - p^e_{\pi_2}(s_T=s|g)| & \leq \sum_{t=0}^{T-1} |p^e_{\pi_1 < t, \pi_2 \geq t}(s_T=s|g) - p^e_{\pi_1 < t+1, \pi_2 \geq t+1}(s_T=s|g)| \\
& = \sum_{t=0}^{T-1} P^e_t(s|\pi_1, \pi_2, g, T)
\end{align*}
where we use $P^e_t(s|\pi_1, \pi_2, g, T)$ to denote each term for brevity.
\begin{align*}
    P^e_t(s|\pi_1, \pi_2, g, T) & = | \int_{s_t, a} p^e_{\pi_1}(s_t|g)\pi_2(a|x^e(s_t), g)p^e_{\pi_2}(s_T=s|s_t, a, g) d s_t d a -\\
    & ~~~~~~ - \int_{s_t, a} p^e_{\pi_1}(s_t|g)\pi_1(a|x^e_t(s_t), g)p^e_{\pi_2}(s_T=s|s_t, a, g) d s_t d a | \\
    & \leq \int_{s_t, a} p^e_{\pi_1}(s_t|g) |\pi_1(a|x^e(s_t), g) - \pi_2(a|x^e(s_t), g)| p^e_{\pi_2}(s_T=s|s_t, a, g) d s_t d a \\
    & \leq \int_{s_t} p^e_{\pi_1}(s_t|g) \left( \int_a |\pi_1(a|x^e(s_t), g) - \pi_2(a|x^e(s_t), g)| d a \right) d s_t \\
    & \leq 2 \int_{s_t} p^e_{\pi_1}(s_t|g) D_{TV}(\pi_1(\cdot|x^e(s_t), g) \parallel \pi_2(\cdot|x^e(s_t), g)) d s_t 
\end{align*}
Here, $p^e_{\pi_2}(s_T=s|s_t, a, g)$ is the probability of achieving state $s$ at step $T$ under policy $\pi_2(\cdot|x^e, g)$ when it takes action $a$ at $s_t$. Noticing that, the upper bound for $P^e_t(s|\pi_1, \pi_2, g, T)$ is not dependent on $T$. Thus, we have $\forall s \in \gS, g \in \gG$,
\begin{align*}
    |p_{\pi_1}^{\Delta, e}(s|g) - p_{\pi_2}^{\Delta, e}(s | g)| \leq & (1 - \gamma) \sum_{T=0}^{\infty} \gamma^T \sum_{t=0}^{T-1} P^e_t(s|\pi_1, \pi_2, g, T) \\
    = & (1 - \gamma) \sum_{t=0}^{\infty} \frac{\gamma^{t+1}}{1 - \gamma} P^e_t(s|\pi_1, \pi_2, g, t+1) \\
    \leq & 2 \sum_{t=0}^{\infty} \gamma^{t+1} \int_{s_t} p^e_{\pi_1}(s_t|g) D_{TV}(\pi_1(\cdot|x^e(s_t), g) \parallel \pi_2(\cdot|x^e(s_t), g)) d s_t \\
    = & 2\gamma \int_{s_t} \sum_{t=0}^{\infty} \gamma^t p^e_{\pi_1}(s_t|g) D_{TV}(\pi_1(\cdot|x^e(s_t), g) \parallel \pi_2(\cdot|x^e(s_t), g)) d s_t \\
    = & \frac{2\gamma}{1 - \gamma} \E_{\rho_{\pi_1}^e(x^e|g)} \left[D_{TV}(\pi_1(\cdot|x^e, g) \parallel \pi_2(\cdot|x^e, g) \right]
\end{align*}
The Lemma holds by averaging over $g \sim \gG$.
\end{proof}

\Lemref{lemma:obj_bound} bounds the objective function between two policies $\pi_1$ and $\pi_2$ with the Total Variation distance. Recall that we use $\epsilon^{\rho(x, g)}(\pi_1 \parallel \pi_2)$ to denote the average $D_{\text{TV}}$ between the two policies under the joint distribution. We refer to $\pi_G$ as some optimal and invariant policy. Then, we have the following Lemma.

\begin{lemma} \label{lemma:da_boud}
For any policy class $\Pi$ and two joint distributions $\rho^s(x, g)$ and $\rho^t(x, g)$, suppose 
\begin{align*}
    \pi^*_{s, t} = \argmin_{\pi' \in \Pi} \epsilon^{\rho^s(x, g)}(\pi' \parallel \pi_G) + \epsilon^{\rho^t(x, g)}(\pi' \parallel \pi_G)
\end{align*}
then we have for any $\pi \in \Pi$
\begin{align*}
    \epsilon^{\rho^t(x, g)}(\pi \parallel \pi_G) \leq \epsilon^{\rho^s(x, g)}(\pi \parallel \pi_G) + \underbrace{\sup_{\pi, \pi' \in \Pi} \left| \epsilon^{\rho^s(x, g)} (\pi \parallel \pi') - \epsilon^{\rho^t(x, g)}(\pi \parallel \pi') \right|}_{d_{\Pi \Delta \Pi}(\rho^s(x, g), \rho^t(x, g))} + \lambda_{s, t}
\end{align*}
where $ \lambda_{s, t} = \epsilon^{\rho^s(x, g)}(\pi^*_{s, t} \parallel \pi_G) + \epsilon^{\rho^t(x, g)}(\pi^*_{s, t} \parallel \pi_G)$.
\end{lemma}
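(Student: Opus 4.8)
The plan is to exploit the fact that $\epsilon^{\rho(x,g)}(\cdot \parallel \cdot)$ is, for each fixed joint distribution $\rho(x,g)$, a pseudometric on the policy class. Indeed, for every fixed $(x,g)$ the total variation distance $D_{\text{TV}}(\cdot \parallel \cdot)$ is a genuine metric on probability measures over $\gA$, so it is symmetric and obeys the triangle inequality pointwise. Applying $\E_{\rho(x,g)}[\cdot]$ to the pointwise inequality $D_{\text{TV}}(\pi_1 \parallel \pi_3) \le D_{\text{TV}}(\pi_1 \parallel \pi_2) + D_{\text{TV}}(\pi_2 \parallel \pi_3)$ then gives $\epsilon^{\rho(x,g)}(\pi_1 \parallel \pi_3) \le \epsilon^{\rho(x,g)}(\pi_1 \parallel \pi_2) + \epsilon^{\rho(x,g)}(\pi_2 \parallel \pi_3)$, together with symmetry $\epsilon^{\rho(x,g)}(\pi_1 \parallel \pi_2) = \epsilon^{\rho(x,g)}(\pi_2 \parallel \pi_1)$. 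This mirrors the classical Ben-David et al.\ decomposition; the whole argument is then a bookkeeping chain using this pseudometric twice and the definition of $d_{\Pi\Delta\Pi}$ once.

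First I would insert the benchmark policy $\pi^*_{s, t}$ via the triangle inequality under the \emph{target} distribution, namely $\epsilon^{\rho^t(x, g)}(\pi \parallel \pi_G) \le \epsilon^{\rho^t(x, g)}(\pi \parallel \pi^*_{s, t}) + \epsilon^{\rho^t(x, g)}(\pi^*_{s, t} \parallel \pi_G)$. Next I would convert the first summand from the target to the source distribution at the cost of the divergence: since $\pi, \pi^*_{s, t} \in \Pi$, the definition of $d_{\Pi\Delta\Pi}$ bounds the gap between $\epsilon^{\rho^t(x, g)}(\pi \parallel \pi^*_{s, t})$ and $\epsilon^{\rho^s(x, g)}(\pi \parallel \pi^*_{s, t})$, yielding
\[
    \epsilon^{\rho^t(x, g)}(\pi \parallel \pi^*_{s, t}) \le \epsilon^{\rho^s(x, g)}(\pi \parallel \pi^*_{s, t}) + d_{\Pi\Delta\Pi}(\rho^s(x, g), \rho^t(x, g)).
\]
Then I would apply the triangle inequality a second time, now under the \emph{source} distribution and using symmetry, to split $\epsilon^{\rho^s(x, g)}(\pi \parallel \pi^*_{s, t}) \le \epsilon^{\rho^s(x, g)}(\pi \parallel \pi_G) + \epsilon^{\rho^s(x, g)}(\pi^*_{s, t} \parallel \pi_G)$.

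Collecting these three inequalities and regrouping the two benchmark terms into $\lambda_{s, t} = \epsilon^{\rho^s(x, g)}(\pi^*_{s, t} \parallel \pi_G) + \epsilon^{\rho^t(x, g)}(\pi^*_{s, t} \parallel \pi_G)$ reproduces the claimed bound exactly. The computation is routine; the one place I would be careful is that each triangle inequality must be applied with a \emph{single} fixed averaging distribution, so that the pointwise metric property of $D_{\text{TV}}$ survives the expectation — this is precisely why the first split is carried out entirely under $\rho^t$ and the second entirely under $\rho^s$, with $d_{\Pi\Delta\Pi}$ doing the work of bridging the two. This is the main (and essentially the only) subtlety I would expect a hasty proof to mishandle.
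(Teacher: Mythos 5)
Your proposal is correct and follows essentially the same argument as the paper's proof: the pseudometric property of $\epsilon^{\rho(x,g)}$ inherited from the pointwise $D_{\text{TV}}$ triangle inequality, one triangle split under $\rho^t$ to insert $\pi^*_{s,t}$, a distribution switch from $\rho^t$ to $\rho^s$ bounded by $d_{\Pi\Delta\Pi}$ (the paper merely writes this as an explicit absolute-difference term before invoking the supremum, while you invoke it immediately), a second triangle split under $\rho^s$, and the regrouping into $\lambda_{s,t}$. No gaps; the subtlety you flag about keeping each triangle inequality under a single fixed averaging distribution is exactly the structural point the paper's proof relies on.
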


\begin{proof}
Noticing that $D_{\text{TV}}$ is a distance metric that satisfies the triangular inequality and is symmetric. Thus, we have $\forall \pi_1, \pi_2, \pi_3$ and any joint distribution $\rho(x, g)$.
\begin{align*}
    \epsilon^{\rho(x, g)}(\pi_1 \parallel \pi_2) & = \E_{\rho(x, g)}[D_{\text{TV}}(\pi_1(\cdot|x, g) \parallel \pi_2(\cdot|x, g))] \\
    & \leq \E_{\rho(x, g)}[D_{\text{TV}}(\pi_1(\cdot|x, g) \parallel \pi_3(\cdot|x, g)) + D_{\text{TV}}(\pi_3(\cdot|x, g) \parallel \pi_2(\cdot|x, g))] \\
    & =  \epsilon^{\rho(x, g)}(\pi_1 \parallel \pi_3) + \epsilon^{\rho(x, g)}(\pi_3 \parallel \pi_2)
\end{align*}
Based on this property, we have
\begin{align*}
    \epsilon^{\rho^t(x, g)}(\pi \parallel \pi_G) \leq &~\epsilon^{\rho^t(x, g)}(\pi^*_{s, t} \parallel \pi_G) + \epsilon^{\rho^t(x, g)}(\pi \parallel \pi^*_{s, t}) \\
    \leq &~\epsilon^{\rho^t(x, g)}(\pi^*_{s, t} \parallel \pi_G) + \epsilon^{\rho^s(x, g)}(\pi \parallel \pi^*_{s, t}) + \left|\epsilon^{\rho^t(x, g)}(\pi \parallel \pi^*_{s, t}) - \epsilon^{\rho^s(x, g)}(\pi \parallel \pi^*_{s, t}) \right| \\
    \leq &~\epsilon^{\rho^s(x, g)}(\pi \parallel \pi_G) + \epsilon^{\rho^s(x, g)} (\pi^*_{s, t} \parallel \pi_G) + \epsilon^{\rho^t(x, g)} (\pi^*_{s, t} \parallel \pi_G) \\
    &~ + \left|\epsilon^{\rho^t(x, g)}(\pi \parallel \pi^*_{s, t}) - \epsilon^{\rho^s(x, g)}(\pi \parallel \pi^*_{s, t}) \right| \\
    \leq &~\epsilon^{\rho^s(x, g)}(\pi \parallel \pi_G) + d_{\Pi \Delta \Pi}(\rho^s(x, g), \rho^t(x, g)) + \lambda_{s, t}
\end{align*}
\end{proof}

\Lemref{lemma:da_boud} resembles the seminal bound in domain adaptation theory \cite{ben2010datheory}. Then, we extend the generalization bound to the \mdp~ setting. We consider the joint distributions $\rho^{E} = \{\rho^{e_i}(x, g)\}_{i=1}^N$ and define the characteristic set as follows. \footnote{$e_i$ is only used as an index here.}
\begin{definition}
The characteristic set $B(\delta, E|\Pi)$ is a set of joint distributions $\rho(x, g)$ which $\forall e_i \in E$ with $\rho^{e_i}(x, g)$, 
\begin{align*}
    d_{\Pi \Delta \Pi}(\rho(x, g), \rho^{e_i}(x, g)) \leq \delta
\end{align*}
\end{definition}
In other words, the characteristic set is a set of joint distributions that is close to the training environments' distributions in terms of the $d_{\Pi \Delta \Pi}$ divergence. Notice that if we define $\delta_{E} = \max_{e,e' \in E} d_{\Pi \Delta \Pi}(\rho^{e}(x, g), \rho^{e'}(x, g))$, then we have that the convex hull $\Lambda(\{\rho^{e_i}(x, g)\}_{i=1}^N) \subset B(\delta_{E}, E|\Pi)$. Namely, the characteristic set contains all the distributions of convex combinations of training environments' distributions \cite{sicilia2021dgtheory}.

\begin{prop} \label{prop:gbdg_a}
For any policy class $\Pi$, a set of source joint distributions $\rho^E = \{\rho^{e_i}(x, g) \}_{i=1}^N$ and the target distribution $\rho^t(x, g)$, suppose for any unit sum weights $\{\alpha_i\}_{i=1}^N$, i.e., $0 \leq \alpha_i \leq 1, \sum_i \alpha_i = 1$.
\begin{align*}
    \lambda_{\alpha} = \sum_{i=1}^N \alpha_i \epsilon^{e_i}(\pi_{\alpha}^* \parallel \pi_G) + \epsilon^{t}(\pi_{\alpha}^* \parallel \pi_G), ~~~~~ \pi^*_{\alpha} = \argmin_{\pi' \in \Pi} \sum_{i=1}^N \alpha_i \epsilon^{e_i}(\pi' \parallel \pi_G) + \epsilon^{t}(\pi' \parallel \pi_G)
\end{align*}
where $\epsilon^{e_i}$ and $\epsilon^t$ are short of $\epsilon^{\rho^{e_i}}$ and $\epsilon^{\rho^t}$ respectively.
Let 
\begin{align*}
\tilde{\rho}(x, g) = \argmin_{\rho \in B(\delta_E, E|\Pi)} d_{\Pi \Delta \Pi}(\rho(x, g), \rho^t(x, g))
\end{align*}
Then, $\forall \pi \in \Pi$
\begin{align*}
    \epsilon^t(\pi \parallel \pi_G) \leq & \min_{\alpha} \sum_{i=1}^N \alpha_i \epsilon^{e_i}(\pi \parallel \pi_G) +\lambda_{\alpha} + \underbrace{\max_{e, e' \in E} d_{\Pi \Delta \Pi}(\rho^e(x, g), \rho^{e'}(x, g))}_{\delta_E} \\
    & ~~~~~~~~~~~~~~~~~~~~~~~~~~~~~~~~~~~~~~~~~~~~~~~~~~~~ + d_{\Pi \Delta \Pi}(\tilde{\rho}(x, g), \rho^t(x, g)) 
\end{align*}
\end{prop}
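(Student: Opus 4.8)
The plan is to reduce everything to the two–distribution bound of \Lemref{lemma:da_boud}, applied with a convex mixture of the source distributions playing the role of the surrogate source. For fixed unit-sum weights $\{\alpha_i\}$ I would introduce the mixture $\bar{\rho}^{\alpha}(x,g) = \sum_{i=1}^N \alpha_i \rho^{e_i}(x,g)$ and first record the linearity fact that, because $\epsilon^{\rho}(\pi \parallel \pi')$ is an expectation over $\rho$, one has $\epsilon^{\bar{\rho}^{\alpha}}(\pi \parallel \pi') = \sum_i \alpha_i \epsilon^{e_i}(\pi \parallel \pi')$ for every pair $\pi,\pi' \in \Pi$. Consequently the minimizer appearing in \Lemref{lemma:da_boud} with source $\bar{\rho}^{\alpha}$ is exactly $\pi^*_{\alpha}$, and its associated constant $\lambda_{\bar{\rho}^{\alpha},t}$ is exactly $\lambda_{\alpha}$. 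Invoking \Lemref{lemma:da_boud} with $\rho^s = \bar{\rho}^{\alpha}$ and target $\rho^t$ then gives
\begin{align*}
\epsilon^t(\pi \parallel \pi_G) \leq \sum_{i=1}^N \alpha_i \epsilon^{e_i}(\pi \parallel \pi_G) + d_{\Pi \Delta \Pi}(\bar{\rho}^{\alpha}, \rho^t) + \lambda_{\alpha}.
\end{align*}

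The remaining work is to replace the $\alpha$-dependent divergence $d_{\Pi \Delta \Pi}(\bar{\rho}^{\alpha}, \rho^t)$ by the two distribution-level quantities $\delta_E$ and $d_{\Pi \Delta \Pi}(\tilde{\rho}, \rho^t)$. I would route the triangle inequality (valid since $d_{\Pi \Delta \Pi}$ is a pseudometric, being a supremum of absolute differences) through $\tilde{\rho}$ to get $d_{\Pi \Delta \Pi}(\bar{\rho}^{\alpha}, \rho^t) \leq d_{\Pi \Delta \Pi}(\bar{\rho}^{\alpha}, \tilde{\rho}) + d_{\Pi \Delta \Pi}(\tilde{\rho}, \rho^t)$. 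The crux—the step I expect to need the most care—is the bound $d_{\Pi \Delta \Pi}(\bar{\rho}^{\alpha}, \tilde{\rho}) \leq \delta_E$, which I would obtain by again exploiting linearity together with $\sum_i \alpha_i = 1$:
\begin{align*}
d_{\Pi \Delta \Pi}(\bar{\rho}^{\alpha}, \tilde{\rho}) = \sup_{\pi,\pi' \in \Pi} \Big| \sum_i \alpha_i \big( \epsilon^{e_i}(\pi \parallel \pi') - \epsilon^{\tilde{\rho}}(\pi \parallel \pi') \big) \Big| \leq \sum_i \alpha_i \, d_{\Pi \Delta \Pi}(\rho^{e_i}, \tilde{\rho}) \leq \delta_E,
\end{align*}
where the last inequality is precisely the membership $\tilde{\rho} \in B(\delta_E, E|\Pi)$, i.e. $d_{\Pi \Delta \Pi}(\rho^{e_i}, \tilde{\rho}) \leq \delta_E$ for each $i$. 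The only facts I lean on here are that $\epsilon$ is affine in its distribution argument and that $\tilde{\rho}$ was chosen inside the characteristic set (which is legitimate and nonempty since the convex hull $\Lambda$ of $\{\rho^{e_i}\}$ lies inside $B(\delta_E,E|\Pi)$).

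Chaining the three displays yields, for every fixed $\alpha$,
\begin{align*}
\epsilon^t(\pi \parallel \pi_G) \leq \sum_{i=1}^N \alpha_i \epsilon^{e_i}(\pi \parallel \pi_G) + \lambda_{\alpha} + \delta_E + d_{\Pi \Delta \Pi}(\tilde{\rho}, \rho^t).
\end{align*}
Since $\delta_E$ and $d_{\Pi \Delta \Pi}(\tilde{\rho}, \rho^t)$ are independent of $\alpha$, I would finish by taking the minimum over all unit-sum weights $\{\alpha_i\}$ on the right-hand side, which moves the $\min_{\alpha}$ onto the bracketed term $\sum_i \alpha_i \epsilon^{e_i}(\pi \parallel \pi_G) + \lambda_{\alpha}$ and reproduces the claimed inequality exactly. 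I do not anticipate difficulty in the mechanical steps; the one place to be careful is ensuring the affine-in-$\rho$ manipulations and the identification $\lambda_{\bar{\rho}^{\alpha},t} = \lambda_{\alpha}$ are spelled out, since the whole argument hinges on the mixture's error and optimal-surrogate policy decomposing linearly across the training environments.
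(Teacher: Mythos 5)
Your proof is correct, and it takes a mildly but genuinely different route from the paper's. The paper applies \Lemref{lemma:da_boud} $N$ times, once per pair $(\rho^{e_i}, \rho^t)$, then takes the $\alpha$-weighted average of the resulting inequalities; this forces it to bound the averaged optimality constants via the min-sum inequality $\sum_i \alpha_i \lambda_{e_i, t} \leq \lambda_{\alpha}$, and to apply the triangle inequality through $\tilde{\rho}$ separately to each term $d_{\Pi \Delta \Pi}(\rho^{e_i}, \rho^t)$. You instead invoke the lemma a single time with the mixture $\bar{\rho}^{\alpha} = \sum_i \alpha_i \rho^{e_i}$ as the source, which is legitimate since the lemma is stated for arbitrary joint distributions and convex combinations of distributions are distributions. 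The affineness of $\epsilon^{\rho}$ in $\rho$ then gives you the exact identification $\lambda_{\bar{\rho}^{\alpha}, t} = \lambda_{\alpha}$ (where the paper only gets an inequality), and the same affineness plus $\sum_i \alpha_i = 1$ yields $d_{\Pi \Delta \Pi}(\bar{\rho}^{\alpha}, \tilde{\rho}) \leq \sum_i \alpha_i \, d_{\Pi \Delta \Pi}(\rho^{e_i}, \tilde{\rho}) \leq \delta_E$, exactly paralleling the paper's use of $\tilde{\rho} \in B(\delta_E, E|\Pi)$. What your framing buys is a cleaner conceptual picture --- it makes explicit the convex-hull viewpoint that the paper only mentions in passing (the remark that $\Lambda(\{\rho^{e_i}\}) \subset B(\delta_E, E|\Pi)$) and keeps all intermediate steps tight; what the paper's buys is that it never needs to verify or even mention linearity of the functional $\epsilon^{\rho}$ in the measure. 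Both arguments terminate in the identical bound after minimizing over $\alpha$, and your care in flagging the linearity step as the load-bearing fact is exactly right.
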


\begin{proof}
By \Lemref{lemma:da_boud}, we have $\forall e_i \in E$
\begin{align*}
\epsilon^t(\pi \parallel \pi_G) \leq \epsilon^{e_i}(\pi \parallel \pi_G) + d_{\Pi \Delta \Pi}(\rho^{e_i}(x, g), \rho^t(x, g)) + \lambda_{e_i, t}
\end{align*}
Then, we have for any unit sum weights $\alpha$
\begin{align*}
    \epsilon^t(\pi \parallel \pi_G) & \leq \sum_{i=1}^N \alpha_i \epsilon^{e_i}(\pi \parallel \pi_G) + \sum_{i=1}^N \alpha_i d_{\Pi \Delta \Pi}(\rho^{e_i}(x, g), \rho^t(x, g)) + \sum_{i=1}^N \alpha_i \lambda_{e_i, t} \nonumber \\
\end{align*}
Noticing that
\begin{align*}
    \sum_{i=1}^N \alpha_i \lambda_{e_i, t} & = \sum_{i=1}^N \alpha_i \left( \min_{\pi' \in \Pi} \epsilon^{e_i}(\pi' \parallel \pi_G) + \epsilon^t(\pi' \parallel \pi_G) \right) \\
    & \leq \min_{\pi' \in \Pi} \sum_{i=1}^N \alpha_i \epsilon^{e_i}(\pi' \parallel \pi_G) + \epsilon^t(\pi' \parallel \pi_G) \\
    & = \lambda_{\alpha}
\end{align*}
Thus, we have
\begin{align*}
      \epsilon^t(\pi \parallel \pi_G) \leq \sum_{i=1}^N \alpha_i \epsilon^{e_i}(\pi \parallel \pi_G) + \sum_{i=1}^N \alpha_i d_{\Pi \Delta \Pi}(\rho^{e_i}(x, g), \rho^t(x, g)) + \lambda_{\alpha}
\end{align*}
Since $d_{\Pi \Delta \Pi}$ divergence also follows the triangular inequality \cite{ben2010datheory}, we have
\begin{align*}
    \epsilon^t(\pi \parallel \pi_G) & \leq \sum_{i=1}^N \alpha_i \epsilon^{e_i}(\pi \parallel \pi_G) + \sum_{i=1}^N \alpha_i d_{\Pi \Delta \Pi}(\rho^{e_i}(x, g), \tilde{\rho}(x, g)) + d_{\Pi \Delta \Pi}(\tilde{\rho}(x, g), \rho^t(x, g)) + \lambda_{\alpha} \\
    & \leq \sum_{i=1}^N \alpha_i \epsilon^{e_i}(\pi \parallel \pi_G) + \lambda_{\alpha} + \delta_E + d_{\Pi \Delta \Pi}(\tilde{\rho}(x, g), \rho^t(x, g))
\end{align*}
The proposition holds by taking the minimum over $\alpha$.
\end{proof}

Finally, we are able to provide the formal statements and proofs for \Propref{prop:gbdg} as follows.

\begin{customprop}{1}[Formal] \label{prop:gbdg_f}
For any $\pi \in \Pi$, we consider the occupancy measure $\rho^{\gE_{\text{train}}} = \{\rho_{\pi}^{e_i}(x^{e_i}, g) \}_{i=1}^N$ for training environments and $\rho_{\pi_G}^t(x^t, g)$ for the target environment. For simplicity, we use $\epsilon^{e_i}$ and $\epsilon^t$ as the abbreviations. Considering
\begin{align*}
    \lambda = \frac{1}{N} \sum_{i=1}^N \epsilon^{e_i}(\pi^* \parallel \pi_G) + \epsilon^{t}(\pi^* \parallel \pi_G), ~~~~ \pi^* = \argmin_{\pi' \in \Pi} \sum_{i=1}^N \epsilon^{e_i}(\pi' \parallel \pi_G) 
\end{align*}
and $\delta = \max_{e, e' \in \gE_{\text{train}}} d_{\Pi \Delta \Pi}(\rho^e_{\pi}(x^e, g), \rho^{e'}_{\pi}(x^{e'}, g))$, the characteristic set $B(\delta, \gE_{\text{train}}|\Pi)$. Define
\begin{align*}
    \tilde{\rho}(x, g) = \argmin_{\rho \in B(\delta, \gE_{\text{train}}|\Pi)} d_{\Pi \Delta \Pi}(\rho(x, g), \rho^t_{\pi_G}(x^t, g))
\end{align*}
Then, we have
\begin{align*}
     J^t(\pi_G) - J^t(\pi)\leq & \frac{2 \gamma}{1 - \gamma} \left[\frac{1}{N} \sum_{i=1}^N  \epsilon^{e_i}(\pi \parallel \pi_G) + \lambda + \delta + d_{\Pi \Delta \Pi}(\tilde{\rho}(x, g), \rho_{\pi_G}^t(x^t, g)) \right]
\end{align*}
\end{customprop}

\begin{proof}
By \Lemref{lemma:obj_bound} and \Propref{prop:gbdg_a}, we have
\begin{align*}
    J^{t}(\pi_G) - J^t(\pi) & \leq \frac{2\gamma}{1 - \gamma} \epsilon^t(\pi \parallel \pi_G) \\
    & \leq \frac{2 \gamma}{1 - \gamma} \left[\min_{\alpha} \sum_{i=1}^N \alpha_i \epsilon^{e_i}(\pi \parallel \pi_G) + \lambda_{\alpha} + \delta + d_{\Pi \Delta \Pi}(\tilde{\rho}(x, g), \rho^t_{\pi_G}(x, g)) \right] \\
    & \leq \frac{2 \gamma}{1 - \gamma} \left[\frac{1}{N} \sum_{i=1}^N \epsilon^{e_i}(\pi \parallel \pi_G) + \lambda_{\alpha=\frac{1}{N}} + \delta + d_{\Pi \Delta \Pi}(\tilde{\rho}(x, g), \rho^t_{\pi_G}(x, g)) \right]
\end{align*}
Noticing that $\lambda$ and $\lambda_{\alpha=\frac{1}{N}}$ have different definitions. But, we have
\begin{align*}
    \lambda_{\alpha=\frac{1}{N}} & = \min_{\pi' \in \Pi} \frac{1}{N} \sum_{i=1}^N \epsilon^{e_i}(\pi' \parallel \pi_G) + \epsilon^{t}(\pi' \parallel \pi_G) \\
    & \leq \frac{1}{N} \sum_{i=1}^N \epsilon^{e_i}(\pi^* \parallel \pi_G) + \epsilon^{t}(\pi^* \parallel \pi_G) \\
    & = \lambda
\end{align*}
where $\pi^* = \argmin_{\pi' \in \Pi} \sum_{i=1}^N \epsilon^{e_i}(\pi' \parallel \pi_G)$.
Thus, we have
\begin{align*}
    J^t(\pi_G) - J^t(\pi)\leq & \frac{2 \gamma}{1 - \gamma} \left[\frac{1}{N} \sum_{i=1}^N  \epsilon^{e_i}(\pi \parallel \pi_G) + \lambda + \delta + d_{\Pi \Delta \Pi}(\tilde{\rho}(x, g), \rho_{\pi_G}^t(x^t, g)) \right]
\end{align*}
This completes the proof.
\end{proof}

\begin{remark}
The informal version \Propref{prop:gbdg} omits unessential constant part and the definition of the characteristic set.
\end{remark}

\subsection{Proof of \Propref{prop:padg}} \label{apx:padg_proof}

Here, we provide the formal proof and statement of \Propref{prop:padg}. To begin with, we prove the following Lemma.

\begin{lemma} \label{lemma:occupancy}
For two goal-conditioned policies $\pi, \pi'$ of the Goal-conditioned MDP $\langle  \gS, \gG, \gA, p, \gamma \rangle$, suppose that $\max_{s, g} D_{\text{TV}}(\pi(\cdot|s, g) \parallel \pi'(\cdot|s, g)) \leq \epsilon$, then we have $D_{\text{TV}}(\rho^{\pi}(s, g) \parallel \rho^{\pi'}(s, g)) \leq \frac{\gamma \epsilon}{1 - \gamma} $.
\end{lemma}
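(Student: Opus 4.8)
The plan is to reduce the total variation between the two occupancy measures to a $\gamma$-discounted sum of per-step state-distribution discrepancies, and then to control each per-step term by a linear-in-$t$ multiple of $\epsilon$ via a one-step telescoping/contraction argument in the same spirit as \Lemref{lemma:obj_bound}.

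First I would exploit the fact that both occupancy measures share the same goal marginal $\rho(g)$, writing $\rho^{\pi}(s,g) = \rho(g)\, p_{\pi}^{\Delta}(s\mid g)$ with $p_{\pi}^{\Delta}(s\mid g) = (1-\gamma)\sum_{t=0}^{\infty}\gamma^t p^{\pi}_t(s\mid g)$, so that the outer goal randomness factors out:
\begin{align*}
    D_{\text{TV}}(\rho^{\pi}(s,g) \parallel \rho^{\pi'}(s,g)) = \E_{g \sim \gG}\!\left[ D_{\text{TV}}(p_{\pi}^{\Delta}(\cdot\mid g) \parallel p_{\pi'}^{\Delta}(\cdot\mid g)) \right].
\end{align*}
It therefore suffices to bound the inner quantity by $\tfrac{\gamma\epsilon}{1-\gamma}$ for each fixed $g$. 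Recalling that $D_{\text{TV}} = \tfrac12\|\cdot\|_1$, the triangle inequality applied to the discounted-sum form gives
\begin{align*}
    D_{\text{TV}}(p_{\pi}^{\Delta}(\cdot\mid g) \parallel p_{\pi'}^{\Delta}(\cdot\mid g)) \leq (1-\gamma)\sum_{t=0}^{\infty}\gamma^t\, D_{\text{TV}}(p^{\pi}_t(\cdot\mid g) \parallel p^{\pi'}_t(\cdot\mid g)).
\end{align*}

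The heart of the argument, and the step I expect to be the main obstacle, is the per-step bound $d_t := D_{\text{TV}}(p^{\pi}_t(\cdot\mid g)\parallel p^{\pi'}_t(\cdot\mid g)) \leq t\epsilon$, which I would obtain from the recursion $d_{t+1}\leq d_t + \epsilon$ with $d_0 = 0$. Introducing the policy-induced transition kernel $P^{\pi}(s'\mid s,g) = \int_a \pi(a\mid s,g)\,p(s'\mid s,a)\,da$ and splitting $P^{\pi}p^{\pi}_t - P^{\pi'}p^{\pi'}_t = P^{\pi}(p^{\pi}_t - p^{\pi'}_t) + (P^{\pi}-P^{\pi'})p^{\pi'}_t$, the first term is controlled because a Markov kernel is an $L^1$ contraction (contributing at most $\|p^{\pi}_t-p^{\pi'}_t\|_1 = 2d_t$), while the second is bounded by $\int_s p^{\pi'}_t(s)\int_a |\pi(a\mid s,g)-\pi'(a\mid s,g)|\,da\,ds \leq 2\epsilon$, using the uniform hypothesis $\max_{s,g}D_{\text{TV}}(\pi\parallel\pi')\leq\epsilon$ together with $\int_{s'} p(s'\mid s,a)\,ds' = 1$. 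Dividing by two yields $d_{t+1}\le d_t+\epsilon$ and hence $d_t\le t\epsilon$. Equivalently, one could reach the same estimate by the hybrid-policy telescoping device of \Lemref{lemma:obj_bound}, decomposing the discrepancy over the first step at which the two policies are allowed to diverge.

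Finally I would substitute $d_t \le t\epsilon$ into the discounted sum and evaluate the series $\sum_{t=0}^{\infty} t\gamma^t = \gamma/(1-\gamma)^2$, so that $(1-\gamma)\epsilon\sum_{t} t\gamma^t = \tfrac{\gamma\epsilon}{1-\gamma}$, a bound uniform in $g$ that therefore survives the expectation over $g\sim\gG$ and closes the proof. The only subtlety is bookkeeping the factor of two consistently between the supremum and $L^1$ definitions of $D_{\text{TV}}$; the $L^1$ contraction of the kernel and the pointwise use of the uniform policy bound are exactly what make the linear-in-$t$ growth (and hence the $\tfrac{\gamma}{1-\gamma}$ prefactor) tight.
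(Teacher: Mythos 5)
Your proof is correct, and it reaches the tight constant $\frac{\gamma\epsilon}{1-\gamma}$ by a genuinely different organization of the argument than the paper's. The paper (following Appendix B of the TRPO perturbation analysis) works with the resolvent identity: it writes the discounted occupancy as $(1-\gamma)G_\pi\rho_0$ with $G_\pi=(I-\gamma P_\pi)^{-1}$, uses the algebraic identity $G_{\pi'}-G_\pi=\gamma\,G_{\pi'}\Delta G_\pi$ with $\Delta=P_{\pi'}-P_\pi$, and then finishes in one line via submultiplicativity of the induced $L^1$ operator norm, $\|G_\pi\|_1=\|G_{\pi'}\|_1=\frac{1}{1-\gamma}$, and $\|\Delta\|_1\le 2\epsilon$. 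You instead expand the same perturbation coefficient-wise in time: the recursion $d_{t+1}\le d_t+\epsilon$ (kernel $L^1$-contraction for $P^\pi(p_t^\pi-p_t^{\pi'})$, plus the one-step policy-difference bound for $(P^\pi-P^{\pi'})p_t^{\pi'}$) gives $d_t\le t\epsilon$, and the series $(1-\gamma)\sum_t t\gamma^t=\frac{\gamma}{1-\gamma}$ closes the bound. These are the same two essential ingredients (Markov kernels do not expand $L^1$ distances; one policy swap costs at most $2\epsilon$ in $L^1$), so the proofs are dual presentations of one perturbation expansion — yours is the coefficient form, the paper's is the generating-function form. What your route buys: it is fully elementary and self-contained, avoids inverting operators (which requires some care to even define on continuous state spaces, where the paper's matrix notation is somewhat informal), and makes visible exactly where the linear-in-$t$ growth comes from, which is what produces the $\frac{\gamma}{1-\gamma}$ prefactor; it also unifies naturally with the hybrid-policy telescoping already used in \Lemref{lemma:obj_bound}, as you note. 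What the paper's route buys: brevity, and a bound that holds verbatim for an arbitrary initial distribution $\rho_0$ without invoking $d_0=0$ (your induction uses that both policies share the same $p(s_0)$, which is true here but is an extra hypothesis to track). Both handle the goal variable identically, by reducing to a fixed $g$ and noting the goal marginal is common to the two occupancy measures.
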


\begin{proof}
The proof follows the perturbation theory in Appendix B of \cite{schulman2015trpo}. By the definition of total variation distance, it suffices to prove that $\forall g$, $D_{\text{TV}}(\rho^{\pi}(s|g) \parallel \rho^{\pi'}(s|g)) \leq \frac{\gamma \epsilon}{1 - \gamma}$, where $\rho^{\pi}(s|g)$ denotes the discounted occupancy measure of $s$ under policy $\pi(\cdot|s, g)$. Consequently, in the following notations, we may omit specifying $g$ if unambiguous.

First, we refer $P_{\pi}$ as the state transition matrix under policy $\pi(\cdot|s, g)$, i.e., $(P_{\pi})_{xy} = \int_{a} p(s'=x|s=y, a)\pi(a|s=y, g) d a$ and subsequently, $G_{\pi} = I + \gamma P_{\pi} + \gamma^2 P_{\pi}^2 + \cdots = (I - \gamma P_{\pi})^{-1}$. Then, the transition discrepancy matrix is defined as $\Delta = P_{\pi'} - P_{\pi}$. Observing that
\begin{align*}
    G^{-1}_{\pi} - G^{-1}_{\pi'} & = \gamma (P_{\pi'} - P_{\pi}) \\
    & = \gamma \Delta \\
    \Rightarrow G_{\pi'} - G_{\pi} & = \gamma G_{\pi'} \Delta G_{\pi}
\end{align*}
Thus, for any initial state distribution $\rho_0$, we have
\begin{align*}
D_{\text{TV}}(\rho^{\pi}(s|g) \parallel \rho^{\pi'}(s|g)) & = \frac{1}{2} \sum_{s} |\rho^{\pi}(s|g) - \rho^{\pi'}(s|g)| \\
& = \frac{1 - \gamma}{2} \parallel (G_{\pi} - G_{\pi'}) \rho_0 \parallel_1 \\
& = \frac{(1 - \gamma) \gamma}{2} \parallel G_{\pi'} \Delta G_{\pi} \rho_0 \parallel_1 \\
& \leq \frac{(1 - \gamma) \gamma}{2} \parallel G_{\pi'} \parallel_1 \parallel \Delta \parallel_1 \parallel G_{\pi} \parallel_1 \parallel \rho_0 \parallel_1
\end{align*}
Noticing that $P_{\pi}, P_{\pi'}$ are matrices whose columns have sum $1$. Thus, $\parallel G_{\pi} \parallel_1 = \parallel G_{\pi'} \parallel_1 = \frac{1}{1 - \gamma}$. Furthermore, 
\begin{align*}
    \parallel \Delta \parallel_1 & = \max_y \int_x \left| \int_a p(s'=x|s=y, a) (\pi'(a|s=y, g) - \pi(a|s=y, g)) d a \right| dx \\ 
    & = \max_y \int_a \int_x p(s' = x | s = y, a) \left| \pi'(a|s=y, g) - \pi(a|s=y, g) \right| dx da \\
    & = \max_y 2 D_{\text{TV}}(\pi(\cdot|s = y, g) \parallel \pi'(\cdot|s = y, g)) \\
    & \leq 2 \epsilon
\end{align*}
In all, we have
\begin{align*}
    D_{\text{TV}}(\rho^{\pi}(s|g) \parallel \rho^{\pi'}(s|g)) \leq \frac{\gamma \epsilon}{1 - \gamma} 
\end{align*}
\end{proof}



To state \Propref{prop:padg} formally, we define the $L$-lipschitz policy class $\Pi^{E}_{\Phi, L}$, whose $\Phi: \gX^E \to \gZ$ maps the input $x^e$s to latent vector $z$s. \footnote{$\gX^E = \cup_{e \in E} \gX^e$}
\begin{align*}
    \Pi_{\Phi, L}^{E} = \{w (\Phi(x^e), g), \forall w|\forall g \in \gG, z, z' \in \Phi(\gX^E), ~ D_{\text{TV}}(w (z, g) \parallel w (z', g)) \leq L \parallel z - z' \parallel_2 \}
\end{align*}
Namely, the nonlinear function $w$ is $L$-smooth over the latent space $\Phi(\gX^E)$ for each $g$. Furthermore, we extend the definition of perfect alignment encoder to the \emph{$(\eta, \psi)$-perfect alignment} as follows.

\begin{definition}[$(\eta, \psi)$-\textbf{Perfect Alignment}]
An encoder $\Phi$ is a $(\eta, \psi)$-perfect alignment encoder over the environments $E$, if it satisfies the following two properties:
\begin{enumerate}[leftmargin=0.2in, itemsep=-0.5pt, topsep=-0.5pt, partopsep=-5pt]
    \item $\forall s \in \gS, \forall e, e' \in E$, $\parallel \Phi(x^e(s)) - \Phi(x^{e'}(s)) \parallel_2 \leq \eta$.
    \item $\forall s, s' \in \gS, \forall e, e' \in E$, $\parallel \Phi(x^e(s)) - \Phi(x^{e'}(s')) \parallel_2 \geq \psi \parallel s - s' \parallel_2$
\end{enumerate}
\end{definition}
Essentially, $\eta$ quantifies the \emph{if} condition of perfect alignment (\Defref{def:pa}), i.e., how aligned the representation $\Phi(x^e(s)), e \in E$s are. Moreover, $\psi$ quantifies the \emph{only if} condition, i.e., how the representations of different states $s$ are separated. Based on the definition of $(\eta, \psi)$-perfect alignment, we prove the formal statement of \Propref{prop:padg} as follows.

\begin{customprop}{2}[Formal] \label{prop:padg_f}
$\forall \pi \in \Pi_{\Phi, L}^{\gE_{\text{train}}}$ and occupancy measure $\rho^{\gE_{\text{train}}} = \{\rho_{\pi}^{e_i}(x^{e_i}, g) \}_{i=1}^N$ for training environments and $\rho_{\pi_G}^t(x^t, g)$ for the target environment. For simplicity, we use $\epsilon^{e_i}$ and $\epsilon^t$ as the abbreviations. Considering $\pi^* = \argmin_{\pi' \in \Pi_{\Phi, L}^{\gE_{\text{train}}}} \sum_{i=1}^N \epsilon^{e_i}(\pi' \parallel \pi_G)$ and 
\begin{align*}
    \tilde{\rho}(x, g) = \argmin_{\rho \in B(\delta, \gE_{\text{train}}|\Pi_{\Phi, L}^{\gE_{\text{train}}})} d_{\Pi_{\Phi, L}^{\gE_{\text{train}}} \Delta \Pi_{\Phi, L}^{\gE_{\text{train}}}}(\rho(x, g), \rho^t_{\pi_G}(x^t, g))
\end{align*}
Then, if the encoder $\Phi$ is a $(\eta, \psi)$-perfect alignment over $\gE_{\text{train}}$ and $\pi_G$ is a $u$-smooth invariant policy with $u = L \psi$, i.e., $\forall s, s', \forall g, D_{\text{TV}}(\pi_G(\cdot|x^e(s), g) \parallel \pi_G(\cdot|x^e(s'), g)) \leq u \parallel s - s' \parallel_2$, we have
\begin{align*} 
    J^t(\pi_G) - J^t(\pi) & \leq \frac{2\gamma}{1 - \gamma} \left[ \underbrace{\frac{1}{N} \sum_{i=1}^N \epsilon^{e_i}(\pi \parallel \pi_G)}_{(E)} +  (3 + \frac{\gamma}{1 - \gamma}) \eta L \right] \\
    & ~~~ + \frac{2 \gamma}{1 - \gamma}\left[
    \underbrace{\epsilon^t(\pi^* \parallel \pi_G) + d_{\Pi_{\Phi, L}^{\gE_{\text{train}}}\Delta\Pi_{\Phi, L}^{\gE_{\text{train}}}}(\tilde{\rho}(x, g), \rho^t_{\pi_G}(x^t, g))}_{(t)} \right].
\end{align*}
\end{customprop}

\begin{proof}
It suffices to prove the following two statements with \Propref{prop:gbdg_f}.
\begin{enumerate}[leftmargin=0.2in, itemsep=-0.5pt, topsep=-2.5pt]
    \item[(1)] $\max_{e_i, e_i' \in \gE_{\text{train}}} d_{\Pi_{\Phi, L}^{\gE_{\text{train}}} \Delta \Pi_{\Phi, L}^{\gE_{\text{train}}}}(\rho^{e_i}_{\pi}(x^{e_i}, g), \rho^{e_i'}_{\pi}(x^{e_i'}, g)) \leq (2 + \frac{\gamma}{1 - \gamma}) \eta L$. 
    \item[(2)] $\frac{1}{N} \sum_{i=1}^N \epsilon^{e_i}(\pi^* \parallel \pi_G) \leq \eta L$.
\end{enumerate}

Proof of (1): By the definition of $\Pi_{\Phi, L}^{\gE_{\text{train}}}$, we have $\forall s, g \text{ and } e, e' \in \gE_{\text{train}}, D_{\text{TV}}(\pi(\cdot|x^e(s), g) \parallel \pi(\cdot|x^{e'}(s), g)) \leq \eta L$. Without loss of generality, we have $\forall e, e' \in \gE_{\text{train}}$,
\begin{align*}
    d_{\Pi_{\Phi, L}^{\gE_{\text{train}}} \Delta \Pi_{\Phi, L}^{\gE_{\text{train}}}}(\rho^{e}_{\pi}(x^{e}, g), \rho^{e'}_{\pi}(x^{e'}, g)) & = \sup_{\pi, \pi' \in \Pi_{\Phi, L}^{\gE_{\text{train}}}} |\E_{\rho^e_{\pi}(s, g)}[D_{\text{TV}}(\pi(\cdot|x^e(s), g) \parallel \pi'(\cdot|x^e(s), g))] \\ 
    & ~~~~ - \E_{\rho^{e'}_{\pi}(s, g)}[D_{\text{TV}}(\pi(\cdot|x^{e'}(s), g) \parallel \pi'(\cdot|x^{e'}(s), g))] | \\
    & \leq \sup_{\pi, \pi' \in \Pi_{\Phi, L}^{\gE_{\text{train}}}} \E_{\rho^e_{\pi}(s, g)}[D_{\text{TV}}(\pi(\cdot|x^e(s), g) \parallel \pi(\cdot|x^{e'}(s), g))] \\
    & ~~~~ + \sup_{\pi, \pi' \in \Pi_{\Phi, L}^{\gE_{\text{train}}}} \E_{\rho^e_{\pi}(s, g)}[D_{\text{TV}}(\pi'(\cdot|x^e(s), g) \parallel \pi'(\cdot|x^{e'}(s), g))] \\
    & ~~~~ + \sup_{\pi, \pi' \in \Pi_{\Phi, L}^{\gE_{\text{train}}}} |\E_{\rho^e_{\pi}(s, g)}[D_{\text{TV}}(\pi(\cdot|x^{e'}(s), g) \parallel \pi'(\cdot|x^{e'}(s), g))] \\
    & ~~~~ - \E_{\rho^{e'}_{\pi}(s, g)}[D_{\text{TV}}(\pi(\cdot|x^{e'}(s), g) \parallel \pi'(\cdot|x^{e'}(s), g))]| \\
    & \leq 2 \eta L + \sup_{A \in \sigma(\gS, \gG)} |\rho^e_{\pi}(A) - \rho^{e'}_{\pi}(A) | \\
    & = 2 \eta L + D_{\text{TV}}(\rho^e_{\pi}(s, g) \parallel \rho^{e'}_{\pi}(s, g)) 
\end{align*}
 Then, by \Lemref{lemma:occupancy}, we have $D_{\text{TV}}(\rho^e_{\pi}(s, g) \parallel \rho^{e'}_{\pi}(s, g)) \leq \frac{\gamma \eta L}{1 - \gamma}$. Consequently, we have $\forall e, e' \in \gE_{\text{train}}$,
\begin{align*}
    d_{\Pi_{\Phi, L}^{\gE_{\text{train}}} \Delta \Pi_{\Phi, L}^{\gE_{\text{train}}}}(\rho^{e}_{\pi}(x^{e}, g), \rho^{e'}_{\pi}(x^{e'}, g)) & \leq (2 + \frac{\gamma }{1 - \gamma}) \eta L
\end{align*}

Proof of (2): First, for each $z \in \Phi(\gX^{\gE_\text{train}})$, we assign one $s(z)$ such that $\exists e(z) \in \gE_{\text{train}}, \text{ s.t. } \Phi(x^{e(z)}(s(z))) = z$. Then, we choose the $\tilde{w}$ that $\tilde{w}(z, g) = \pi_G(\cdot|s(z), g), \forall z \in \Phi(\gX^{\gE_\text{train}})$. Clearly, $\tilde{w}$ is a mapping of $\Phi(\gX^{\gE_\text{train}}) \to \Pi_{\Phi, \infty}^{\gE_{\text{train}}}$. Besides, $\forall z_1, z_2 \in \Phi(\gX^{\gE_\text{train}}), g \in \gG$, we have
\begin{align*}
    \parallel \tilde{w}(z_1, g) - \tilde{w}(z_2, g) \parallel_2 & \leq u \parallel s(z_1) - s(z_2) \parallel_2 \\
    & \leq \frac{u}{\psi} \psi \parallel s(z_1) - s(z_2) \parallel_2 \\
    & \leq \frac{u}{\psi} \parallel \Phi(x^{e(z_1)}(s(z_1))) - \Phi(x^{e(z_2)}(s(z_2))) \parallel_2 \\
    & \leq L \parallel z_1 - z_2 \parallel_2
\end{align*}
Thus, the policy $\tilde{\pi} = \tilde{w}(\Phi(x^e), g) \in \Pi_{\Phi, L}^{\gE_{\text{train}}}$. Furthermore, by the definition of $(\eta, \psi)$-perfect alignment, we have
\begin{align*}
    \frac{1}{N} \sum_{i=1}^N \epsilon^{e_i}(\tilde{\pi} \parallel \pi_G) & = \frac{1}{N} \sum_{i=1}^N \E_{\rho^{e_i}_{\pi}(s, g)} [D_{\text{TV}}(\tilde{\pi}(\cdot|x^{e_i}(s), g) \parallel \pi_G(\cdot|s, g)))] \\
    & \leq \frac{1}{N} \sum_{i=1}^N  \E_{\rho^{e_i}_{\pi}(s, g)} [D_{\text{TV}}(\tilde{\pi}(\cdot|x^{e_i}(s), g) \parallel \tilde{w}(\Phi(x^{e(z)}(s)), g))] \\ 
    & \leq \eta L
\end{align*}
This completes the proof.
\end{proof}





\begin{remark}
If we choose $\psi = \frac{1}{\sqrt{L}}, \eta = \frac{1}{L^2}$ and $u = \sqrt{L}$, the informal version of \Propref{prop:padg_f} describes the case when $L \to \infty$ and omits unessential constants.
\end{remark}

\subsection{Discussions on \Eqref{equ:padg}} \label{apx:remain_discuss}

Here, we discuss the remaining terms $(E), (t)$ in the R.H.S of \Eqref{equ:padg}, i.e., upper bound of $J^t(\pi_G) - J^t(\pi)$. Together with the empirical analysis, we show how these terms are reduced by our perfect alignment criterion.

\textbf{Discussion on $(E)$.} Theoretically speaking, for a $(\eta, \psi)$-perfect alignment encoder, we have $\min (E) \leq \eta L$, as proved in \Apxref{apx:padg_proof}. Thus, when $\Phi$ is an ideal perfect alignment over $\gE_{\text{train}}$, i.e., $\eta=0, \psi > 0$ and $L \to \infty$, the optimal invariant policy $\pi_G \in \Pi_{\Phi, \infty}^{\gE_{\text{train}}}$. In \secref{sec:ablate}, empirical results demonstrate that $\eta$ is minimized to almost $0$ and the reconstruction (\Figref{fig:additional_recon}) demonstrates that the \emph{only if} condition is also satisfied.

Moreover, in GBMDPs with finite states, the perfect alignment encoder $\Phi$ over $\gE_{\text{train}}$ maps all training environments to the same goal-conditioned MDP (\Figref{fig:goalmdp}) with state space $\{z(s), s \sim \gS\}$. Noticing that the mapping is bijective and share the same actions and rewards with the original problem. Thus, a RL algorithm (e.g. Q-learning \cite{watkins1992qlearning}) on the equivalent MDP will converge to an optimal policy $\pi_G$ in the original MDP, i.e., invariant and maximize $J^e, e \in \gE_{\text{train}}$.

Empirically, in \Tableref{tab:comparative_evaluation}, we find that ours PA-SF achieves the near-optimal performance on all $\gE_{\text{train}}$ s, i.e., the same performance as a policy trained on a single environment. Therefore, we conclude that $(E)$ term is reduced to almost zero.

\textbf{Discussion on $(t)$.} In general, it is hard to conduct task-agnostic analysis on $(t)$ term, as discussed in \cite{sicilia2021dgtheory}. Moreover, owing to the intractable $\sup$ operators, it is almost impossible to measure the $(t)$ term directly in the experiments. Here, we analyze the generalization error term $(t)$ with an upper bound and we find evidence that this upper bound is reduced significantly under our perfect alignment criteria.

Here, we analyze the generalization performance when the policy converges to the optimal policy over $\gE_{\text{train}}$, which is the case empirically. Furthermore, we assume that $\forall s \in \gS, e \in \gE_{\text{train}}, p^e_{\pi}(s) \geq \epsilon$. Since it has been proven that the relabeled goal distribution of Skew-Fit will converge to $U(\gS)$ (uniform over the bounded state space) \cite{pong2020skew}, it is reasonable to assume that each state has non-zero occupancy measure under the well-trained policy $\pi(\cdot|s, g)$. We derive the following proposition with the same notation as in \Propref{prop:padg_f} except that $\Pi_{\Phi, L}^{\gE_{\text{train}}}$ is replaced by $\Pi_{\Phi, L}^{\gE}$. 

\begin{prop}
Suppose that $\Phi$ is a $(\eta_t, \psi_t)$-perfect alignment encoder over $\gE$ and $\pi_G, \pi \in \Pi^{\gE}_{\Phi, L}$. Besides, $\forall s \in \gS, e \in \gE_{\text{train}}, p^e_{\pi}(s) \geq \epsilon$. Then, $\forall t \in \gE/\gE_{\text{train}}$, we have
\begin{align*}
    (t) \leq \frac{2 \gamma}{1 - \gamma} \frac{N (E)}{\epsilon} + 4 \eta_t L + (E)
\end{align*}
Consequently, we have
\begin{align*}
J^{t}(\pi_G) - J^t(\pi) \leq \frac{4 \gamma}{1 - \gamma} (1 + \frac{\gamma N}{(1 - \gamma)\epsilon})(E) + \frac{(14 - 12 \gamma) \gamma}{(1 - \gamma)^2} \eta_t L
\end{align*}
\end{prop}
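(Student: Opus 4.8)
The plan is to split the proof into two stages: first bound the abstract quantity $(t) = \epsilon^t(\pi^* \parallel \pi_G) + d_{\Pi \Delta \Pi}(\tilde{\rho}(x,g), \rho^t_{\pi_G}(x^t,g))$ by $\frac{2\gamma}{1-\gamma}\frac{N(E)}{\epsilon} + 4\eta_t L + (E)$, and then feed this estimate into the already-proven formal bound of \Propref{prop:padg_f}, specialized to the full environment set $\gE$, and collect constants. The specialization is legitimate because a $(\eta_t,\psi_t)$-perfect alignment over $\gE$ is in particular one over $\gE_{\text{train}}$ with the same $\eta_t$, and $\pi_G$ is assumed $L\psi_t$-smooth, so every hypothesis of \Propref{prop:padg_f} holds with $\eta = \eta_t$. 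Three ingredients recur throughout: \Lemref{lemma:obj_bound} (converting objective gaps to averaged total variation, the source of the $\frac{2\gamma}{1-\gamma}$ prefactor), \Lemref{lemma:occupancy} (the perturbation bound on occupancy measures, the source of the $\frac{\gamma}{1-\gamma}$ factors), and an \emph{observation-swap} estimate: for any $w \in \Pi_{\Phi,L}^{\gE}$, any $g$ and $s$, and any $e,e' \in \gE$, $L$-smoothness together with property~1 of $(\eta_t,\psi_t)$-alignment gives $D_{\text{TV}}(w(\cdot|x^e(s),g) \parallel w(\cdot|x^{e'}(s),g)) \leq L\parallel\Phi(x^e(s)) - \Phi(x^{e'}(s))\parallel_2 \leq \eta_t L$, so all total-variation and $\epsilon$-quantities are insensitive to the choice of environment up to additive $O(\eta_t L)$.

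For the first stage I would treat the two summands of $(t)$ separately. Since $\pi_G \in \Pi_{\Phi,L}^{\gE}$, the minimizer $\pi^*$ of $\sum_i \epsilon^{e_i}(\cdot \parallel \pi_G)$ attains value zero, so $\epsilon^{e_i}(\pi^* \parallel \pi_G) = 0$ for every $i$ on the support of $\rho^{e_i}_\pi$; combined with the lower bound $p^e_\pi(s) \geq \epsilon$ and the observation-swap estimate (and invariance of $\pi_G$), this bounds $\epsilon^t(\pi^* \parallel \pi_G)$ by a multiple of $\eta_t L$. For the divergence term I would use that each training occupancy $\rho^{e_i}_\pi(x^{e_i},g)$ lies in the characteristic set $B(\delta, \gE_{\text{train}}|\Pi_{\Phi,L}^{\gE})$, since $d_{\Pi \Delta \Pi}(\rho^{e_i}_\pi, \rho^{e_j}_\pi) \leq \delta$ by definition of $\delta$; minimality of $\tilde{\rho}$ then yields $d_{\Pi \Delta \Pi}(\tilde{\rho}, \rho^t_{\pi_G}) \leq d_{\Pi \Delta \Pi}(\rho^{e_i}_\pi, \rho^t_{\pi_G})$ for a chosen $i$. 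Expanding the supremum defining this divergence, I would first swap the observations of both test policies from their native environments to the target $t$ (cost $2\eta_t L$), after which each integrand $D_{\text{TV}}(w_1 \parallel w_2)$ becomes a common function of $(s,g)$ valued in $[0,1]$; the residual is then exactly the occupancy total variation $D_{\text{TV}}(\rho^{e_i}_\pi(s,g) \parallel \rho^t_{\pi_G}(s,g))$.

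The crux is bounding this occupancy total variation. I would insert $\rho^t_\pi$ and use the triangle inequality: the environment-gap piece $D_{\text{TV}}(\rho^{e_i}_\pi \parallel \rho^t_\pi)$ (same policy $\pi$, environments $e_i$ versus $t$) is controlled by \Lemref{lemma:occupancy} once one notes that the two induced state-space action distributions differ in total variation by at most $\eta_t L$, giving $\frac{\gamma \eta_t L}{1-\gamma}$; the policy-gap piece $D_{\text{TV}}(\rho^t_\pi \parallel \rho^t_{\pi_G})$ (common environment $t$) is bounded by \Lemref{lemma:occupancy} in terms of $\max_{s,g} D_{\text{TV}}(\pi(\cdot|x^t(s),g) \parallel \pi_G(\cdot|x^t(s),g))$. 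This worst-case policy gap is where $p^e_\pi(s) \geq \epsilon$ enters: for fixed $g$ the maximum over states of the gap evaluated at $x^{e_i}$ is at most $\frac{1}{\epsilon}$ times its $\rho^{e_i}_\pi$-expectation, i.e. $\frac{1}{\epsilon}\epsilon^{e_i}(\pi \parallel \pi_G)$, after which a further observation swap relates $x^t$ to $x^{e_i}$ and the crude estimate $\epsilon^{e_i}(\pi \parallel \pi_G) \leq \sum_j \epsilon^{e_j}(\pi \parallel \pi_G) = N(E)$ produces the factor $N$. Assembling the $\frac{\gamma}{1-\gamma}$, $\frac{1}{\epsilon}$, $N$ and $\eta_t L$ contributions gives the claimed $(t) \leq \frac{2\gamma}{1-\gamma}\frac{N(E)}{\epsilon} + 4\eta_t L + (E)$.

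I expect the change-of-measure to be the main obstacle: the lower bound $p^e_\pi(s) \geq \epsilon$ must be applied per-goal so that \Lemref{lemma:occupancy} (stated for a fixed $g$) is legitimately invoked, and one must verify that converting a target worst case into a training average costs only a factor $\frac{1}{\epsilon}$; this is also the step that forces the looser $\epsilon^{e_i} \leq N(E)$ bound and hence the factor $N$. Once the $(t)$ bound is in hand, the final inequality follows by substituting it into $J^t(\pi_G) - J^t(\pi) \leq \frac{2\gamma}{1-\gamma}\left[(E) + (3 + \frac{\gamma}{1-\gamma})\eta_t L + (t)\right]$ from \Propref{prop:padg_f}: the $(E)$ contributions collect to $\frac{4\gamma}{1-\gamma}\left(1 + \frac{\gamma N}{(1-\gamma)\epsilon}\right)(E)$, while the $\eta_t L$ contributions collect to $\frac{2\gamma}{1-\gamma}\left(7 + \frac{\gamma}{1-\gamma}\right)\eta_t L = \frac{(14 - 12\gamma)\gamma}{(1-\gamma)^2}\eta_t L$, matching the stated constants.
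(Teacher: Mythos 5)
Your overall architecture (bound $(t)$ first, then substitute it into the formal bound of \Propref{prop:padg_f} specialized to $\Pi^{\gE}_{\Phi, L}$ and collect constants) is the same as the paper's, and several ingredients are sound: the minimality step $d_{\Pi\Delta\Pi}(\tilde{\rho}, \rho^t_{\pi_G}) \leq d_{\Pi\Delta\Pi}(\rho^{e_i}_{\pi}, \rho^t_{\pi_G})$, the reduction of a $d_{\Pi\Delta\Pi}$ difference to an occupancy total variation once the integrands are made a common $[0,1]$-valued function of $(s,g)$, and the change of measure $\max_s D_{\text{TV}} \leq \frac{1}{\epsilon}\E_{\rho^{e_i}_{\pi}}[D_{\text{TV}}] \leq \frac{N(E)}{\epsilon}$. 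Your direct $O(\eta_t L)$ bound on $\epsilon^t(\pi^* \parallel \pi_G)$ (using $\pi_G \in \Pi^{\gE}_{\Phi,L}$, full support, and invariance of $\pi_G$) is even cleaner than the paper's handling, which instead pays $\lambda \leq (E)$ plus a second copy of $d_{\Pi\Delta\Pi}(\rho^e_{\pi}, \rho^t_{\pi_G})$ for that term.

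The gap is in your treatment of $D_{\text{TV}}(\rho^{e_i}_{\pi} \parallel \rho^t_{\pi_G})$ and the final bookkeeping. By inserting $\rho^t_{\pi}$ you pay \Lemref{lemma:occupancy} twice: the environment-gap piece costs $\frac{\gamma}{1-\gamma}\eta_t L$, and the policy-gap piece costs $\frac{\gamma}{1-\gamma}\bigl(\frac{N(E)}{\epsilon} + \eta_t L\bigr)$, because the swap of $\pi$ from $x^t$ to $x^{e_i}$ inside the worst-case gap is not free. Summing your own pieces gives $(t) \leq \frac{\gamma}{1-\gamma}\frac{N(E)}{\epsilon} + \bigl(4 + \frac{2\gamma}{1-\gamma}\bigr)\eta_t L$, which is \emph{not} the claimed $\frac{2\gamma}{1-\gamma}\frac{N(E)}{\epsilon} + 4\eta_t L + (E)$; after substitution the $\eta_t L$ coefficient becomes $\frac{(14-8\gamma)\gamma}{(1-\gamma)^2}$, strictly larger than the stated $\frac{(14-12\gamma)\gamma}{(1-\gamma)^2}$, so your concluding claim that the contributions "collect to the stated constants" silently drops your own $\frac{2\gamma}{1-\gamma}\eta_t L$ surplus (likewise, your derivation actually yields half the stated $(E)$ coefficient, not the stated one). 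Since $(E)$, $\frac{N(E)}{\epsilon}$ and $\eta_t L$ are independent nonnegative quantities, the slack you gain on the other terms cannot absorb this overshoot, so the proposition as stated is not established. The repair is to avoid $\rho^t_{\pi}$ entirely: because $\pi_G$ is invariant, its state-goal occupancy is identical in every environment, so $D_{\text{TV}}(\rho^{e_i}_{\pi} \parallel \rho^t_{\pi_G}) = D_{\text{TV}}(\rho^{e_i}_{\pi} \parallel \rho^{e_i}_{\pi_G})$, and one application of \Lemref{lemma:occupancy} inside environment $e_i$ (where $D_{\text{TV}}(\pi(\cdot|x^{e_i}(s),g) \parallel \pi_G(\cdot|x^{e_i}(s),g)) \leq \frac{N(E)}{\epsilon}$ needs no observation swap) gives $\frac{\gamma}{1-\gamma}\frac{N(E)}{\epsilon}$ with no extra $\eta_t L$. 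This is in effect what the paper does, except at the level of $d_{\Pi\Delta\Pi}$: it bounds $(t) \leq \lambda + 2\max_e d_{\Pi\Delta\Pi}(\rho^e_{\pi}, \rho^e_{\pi_G}) + 2\max_e d_{\Pi\Delta\Pi}(\rho^e_{\pi_G}, \rho^t_{\pi_G}) \leq (E) + \frac{2\gamma}{1-\gamma}\frac{N(E)}{\epsilon} + 4\eta_t L$, where the cross-environment step costs only observation swaps ($2\eta_t L$) and the policy step costs only $\frac{\gamma}{1-\gamma}\frac{N(E)}{\epsilon}$.
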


\begin{proof}
It is the straight forward to check that the statement and the proofs in \Propref{prop:padg_f} also hold under the policy class $\Pi_{\Phi, L}^{\gE} \subset \Pi_{\Phi, L}^{\gE_{\text{train}}}$ when $\Phi$ is $(\eta_t, \psi_t)$-perfect alignment over $\gE$. Namely, $\forall \pi, \pi_G \in \Pi_{\Phi, L}^{\gE}$, training environment set $\gE_{\text{train}}$ and the target environment $t \in \gE / \gE_{\text{train}}$, we have
\begin{align*} 
    J^t(\pi_G) - J^t(\pi) & \leq \frac{2\gamma}{1 - \gamma} \left[ \underbrace{\frac{1}{N} \sum_{i=1}^N \epsilon^{e_i}(\pi \parallel \pi_G)}_{(E)} +  (3 + \frac{\gamma}{1 - \gamma}) \eta_t L \right] \\
    & ~~~ + \frac{2 \gamma}{1 - \gamma}\left[
    \underbrace{\epsilon^t(\pi^* \parallel \pi_G) + d_{\Pi_{\Phi, L}^{\gE}\Delta\Pi_{\Phi, L}^{\gE}}(\tilde{\rho}(x, g), \rho^t_{\pi_G}(x^t, g))}_{(t)} \right].
\end{align*}

By definition, $\forall e \in \gE_{\text{train}}$, we have $\rho^e_{\pi} \in B(\delta, \gE_{\text{train}}|\Pi^{\gE}_{\Phi, L})$. As a consequence, we have
\begin{align*}
    (t) & = \frac{1}{N} \sum_{i=1}^N (\epsilon^t(\pi^* \parallel \pi_G) - \epsilon^{e_i}(\pi^* \parallel \pi_G)) + \frac{1}{N} \sum_{i=1}^N \epsilon^{e_i}(\pi^* \parallel \pi_G)   \\
    & ~~~~~ + d_{\Pi^{\gE}_{\Phi, L}\Delta \Pi^{\gE}_{\Phi, L}}(\tilde{\rho}(x, g), \rho^t_{\pi_G}(x^t, g)) \\
    & \leq \max_{e \in \gE_{\text{train}}} \epsilon^t(\pi^* \parallel \pi_G) - \epsilon^{e}(\pi^* \parallel \pi_G) + d_{\Pi^{\gE}_{\Phi, L} \Delta \Pi^{\gE}_{\Phi, L}}(\rho_{\pi}^e(x, g), \rho^t_{\pi_G}(x^t, g)) + \lambda\\
    & \leq 2\max_{e \in \gE_{\text{train}}} d_{\Pi^{\gE}_{\Phi, L} \Delta \Pi^{\gE}_{\Phi, L}}(\rho^e_{\pi}(x^e, g), \rho^t_{\pi_G}(x^t, g)) + \lambda \\
    & \leq \lambda + 2\max_{e \in \gE_{\text{train}}} d_{\Pi^{\gE}_{\Phi, L} \Delta \Pi^{\gE}_{\Phi, L}}(\rho^e_{\pi}(x^e, g), \rho^e_{\pi_G}(x^e, g)) \\
    & ~~~~ + 2\max_{e \in \gE_{\text{train}}} d_{\Pi^{\gE}_{\Phi, L} \Delta \Pi^{\gE}_{\Phi, L}}(\rho^e_{\pi_G}(x^e, g), \rho^t_{\pi_G}(x^t, g)) 
\end{align*}
Without loss of generality, we have
\begin{align*}
     d_{\Pi^{\gE}_{\Phi, L} \Delta \Pi^{\gE}_{\Phi, L}}(\rho^e_{\pi_G}(x^e, g), \rho^t_{\pi_G}(x^t, g)) & = \sup_{\pi, \pi' \in \Pi^{\gE}_{\Phi, L}} |\E_{\rho_{\pi_G}^e(s, g)}[D_{\text{TV}}(\pi(\cdot|x^e(s), g) \parallel \pi'(\cdot|x^e(s), g))] \\
     & ~~~~~ - \E_{\rho_{\pi_G}^t(s, g)}[D_{\text{TV}}(\pi(\cdot|x^t(s), g) \parallel \pi'(\cdot|x^t(s), g))]| \\
     & \leq \E_{\rho_{\pi_G}(s, g)} [\sup_{\pi, \pi' \in \Pi^{\gE}_{\Phi, L}} |D_{\text{TV}}(\pi(\cdot|x^e(s), g) \parallel \pi’(\cdot|x^e(s), g)) \\
     & ~~~~~ - D_{\text{TV}}(\pi(\cdot|x^t(s), g) \parallel \pi'(\cdot|x^t(s), g))|] \\
     & \leq \E_{\rho_{\pi_G}(s, g)} [\sup_{\pi, \pi' \in \Pi^{\gE}_{\Phi, L}} |D_{\text{TV}}(\pi(\cdot|x^e(s), g) \parallel \pi(\cdot|x^t(s), g)) \\
     & ~~~~~ + D_{\text{TV}}(\pi(\cdot|x^t(s), g) \parallel \pi'(\cdot|x^t(s), g)) \\
     & ~~~~~ + D_{\text{TV}}(\pi'(\cdot|x^t(s), g) \parallel \pi'(\cdot|x^e(s), g)) \\
     & ~~~~~ - D_{\text{TV}}(\pi(\cdot|x^t(s), g) \parallel \pi'(\cdot|x^t(s), g))|] \\
     & \leq \E_{\rho_{\pi_G}(s, g)} [\sup_{\pi, \pi' \in \Pi^{\gE}_{\Phi, L}} |D_{\text{TV}}(\pi(\cdot|x^e(s), g) \parallel \pi(\cdot|x^t(s), g)) \\
     & ~~~~~ + D_{\text{TV}}(\pi'(\cdot|x^t(s), g) \parallel \pi'(\cdot|x^e(s), g))|] 
\end{align*}

Clearly, we have $\forall e, e' \in \gE, D_{\text{TV}}(\pi(\cdot|x^e(s), g) \parallel \pi(\cdot|x^{e'}(s), g)) \leq \eta_t L$ and $\forall e, e' \in \gE_{\text{train}}, D_{\text{TV}}(\pi(\cdot|x^e(s), g) \parallel \pi_G(\cdot|x^{e}(s), g)) \leq \frac{N (E)}{\epsilon}$. Thus, by \Lemref{lemma:occupancy}, we have $\forall e \in \gE_{\text{train}}, D_{\text{TV}}(\rho^e_{\pi}(s, g) \parallel \rho^e_{\pi_G}(s, g)) \leq \frac{\gamma}{1 - \gamma} \frac{N (E)}{\epsilon}$. By the fact that $\lambda \leq (E)$, we have
\begin{align*}
    (t) & \leq \frac{2 \gamma}{1 - \gamma} \frac{N (E)}{\epsilon} + 4 \eta_t L + (E)
\end{align*}
Consequently, we have
\begin{align*}
    J^t(\pi_G) - J^t(\pi) \leq & \frac{4 \gamma}{1 - \gamma} (1 + \frac{\gamma N}{(1 - \gamma)\epsilon})(E) + \frac{(14 - 12 \gamma) \gamma}{(1 - \gamma)^2} \eta_t L
\end{align*}
\end{proof}
Intuitively speaking, when the RL policy converges to nearly optimal on $\gE_{\text{train}}$, the generalization regret over target environment can be bounded by the sum of two components: training environment performance regret and the level of invariant over the target environment. The former is relative small as the policy is near optimal. The latter can be reduced by learning perfect alignment encoder on the training environments. As shown in \Figref{fig:ablation}, the LER of different ablations over test environments suggests that $\eta_t$ is reduced empirically in PA-SF. Moreover, both losses $\gL_{\text{MMD}}$ and $\gL_{\text{DIFF}}$ contribute to the reduction. Furthermore, we notice that PA-SF $<$ PA-SF (w/o D) $<$ PA-SF (w/o MD) $<$ Skew-Fit in both LER and generalization performance, which coincides with our theoretical analysis.

\subsection{Discussions on $L^{\text{MMD}}$} \label{apx:mmd}

Noticing that the $(\eta, \psi)$-perfect alignment requires the $l_2$ distance of $z^e(s) - z^{e'}(s)$ is bounded for any state $s$. In practice, we adopt the convention \cite{schulman2015trpo} to minimize the average $l_2$ distance over the state space as a surrogate objective, i.e., $\E_{\rho(s)}[\parallel \Phi(x^e(s)) - \Phi(x^{e'}(s)) \parallel_2^2]$. It prevents the unstable and intractable training of robust optimization in our setting and we find it works well empirically. Here, we introduce a theoretical property of $L^{\text{MMD}}$, which justifies its validity to ensure the \emph{if} condition of perfect alignment. To begin with, we prove the following Lemma.
\begin{lemma} \label{lemma:mono}
For some distribution $P(x)$ and two functions $f, g: \gX \to \mathbb{R}^d$, we have
\begin{align*}
    \E_{x, x' \sim P(x)} [\parallel f(x) - g(x') \parallel_2^2] \geq \frac{1}{2} \E_{x \sim P(x)} [\parallel f(x) - g(x) \parallel_2^2]
\end{align*}
\end{lemma}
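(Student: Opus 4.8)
The plan is to expand both sides as quadratic forms and exploit the fact that, on the left, $x$ and $x'$ are \emph{independent} draws from $P$. Writing $\mu_f = \E_{x \sim P}[f(x)]$ and $\mu_g = \E_{x \sim P}[g(x)]$, the key observation is that independence decouples the cross term on the left-hand side into $\E_{x, x'}[\langle f(x), g(x') \rangle] = \langle \mu_f, \mu_g \rangle$, whereas the analogous term on the right is the genuinely correlated quantity $\E_x[\langle f(x), g(x)\rangle]$. Thus the entire difference between the two sides is concentrated in this single inner-product term, and the lemma reduces to controlling the gap between $\langle \mu_f, \mu_g\rangle$ and $\E_x[\langle f(x), g(x)\rangle]$.

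First I would expand
\[
    \E_{x, x'}[\parallel f(x) - g(x') \parallel_2^2] = \E_x[\parallel f(x)\parallel_2^2] - 2\langle \mu_f, \mu_g\rangle + \E_x[\parallel g(x)\parallel_2^2],
\]
and similarly
\[
    \E_x[\parallel f(x) - g(x) \parallel_2^2] = \E_x[\parallel f(x)\parallel_2^2] - 2\E_x[\langle f(x), g(x)\rangle] + \E_x[\parallel g(x)\parallel_2^2].
\]
Rather than arguing the inequality directly, I would establish the equivalent statement $2 \cdot (\mathrm{LHS}) - (\mathrm{RHS}) \ge 0$. Substituting the two expansions, the shared norm terms cancel in part and one is left with the residual $\E_x[\parallel f(x)\parallel_2^2] + \E_x[\parallel g(x)\parallel_2^2] - 4\langle \mu_f, \mu_g\rangle + 2\E_x[\langle f(x), g(x)\rangle]$.

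The final step is to recognize this residual as a sum of squares. Introducing the centered maps $\tilde f = f - \mu_f$ and $\tilde g = g - \mu_g$ and applying the variance decomposition $\E_x[\parallel f(x)\parallel_2^2] = \parallel \mu_f\parallel_2^2 + \E_x[\parallel \tilde f(x)\parallel_2^2]$ (and the analogues for $g$ and for the cross term), the residual rearranges exactly into
\[
    \parallel \mu_f - \mu_g\parallel_2^2 + \E_x[\parallel \tilde f(x) + \tilde g(x)\parallel_2^2],
\]
which is manifestly nonnegative. This delivers the claimed factor of $\tfrac{1}{2}$, with equality tracked by the covariance term $\E_x[\langle \tilde f, \tilde g\rangle]$.

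I expect the main subtlety — more a pitfall than a genuine obstacle — to be the choice of route that yields the sharp constant. The naive approach of applying $\parallel a - b\parallel_2^2 \le 2\parallel a - c\parallel_2^2 + 2\parallel b - c\parallel_2^2$ with $c = g(x')$ and averaging produces only a factor of $\tfrac{1}{4}$, since it discards the correlation between $f(x)$ and $g(x)$; recovering the stated $\tfrac{1}{2}$ requires keeping the cross term exactly, which is precisely what the mean--variance decomposition accomplishes. One should also state explicitly that $x$ and $x'$ are independent samples from $P$, as this independence is the entire engine of the argument.
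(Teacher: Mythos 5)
Your proof is correct and takes essentially the same route as the paper's: both expand the squared norms, use the independence of $x$ and $x'$ to collapse the cross term to $\langle \mu_f, \mu_g \rangle$, and then absorb the remaining slack into a manifest square. The only cosmetic difference is that the paper first normalizes $\E_{x \sim P(x)}[g(x)] = 0$ (so the cross term vanishes outright) and then invokes the elementary bound $\frac{1}{2}\parallel a - b \parallel_2^2 \leq \parallel a \parallel_2^2 + \parallel b \parallel_2^2$, whereas you keep the means explicit and exhibit the exact identity $2\cdot\mathrm{LHS} - \mathrm{RHS} = \parallel \mu_f - \mu_g \parallel_2^2 + \E_{x}[\parallel \tilde{f}(x) + \tilde{g}(x) \parallel_2^2]$, which additionally pins down the equality cases.
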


\begin{proof}
Without loss of generality, we assume $\E_{x \sim P(x)} [g(x)] = 0$. Then, we have
\begin{align*}
    \E_{x, x' \sim P(x)} [\parallel f(x) - g(x') \parallel_2^2] & = \E_{x \sim P(x)} [\parallel f(x) \parallel_2^2]  + \E_{x' \sim P(x')} [\parallel g(x') \parallel_2^2] \\
    & ~~~~ - 2 \E_{x, x' \sim P(x)} [\langle f(x), g(x') \rangle]  \\
    & =  \E_{x \sim P(x)} [\parallel f(x) \parallel_2^2] + \E_{x' \sim P(x')} [\parallel g(x') \parallel_2^2] \\
    & ~~~~ - 2 \langle \E_{x \sim P(x)} [f(x)], \E_{x' \sim P(x')}[g(x')] \rangle \\
    & =  \E_{x \sim P(x)} [\parallel f(x) \parallel_2^2] + \E_{x' \sim P(x')} [\parallel g(x') \parallel_2^2] \\
    & \geq \frac{1}{2} \E_{x \sim P(x)} [\parallel f(x) - g(x) \parallel_2^2]
\end{align*}
\end{proof}

\begin{prop} \label{prop:paloss}
Under mild assumptions, if $\E_{e, e' \sim \gE_{\text{train}}, \gB_{\text{align}} \sim \gR_{\text{align}}} [\parallel \Phi(x^e(s)) - \Phi(x^{e'}(s)) \parallel_2^2 ] \geq \delta$, then $L^{\text{MMD}}(\Phi) \geq O(\delta)$.
\end{prop}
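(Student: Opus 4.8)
The plan is to exploit the fact that $L^{\text{MMD}}$ is an empirical (batch) MMD between the $e$-features and the $e'$-features, and to extract from it a \emph{per-state} feature-alignment term that can be compared to $\delta$. The key structural observation is that the two (nonnegative) contributions to the batch MMD — a finite-batch variance term scaling like $1/B$ and a squared \emph{population} MMD term — play very different roles: only the former sees per-state misalignment, so the whole bound must be pinned on it.

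First I would use the aligned-sampling construction together with the near-deterministic transition assumption ($\gH(\rho(s^e_t(a_{0:t}))) \ll 1$). Because $\gB_{\text{align}}$ stores, for a common action sequence $a_{0:t}^b$, the observations of \emph{all} training environments, the states reached in $e$ and in $e'$ coincide, $s^e_t(a_{0:t}^b) = s^{e'}_t(a_{0:t}^b) =: s^b$, up to a negligible error controlled by the transition entropy. Writing the summands with a common $s^b$ and using that the $s^b$ are i.i.d.\ within a batch, I would expand the squared norm of the mean difference,
\begin{align*}
    L^{\text{MMD}}(\Phi) = \E_{e,e'}\Big[ \tfrac{1}{B}\,\E_{s}\big[\parallel \psi(\Phi(x^e(s))) - \psi(\Phi(x^{e'}(s)))\parallel_2^2\big] + \tfrac{B-1}{B}\,\parallel \E_{s}\big[\psi(\Phi(x^e(s))) - \psi(\Phi(x^{e'}(s)))\big]\parallel_2^2 \Big],
\end{align*}
and drop the second, nonnegative, squared population-MMD term to obtain $L^{\text{MMD}}(\Phi) \ge \tfrac{1}{B}\,\E_{e,e',s}\big[\parallel \psi(\Phi(x^e(s))) - \psi(\Phi(x^{e'}(s)))\parallel_2^2\big]$. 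Under genuine (rather than exact) determinism the two environments draw independent states from the same narrow region $\mu_b$ instead of an identical $s^b$; there I would invoke \Lemref{lemma:mono} with $f = \psi\circ\Phi\circ x^e$, $g = \psi\circ\Phi\circ x^{e'}$ and $P = \mu_b$ to lower bound the two-sample feature distance that appears by one half of the genuine same-state feature distance, so the per-state quantity survives up to the constant $\tfrac12$.

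Next I would convert this per-state distance in feature space into the per-state latent distance appearing in the hypothesis. This is where the ``mild assumptions'' enter: taking $\psi$ to be a standard random expansion for a bounded shift-invariant kernel $k$ (e.g.\ random Fourier features), one has $\E_\psi\big[\parallel \psi(a) - \psi(b)\parallel_2^2\big] = 2\big(1 - k(a,b)\big)$, and on a bounded latent domain $1 - k(a,b) \ge c\parallel a-b\parallel_2^2$ for a positive constant $c$ depending on the kernel bandwidth and the diameter of $\Phi(\gX^{\gE_{\text{train}}})$. Applying this pointwise and taking expectations gives $\E_{e,e',s}\big[\parallel \psi(\Phi(x^e(s))) - \psi(\Phi(x^{e'}(s)))\parallel_2^2\big] \ge c\,\E_{e,e',s}\big[\parallel \Phi(x^e(s)) - \Phi(x^{e'}(s))\parallel_2^2\big] \ge c\,\delta$, whence $L^{\text{MMD}}(\Phi) \ge \tfrac{c}{B}\,\delta = O(\delta)$.

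The main obstacle is the gap between distributional and pointwise alignment: the population MMD alone (the dropped term) can vanish even when $\Phi$ is badly misaligned per state, so it is precisely aligned sampling — forcing matched states within a batch — that makes the surviving variance term encode the per-state error. Consequently, the two delicate points are (i) making the first step rigorous under near- rather than exact determinism, i.e.\ controlling the discrepancy from $s^e_t(a_{0:t}^b)\neq s^{e'}_t(a_{0:t}^b)$ and justifying the use of \Lemref{lemma:mono} on the narrow distributions $\mu_b$, and (ii) pinning down the expansiveness constant $c$ for $\psi$ on the bounded latent region; everything else is a routine second-moment expansion.
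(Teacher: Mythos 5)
Your proposal is correct and follows essentially the same route as the paper's proof: the same second-moment expansion of the batch MMD into a $\tfrac{1}{B}$ diagonal term plus a nonnegative squared population-mean term that is dropped, the same use of \Lemref{lemma:mono} to pass from independently sampled aligned states to a genuine same-state distance, and the same conversion from feature-space to latent-space distances under a mild expansiveness-type assumption on $\psi$. The only (cosmetic) differences are the order of the last two steps --- you apply \Lemref{lemma:mono} in feature space and then convert to latent distances via the random-Fourier-feature kernel bound $1-k(a,b)\geq c\parallel a-b\parallel_2^2$ on a bounded domain, whereas the paper first invokes an assumed lower-Lipschitz constant $u$ for $\psi$ and then applies \Lemref{lemma:mono} in latent space --- and your kernel-based justification is, if anything, a more careful instantiation of the paper's ``mild assumptions.''
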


\begin{proof}
We assume that $\forall z, z', \parallel \psi(z) - \psi(z') \parallel_2 \geq u \parallel z - z' \parallel_2$ for some $u > 0$. For brevity, we denote $z^e(s) = \Phi(x^e(s))$ and $s^e_b = s^e_t(a_{0:t}^b)$ (the $b^{\text{th}}$ sample: state $s$ sampled in environment $e$ after executing some action $a_{0:t}^b$). 
\begin{align*}
    L^{\text{MMD}}(\Phi) & = \E_{e, e' \sim \gE_{\text{train}}, \gB_{\text{align}} \sim \gR_{\text{align}}} [\parallel \frac{1}{B} \sum_{b=1}^B \psi(z^e(s_b^e)) - \frac{1}{B} \sum_{b=1}^B \psi(z^{e'}(s_b^{e'})) \parallel_2^2 ] \\
    & \geq \frac{1}{B^2}\E_{e, e' \sim \gE_{\text{train}}, \{s_b^e, s_b^{e'}\}_{b=1}^B \sim \gR_{\text{align}}} [\sum_{b=1}^B \parallel \psi(z^e(s^e_b)) -  \psi(z^{e'}(s^{e'}_b)) \parallel_2^2 ] + \frac{1}{B^2} \\
    & ~~\E_{e, e' \sim \gE_{\text{train}}, \{s^{e}_b, s^{e'}_b\}_{b=1}^B \sim \gR_{\text{align}}} [\sum_{i\neq j} \langle \psi(z^e(s_i^e)) - \psi(z^{e'}(s_i^{e'})), \psi(z^{e}(s_j^{e})) - \psi(z^{e'}(s_j^{e'})) \rangle] 
\end{align*}
Since $s_b^e$s are sampled from $\gR_{\text{align}}$ independently, we have
\begin{align*}
    L^{\text{MMD}}(\Phi) & \geq \frac{1}{B} \E_{e, e' \sim \gE_{\text{train}}, s^e_b, s^{e'}_b \sim \gR_{\text{align}}}[\parallel \psi(z^e(s^e_b)) - \psi(z^{e'}(s^{e'}_b)) \parallel^2_2] \\
    & ~~~ + \frac{B - 1}{B} \parallel \E_{e, e' \sim \gE_{\text{train}}, s^e_b, s^{e'}_b \sim \gR_{\text{align}}} [\psi(z^e(s^e_b)) - \psi(z^{e'}(s^{e'}_b))] \parallel_2^2 \\
    & \geq \frac{1}{B} \E_{e, e' \sim \gE_{\text{train}}, s^e_b, s^{e'}_b \sim \gR_{\text{align}}}[\parallel \psi(z^e(s^e_b)) - \psi(z^{e'}(s^{e'}_b)) \parallel^2_2] \\
    & \geq \frac{u}{B} \E_{e, e' \sim \gE_{\text{train}}, s^e_b, s^{e'}_b \sim \gR_{\text{align}}}[\parallel z^e(s^e_b) - z^{e'}(s^{e'}_b) \parallel^2_2] \\
    & = \frac{u}{B} \E_{e, e' \sim \gE_{\text{train}}, a_{0:t}^b \sim \gR_{\text{align}}, s_b^e \sim \rho(s^e_t(a_{0:t}^b)), s_b^{e'} \sim \rho(s^{e'}_t(a_{0:t}^b))}[\parallel z^e(s^e_b) - z^{e'}(s^{e'}_b) \parallel^2_2] \\
    & \geq \frac{u}{2B} \E_{e, e' \sim \gE_{\text{train}}, a_{0:t}^b \sim \gR_{\text{align}}, s_b \sim \rho(s_t(a_{0:t}^b))}[\parallel z^e(s_b) - z^{e'}(s_b) \parallel^2_2] ~~~~~~~~~~~ (\text{\Lemref{lemma:mono}}) \\
    & = \frac{u}{2B} \E_{e, e' \sim \gE_{\text{train}}, s_b \sim \gR_{\text{align}}}[\parallel z^e(s_b) - z^{e'}(s_b) \parallel^2_2]
\end{align*}
This completes the proof.
\end{proof}

\section{Additional Results} \label{apx:additional}
Here, we show the ablation study of PA-SF on other tasks: \textit{Reach}, \textit{Push}, and \textit{Pickup}. Figure \ref{fig:additional_ablation} demonstrates that similar results in \secref{sec:ablate} also holds on \textit{Reach} and \textit{Push} while on \textit{Pickup}, PA-SF (w/o D) outperforms PA-SF on test environments marginally. We also find that the LER is relatively large in \textit{Pickup}, perhaps owing to the relative large stochasity in this environment. However, the training and test performance are still satisfactory.

\begin{figure}[h]
    \centering
    \subfigure[Reach]{ 
    \label{fig:ablation_door}
        \includegraphics[width=\linewidth]{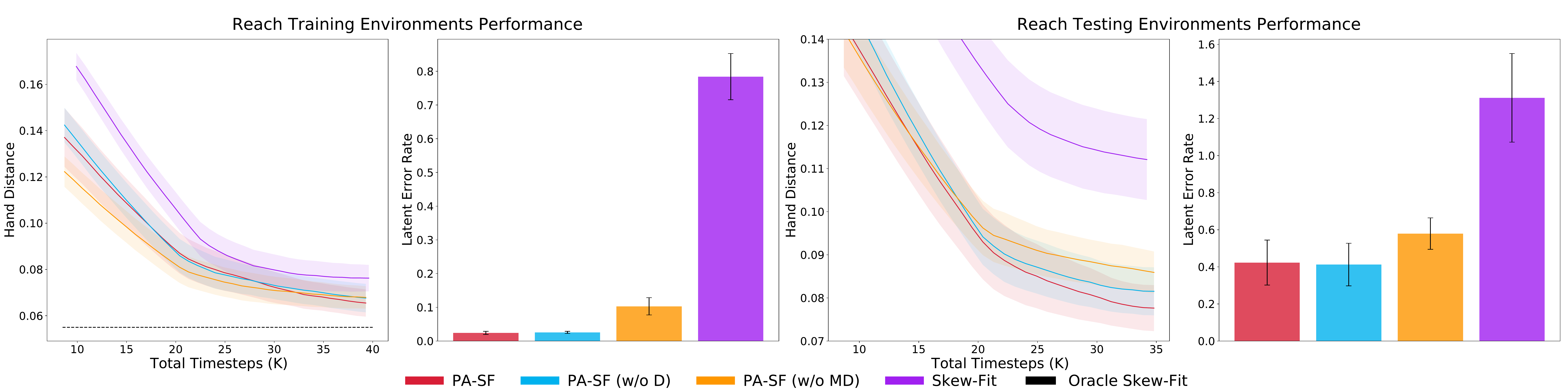}
    }
    \subfigure[Push]{ 
    \label{fig:ablation_push}
        \includegraphics[width=\linewidth]{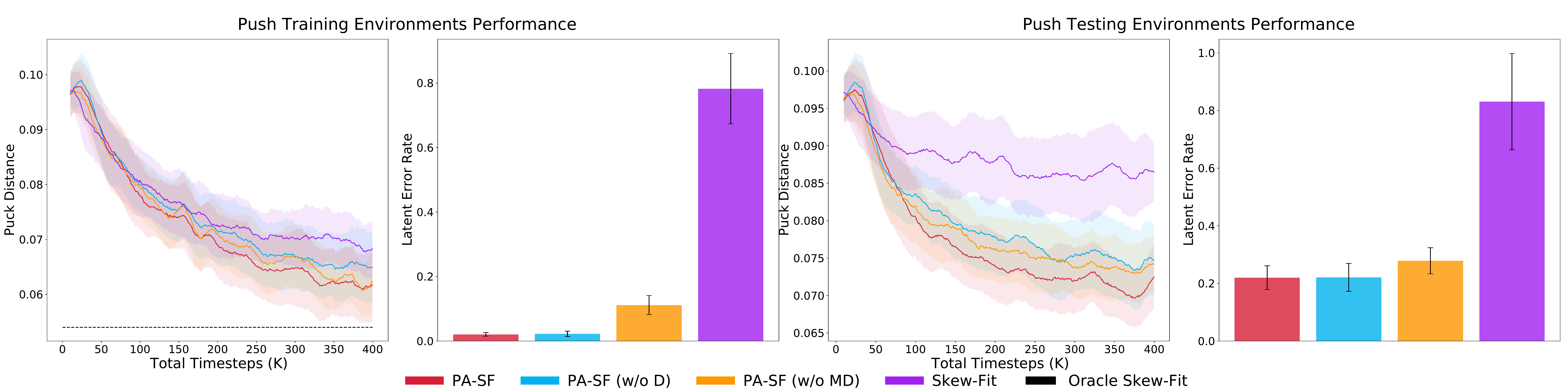}
    }
    \subfigure[Pickup]{ 
    \label{fig:ablation_pickup}
        \includegraphics[width=\linewidth]{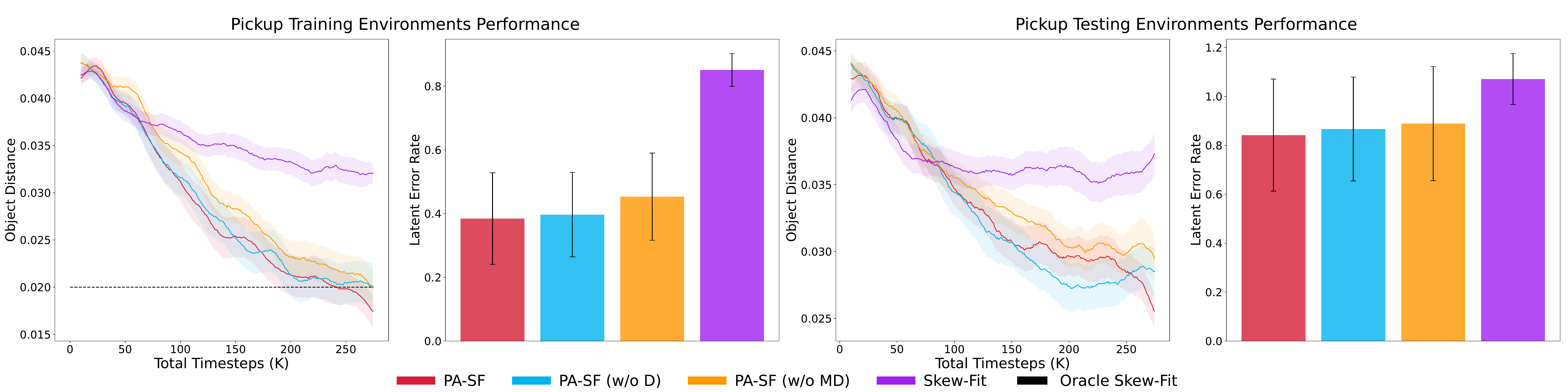}
    }
    \caption{Ablations of our algorithm PA-SF and visualization of the latent representation via LER metric on \textit{Reach}, \textit{Push}, and \textit{Pickup}. All curves represent the mean and one standard deviation (except Pickup with half standard deviation) across 7 seeds.}
    \label{fig:additional_ablation}
    \vskip -0.2in
\end{figure}

Additionally, we compare t-SNE plots of PA-SF with that of vanilla Skew-Fit in Figure \ref{fig:additional_tsne}. Clearly, the t-SNE plot generated from PA-SF are more aligned than that in Skew-Fit. Noticing that Skew-Fit also encodes the irrelevant environmental factors into the latent embedding.

Finally, we visualize how well the VAE trained with $L^{\text{PA}}$ satisfies the perfect alignment condition. In \Figref{fig:additional_recon}, we visualize the reconstruction (middle line) of the original input (bottom line) as well as the shuffled reconstruction (top line), i.e., reconstruction with  the same latent representation $z$ but a different environment index $e$ (\Figref{fig:vae_struct}). Clearly, in all tasks, the VAE successfully acquires the perfectly aligned latent space w.r.t the training environments as the shuffled reconstruction shares the almost same ground truth state $s$ with the reconstruction and the original input. Noticing that the original input images are sampled uniformly from the state space $\gS$.

\begin{figure}[h]
    \centering
    \subfigure[Reach]{ 
    \label{fig:tsne_reach}
        \includegraphics[width=3.85in]{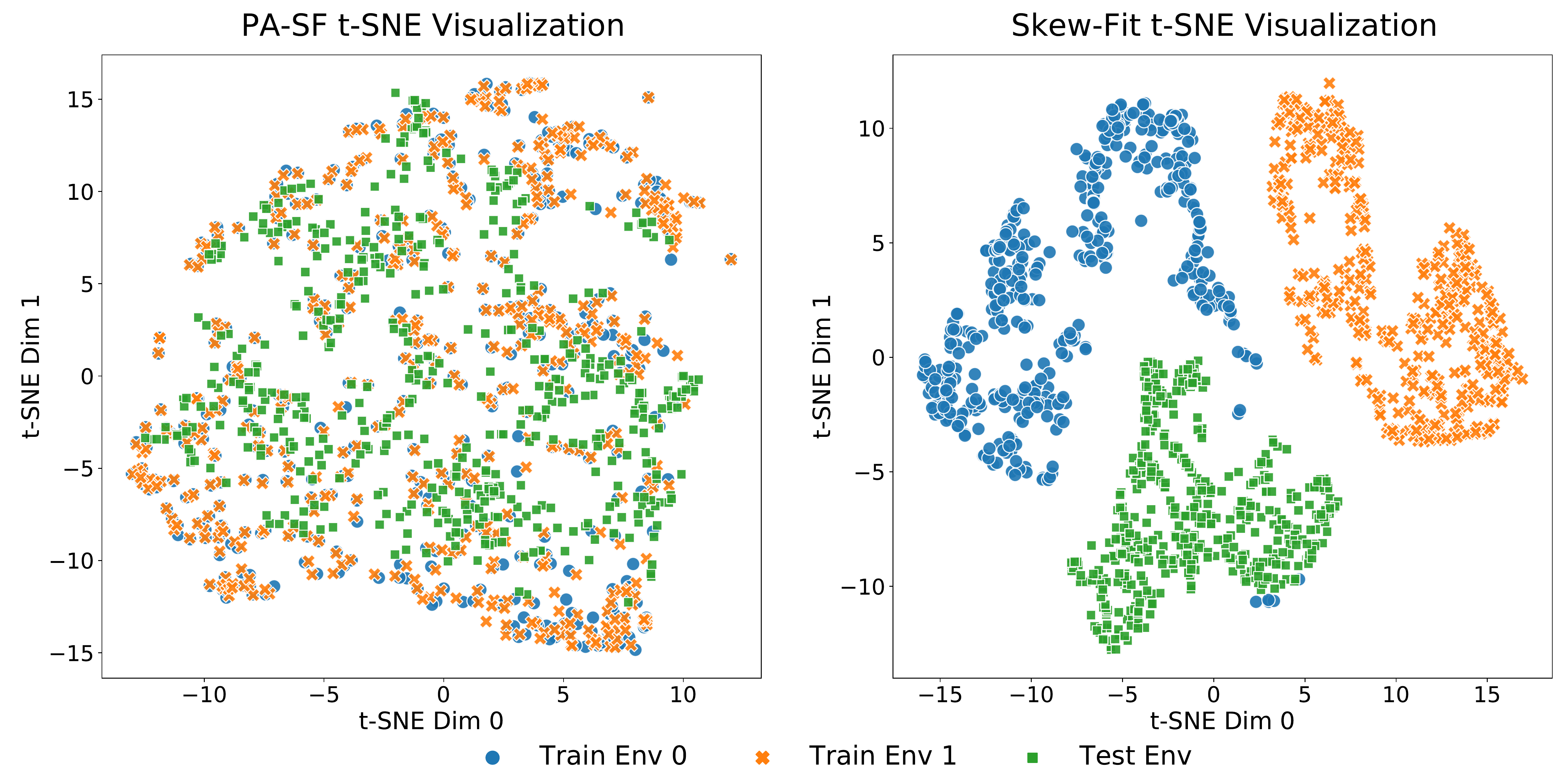}
    }
    \subfigure[Door]{ 
    \label{fig:tsne_door}
        \includegraphics[width=3.85in]{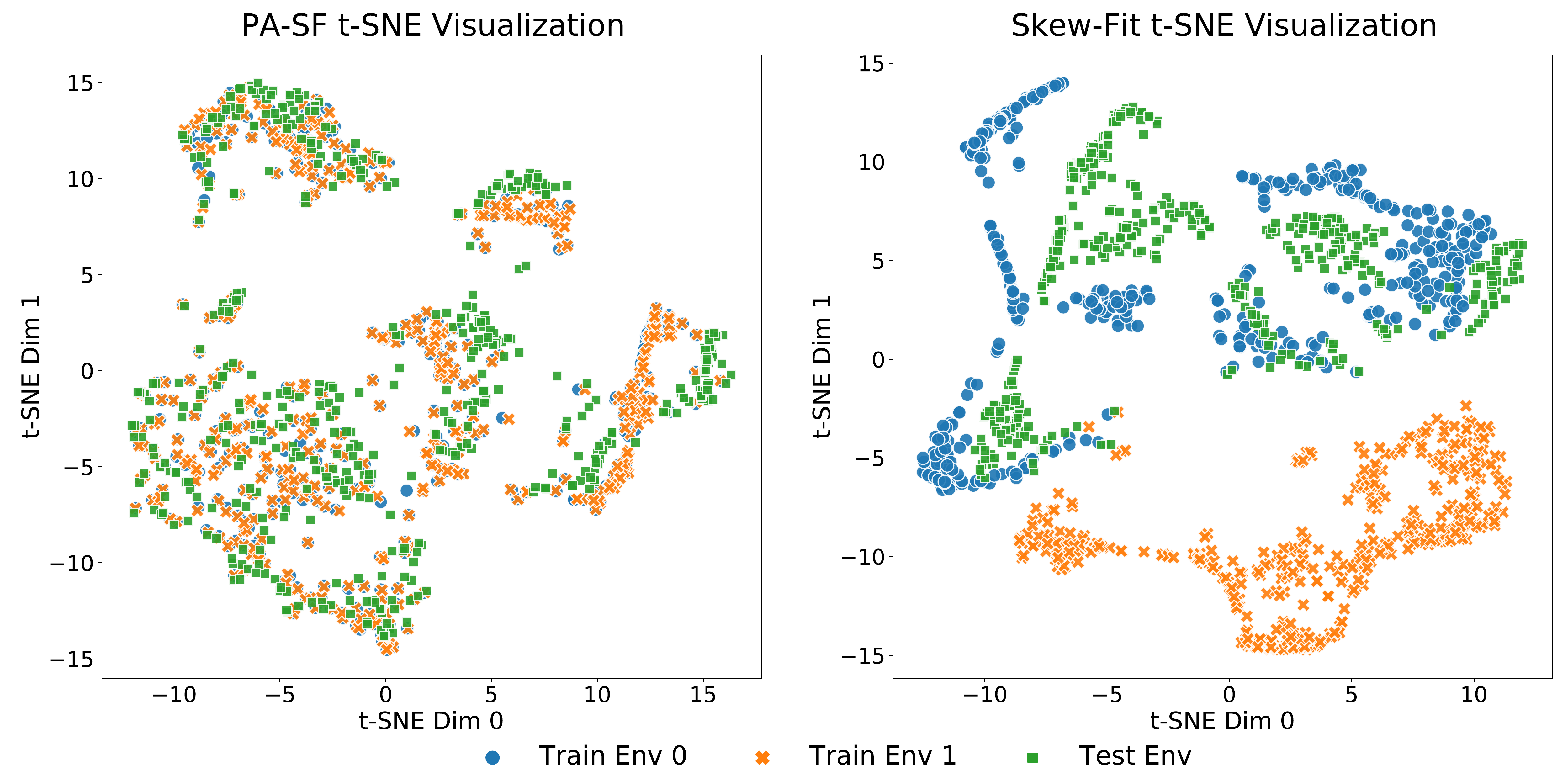}
    }
    \subfigure[Push]{ 
    \label{fig:tsne_push}
        \includegraphics[width=3.85in]{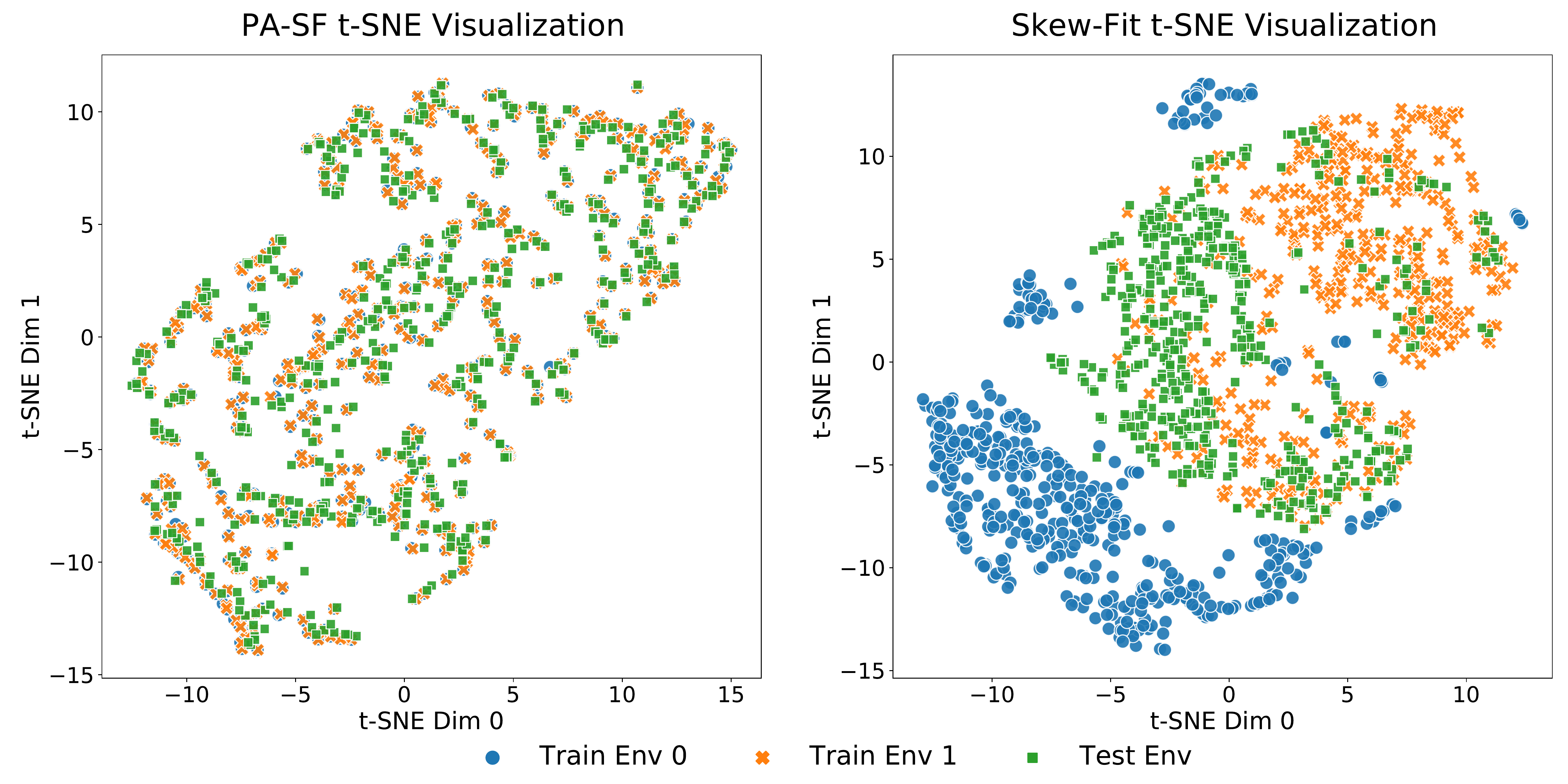}
    }
    \subfigure[Pickup]{
    \label{fig:tsne_pickup}
        \includegraphics[width=3.85in]{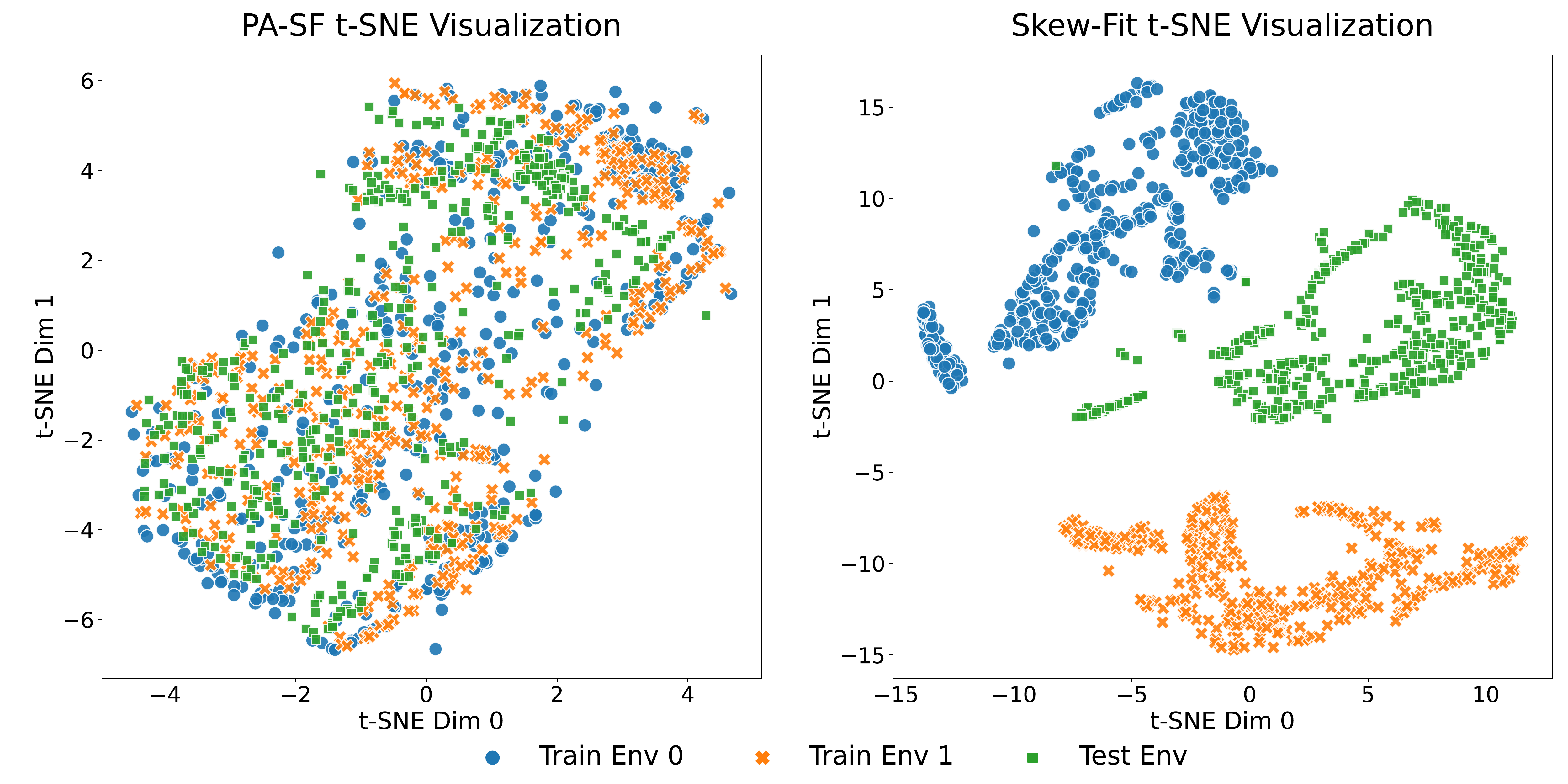}
    }
    \caption{t-SNE visualization of the latent space $\Phi(x^e)$ trained with PA-SF and Skew-Fit for three environments, i.e., 2 training and 1 testing on different tasks.}
    \label{fig:additional_tsne}
    \vskip -0.2in
\end{figure}

\begin{figure}[h]
    \centering
    \subfigure[Reach]{ 
    \label{fig:recon_reach}
        \includegraphics[width=5.5in]{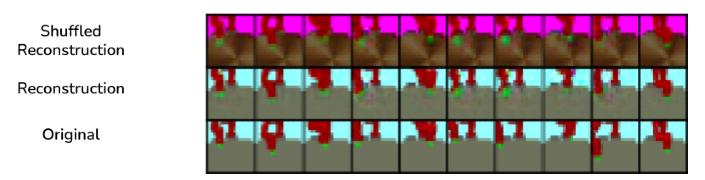}
    }
    \subfigure[Door]{ 
    \label{fig:recon_door}
        \includegraphics[width=5.5in]{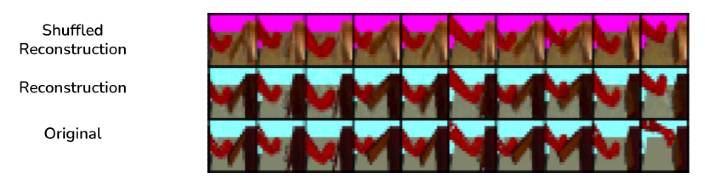}
    }
    \subfigure[Push]{ 
    \label{fig:recon_push}
        \includegraphics[width=5.5in]{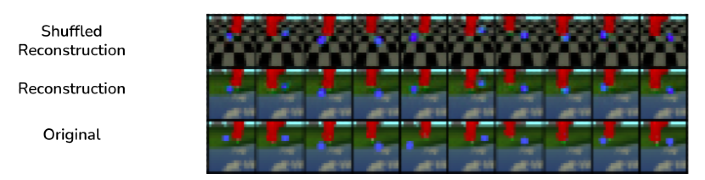}
    }
    \subfigure[Pickup]{
    \label{fig:recon_pickup}
        \includegraphics[width=5.5in]{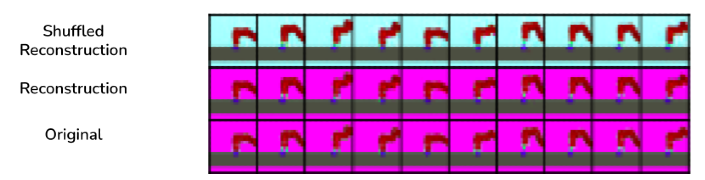}
    }
    \caption{Visualization of the VAE on all four tasks. For each figure, the bottom line shows the original input images (sampled uniformly from the state space). The middle line is the reconstruction of the input image. The top line show the shuffled reconstruction image, i.e., reconstruction with the same latent space $z$ but a shuffled environment index $e$.}
    \label{fig:additional_recon}
    \vskip -0.2in
\end{figure}

\clearpage
\section{Experiment Details} \label{apx:imp}

\subsection{Task Setups}

\begin{figure}[h]
  \centering
  \subfigure[Reach]{
    \label{fig:reach_setup}
    \includegraphics[width=5.32in]{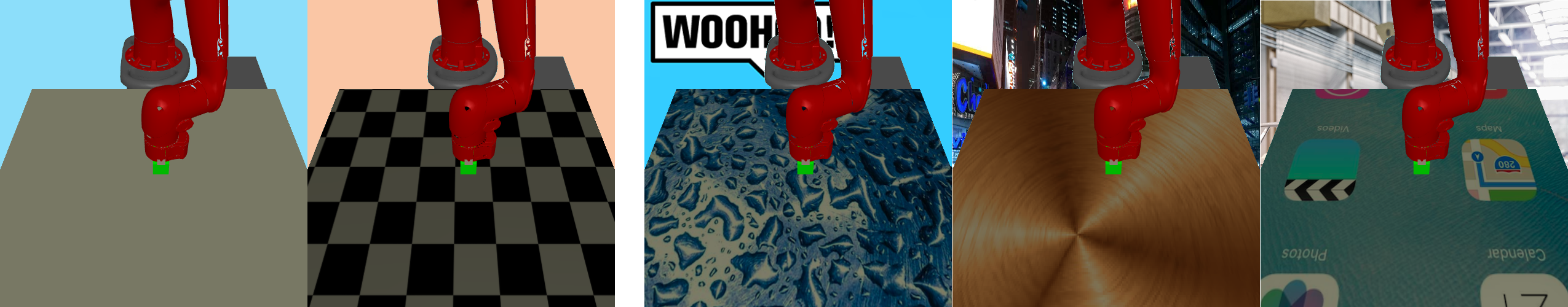}
  }
  \subfigure[Door]{
    \label{fig:door_setup}
    \includegraphics[width=5.32in]{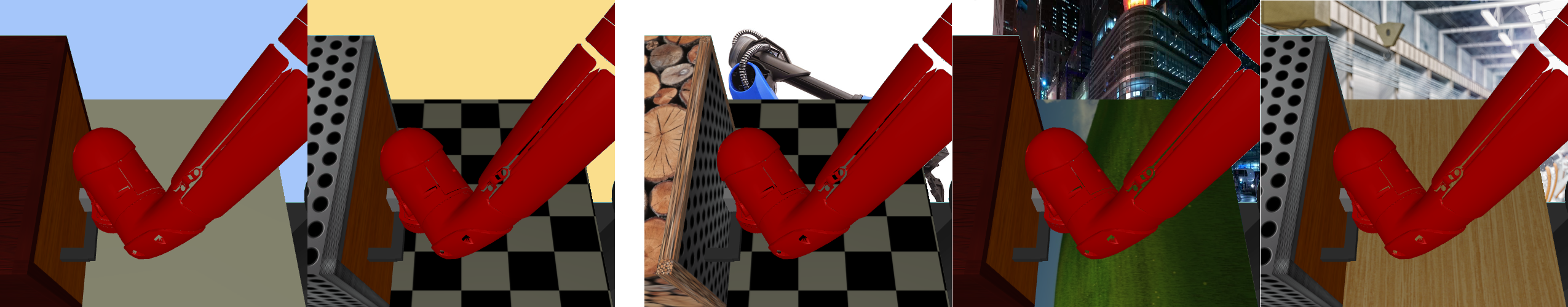}
  }
  \subfigure[Push]{
    \label{fig:push_setup}
    \includegraphics[width=5.32in]{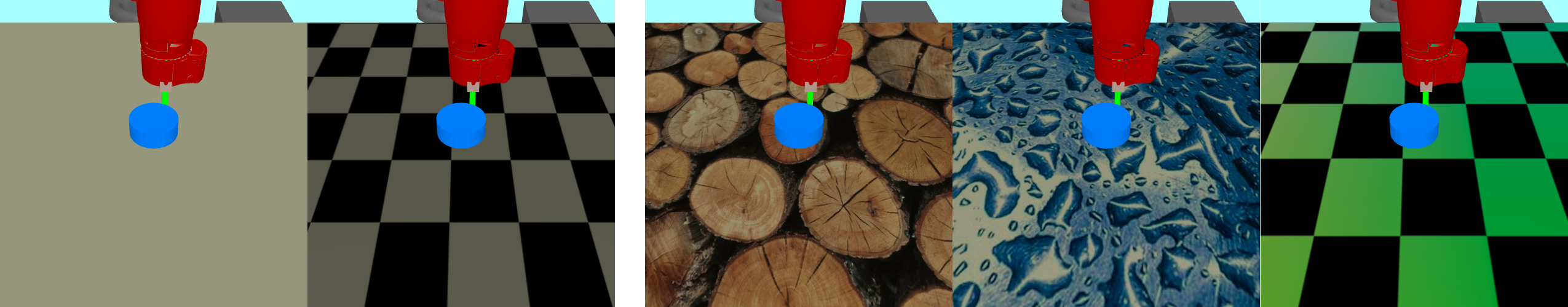}
  }
  \subfigure[Pickup]{
    \label{fig:pickup_setup}
    \includegraphics[width=5.32in]{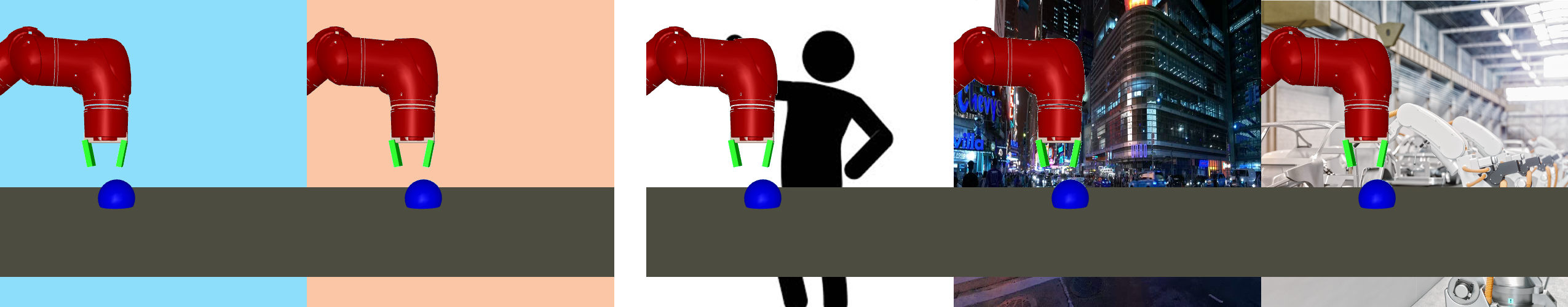}
  }
  \caption{Task setups for training (left) and test (right) environments}
  \label{fig:task_setups}
\end{figure}

Our base environments are first used in \cite{nair2018rig}. \Figref{fig:task_setups} illustrates some of the environments and we provide brief descriptions as follows.

\textit{Reach}: A 7-DoF sawyer arm task in which the goal is to reach a desired target position. We construct multiple training and test environments by altering the backgrounds with various images and dynamic videos and the foregrounds with diverse textures.

\textit{Door}: A 7-DoF sawyer arm task with a box on the table. The goal is to open the door to a target angle. We construct multiple environments in the same way as \textit{Reach} but with different ingredients. Additionally, we use the task with reset at the end of each episode.

\textit{Push}: A  7-DoF sawyer arm task and a small puck on the table. The goal is to push the puck to a target position. We only change table textures as the camera has almost no background as input. 

\textit{Pickup}: The task setting is the same as \textit{Push}. The goal is to pick up the object and place it in the desired position. We construct different environments with different backgrounds. 

\subsection{Implementation Details of PA-SF}

In our experiments, we use the same VAE architecture as Skew-Fit except for the environment index as an extra input for the decoder. Most of the hyper-parameters in VAE including the training schedule is the same as that in Skew-Fit except for the components we added in our algorithm. In $L^{\text{MMD}}$, we use the same random expansion function as in \cite{louizos2016fairvae} as it works well in practice. Namely,

\begin{align*}
    \psi( z ) = \sqrt{\frac{2}{D_{\psi}}} \cos \left( \sqrt{\frac{2}{\gamma_{\psi}}} Wz + b \right)
\end{align*}

where $z \in \R^d$ denotes latent embedding of observation, and $W \in \R^{D_{\psi} \times d}$ and $b \in \R^{D_{\psi}}$ are random weight and bias. $L_{\text{MMD}}$ and $L_{\text{DIFF}}$ are computed with respect to samples from the replay buffer and the aligned buffer. Table \ref{tab:pasf_gen_hyperparams} lists the hyper-parameters that are shared across four tasks. Table \ref{tab:pasf_spec_hyperparams} lists hyper-parameters specified to each task. Notice that we fine-tune the hyper-parameters on some validation environments and test on other environments.

\begin{table}[h]
  \caption{Shared hyper-parameters for PA-SF}
  \label{tab:pasf_gen_hyperparams}
  \centering
  {\setlength{\tabcolsep}{0.4em} 
  \begin{tabular}{ll}
    \toprule
    Hyper-parameter & Value \\
    \midrule
    Aligned Path Length & $50$ \\
    VAE Relay Buffer Batch Size & $32$ \\
    VAE Aligned Buffer Batch Size & $32$ \\
    Random Expansion Function Dimension $D_{\psi}$ & $1024$ \\
    Random Expansion Function Scalar $\gamma_{\psi}$ & $1.0$ \\
    Number of Training per Train Loop & $1500$ \\
    Number of Total Exploration Steps per Epoch & $900$ \\
    \bottomrule
  \end{tabular}
  }
\end{table}

\begin{table}[h]
  \caption{Task specific hyper-parameters for PA-SF}
  \label{tab:pasf_spec_hyperparams}
  \centering
  {\setlength{\tabcolsep}{0.4em} 
  \begin{tabular}{lllll}
    \toprule
    Hyper-parameter & Reach & Door & Push & Pickup \\
    \midrule
    MMD Coefficient $\alpha_{\text{MMD}}$ & 1000 & 1000 & 200 & 100 \\
    Difference Coefficient $\alpha_{\text{DIFF}}$ & 0.1 & 1.0 & 0.04 & 0.04 \\
    $\beta$ for $\beta$-VAE & 20 & 20 & 20 & 10 \\
    Skew Coefficient $\alpha$ & -0.1 & -0.5 & -1 & -1 \\
    Proportion of Aligned Sampling in Exploration & $\frac{1}{6}$ & $\frac{1}{3}$ & $\frac{1}{6}$ & $\frac{1}{6}$ \\
    \bottomrule
  \end{tabular}
  }
\end{table}

\subsection{Implementation of Baselines}
\label{subsec:impl_baselines}

\textbf{Skew-Fit} \cite{pong2020skew}: Skew-Fit is designed to learn a goal-conditioned policy by self-learning in Goal-conditioned MDPs. We extend it to Goal-conditioned Block MDPs by training the $\beta$-VAE with observations sampled from replay buffers of each training environments and constructing skewed distribution respectively. We modified some of the hyper-parameters of Skew-Fit in Table \ref{tab:skewfit_gen_hyperparams}, did a grid search over latent dimension size, $\beta$, VAE training schedule, and number of training per train loop (Table \ref{tab:skewfit_spec_hyperparams}). 

\begin{table}[h]
  \caption{Modified general hyper-parameters for Skew-Fit}
  \label{tab:skewfit_gen_hyperparams}
  \centering
  {\setlength{\tabcolsep}{0.4em} 
  \begin{tabular}{ll}
    \toprule
    Hyper-parameter & Value \\
    \midrule
    Exploration Noise & None \\
    RL Batch Size & 1200 \\
    VAE Batch Size & 96 \\
    Replay Buffer Size for each $e \in \mathcal{E}_{\text{train}}$ & 50000 \\
    \bottomrule
  \end{tabular}
  }
\end{table}

\begin{table}[h]
  \caption{Task specific hyper-parameters for Skew-Fit}
  \label{tab:skewfit_spec_hyperparams}
  \centering
  {\setlength{\tabcolsep}{0.4em} 
  \begin{tabular}{lllll}
    \toprule
    Hyper-parameter & Reach & Door & Push & Pickup \\
    \midrule
    Path Length & $50$ & $100$ & $50$ & $50$ \\
    $\beta$ for $\beta$-VAE & $20$ & $20$ & $40$ & $15$ \\
    Latent Dimension Size & $8$ & $25$ & $15$ & $20$ \\
    $\alpha$ for Skew-Fit & $0.1$ & $0.5$ & $1.0$ & $1.0$ \\
    VAE Training Schedule & $2$ & $1$ & $2$ & $1$ \\
    Sample Goals From & $q_{\phi}^{G}$ & $p_{\text{skewed}}$ & $p_{\text{skewed}}$ & $p_{\text{skewed}}$ \\
    Number of Training per Train Loop & $1200$ & $2000$ & $2000$ & $2000$ \\
    \bottomrule
  \end{tabular}
  }
\end{table}

\textbf{Skew-Fit + RAD} \cite{laskin2020rad}: Previous work \cite{kostrikov2020image} and \cite{laskin2020rad} have found that data augmentation is a simple yet powerful technique to enhance performance for visual input agents. We compare to the most well known method, i.e., Reinforcement Learning with Augmented Data (RAD) and re-implement \footnote{\url{https://github.com/MishaLaskin/rad}} it upon Skew-Fit for Goal-conditioned Block MDPs. Note that RAD is originally designed to augment states for agents directly. In our setting, the augmentation is added to $\beta$-VAE training phase, which increases the robustness of latent space against irrelevant noise. Specifically, we augmented observations for training $\beta$-VAE as well as constructing skewed data distributions. At the beginning of each episode, we sample goals from the skewed distribution and encode augmented goal as latent goal. The training of SAC algorithm depends on the latent code of augmented current and next state. We also incorporate data augmentation with the skewed distribution and hindsight relabeling steps. To investigate the performance of different augmentation methods, we chose to experiment with \textit{crop}, \textit{cutout-color}, \textit{color-jitter}, and \textit{grayscale} and found that \textit{crop} worked best among the four augmentations as reported in the RAD paper. Other augmentation methods such as \textit{cutout-color} and \textit{color-jitter} either replace a patch of images with single color, which may include the end-effector of Sawyer arm, or alter the color of the whole image, which may include target object (i.e., puck in \textit{Push}) and thus hurt performance. We use the same hyper-parameters as in Skew-Fit.

\textbf{MISA} \cite{zhang2020inblock} and \textbf{DBC} \cite{zhang2020inrepr}: Bisimulation metrics have been used to learn minimal yet sufficient representations in Block MDPs. We compare with two SOTA methods: Model-irrelevance State Abstraction (MISA) and Deep Bisimulation for Control (DBC), and modify the code for Goal-conditioned Block MDPs. In particular, we add goals' inputs into the reward predictor and use oracle ground truth distance between the current state and goal state (i.e., end-effector's position and object's position) as rewards. Our code is built upon the publicly available codes  \footnote{\url{https://github.com/facebookresearch/icp-block-mdp}} \footnote{\url{https://github.com/facebookresearch/deep_bisim4control}}.For fair comparison, we also fine-tuned some hyper-parameters on each task respectively. For MISA, we did a grid search over the encoder and decoder learning rates $\in \{ 10^{-3}, 10^{-5} \}$ and reward predictor coefficient $\in \{ 0.5, 1.0, 2.0 \}$. For DBC, we did a grid search over the encoder and decoder learning rates $\in \{ 10^{-3}, 10^{-4} \}$ and bisimulation coefficients $\in \{0.25, 0.5, 1\}$. Please refer to Table \ref{tab:misa_hyperparams} and Table \ref{tab:dbc_hyperparams} for our final choices.

\begin{table}[h]
  \caption{Task specific hyper-parameters for MISA}
  \label{tab:misa_hyperparams}
  \centering
  {\setlength{\tabcolsep}{0.4em} 
  \begin{tabular}{lllll}
    \toprule
    Hyper-parameter & Reach & Door & Push & Pickup \\
    \midrule
    Encoder and Decoder Learning Rate & $10^{-5}$ & $10^{-3}$ & $10^{-3}$ & $10^{-5}$ \\
    Reward Predictor Coefficient & $0.5$ & $1.0$ & $2.0$ & $0.5$ \\
    \bottomrule
  \end{tabular}
  }
\end{table}

\begin{table}[H]
  \caption{Task specific hyper-parameters for DBC}
  \label{tab:dbc_hyperparams}
  \centering
  {\setlength{\tabcolsep}{0.4em} 
  \begin{tabular}{lllll}
    \toprule
    Hyper-parameter & Reach & Door & Push & Pickup \\
    \midrule
    Encoder and Decoder Learning Rate & $10^{-4}$ & $10^{-4}$ & $10^{-4}$ & $10^{-3}$ \\
    Bisimulation Coefficient & $0.5$ & $1.0$ & $0.25$ & $1.0$ \\
    \bottomrule
  \end{tabular}
  }
\end{table}


\end{document}